\documentclass{article} 
\usepackage{iclr2026_conference,times}

\usepackage{amsmath,amsfonts,bm}

\def\eqref#1{equation~\ref{#1}}

\def\1{\bm{1}}

\DeclareMathAlphabet{\mathsfit}{\encodingdefault}{\sfdefault}{m}{sl}
\SetMathAlphabet{\mathsfit}{bold}{\encodingdefault}{\sfdefault}{bx}{n}

\usepackage{amsmath}
\usepackage{amssymb}
\usepackage{mathtools}
\usepackage{amsthm}
\usepackage{bm}
\usepackage{multirow}
\usepackage{graphicx} 
\usepackage{caption}  
\usepackage{epstopdf} 
\usepackage{subcaption}
\usepackage{adjustbox}
\usepackage{wrapfig}
\usepackage{hyperref}
\usepackage{enumitem}
\usepackage{appendix}
\newtheorem{theorem}{Theorem}[section]
\newtheorem{lemma}{Lemma}[section]
\newtheorem{proposition}{Proposition}[section]
\newtheorem{remark}{Remark}[section]
\newtheorem{corollary}{Corollary}[section]
\newtheorem{definition}{Definition}[section]

\usepackage{hyperref}
\usepackage{url}
\usepackage{cleveref}
\usepackage{booktabs} 
\usepackage{graphicx}
\usepackage{wrapfig}
\usepackage{subcaption}
\usepackage{caption}
\title{Breaking the Correlation Plateau: On the Optimization and Capacity Limits of Attention-Based Regressors}

\author{Jingquan Yan$^{*}$, Yuwei Miao$^{*}$, Peiran Yu, Junzhou Huang \\
Computer Science and Engineering Department \\
University of Texas at Arlington \\
\texttt{\{jingquan.yan,yuwei.miao,peiran.yu,jzhuang\}@uta.edu} \\
$^{*}$Equal contribution.
}

\iclrfinalcopy 
\begin{document}

\maketitle

\begin{abstract}

Attention-based regression models are often trained by jointly optimizing Mean Squared Error (MSE) loss and Pearson correlation coefficient (PCC) loss, emphasizing the magnitude of errors and the order or shape of targets, respectively. A common but poorly understood phenomenon during training is the \emph{PCC plateau}: PCC stops improving early in training, even as MSE continues to decrease. We provide the first rigorous theoretical analysis of this behavior, revealing fundamental limitations in both optimization dynamics and model capacity. First, in regard to the flattened PCC curve, we uncover a critical conflict where lowering MSE (magnitude matching) can \emph{paradoxically} suppress the PCC gradient (shape matching). This issue is exacerbated by the softmax attention mechanism, particularly when the data to be aggregated is highly homogeneous. Second, we identify a limitation in the model capacity: we derived a PCC improvement limit for \emph{any} convex aggregator (including the softmax attention), showing that the convex hull of the inputs strictly bounds the achievable PCC gain. We demonstrate that data homogeneity intensifies both limitations. Motivated by these insights, we propose the Extrapolative Correlation Attention (ECA), which incorporates novel, theoretically-motivated mechanisms to improve the PCC optimization and extrapolate beyond the convex hull. Across diverse benchmarks, including challenging homogeneous data setting, ECA consistently breaks the PCC plateau, achieving significant improvements in correlation without compromising MSE performance.

\end{abstract}

\section{Introduction}
The attention mechanism is a powerful tool for regression tasks where each input sample comprises multiple elements, such as tokens or image patches, and the elements' embeddings are aggregated using attention mechanism to predict a continuous target~\citep{lee2019set,martins2020sparse,born2023regression}. This paradigm is widely used in areas like digital pathology, time-series prediction and emotional analysis~\citep{jiang2023mhattnsurv,ni2023basisformer,zhang2023learning}. For example, in video-based sentiment analysis, the attention mechanism is used to aggregate frame embeddings within a video clip to predict emotion variables, such as sentiment intensity, arousal, and valence~\citep{truong2019vistanet,xie2024infoenh}.

Two characteristics are common in such applications. First, the in-sample data frequently exhibits \textbf{higher homogeneity} than cross-sample data. For example, in pathology images, nearby regions tend to share more similar tissue or cell types than distant regions. Likewise, in video-based sentiment analysis, frames from the same clip are more similar than frames from different clips. Second, the regression performance is not simply evaluated by \textbf{magnitude} (e.g., Mean Squared Error (MSE)) but also by \textbf{shape}—the relative ordering of predictions, measured by the Pearson Correlation Coefficient (PCC). In many settings, capturing the correct correlations matters more than predicting the exact values~\citep{pandit2019many}. For instance, in spatial transcriptomics, capturing the relative trend of gene expression (\emph{e.g.}, co-expression) is more informative than predicting exact magnitudes~\citep{xiao2024transformer}. To emphasize shape while retaining magnitude performance, models are commonly trained with a joint loss function, such as $\mathcal{L_{\text{total}}} = \mathcal{L}_{\text{MSE}} + \lambda_{\text{PCC}} (1-\rho)$ where $\rho$ is the PCC and $\lambda_{\text{PCC}}$ is a hyperparameter weight~\citep{liu2022bit, zhang2023learning,zhu2025magnet}.

\begin{figure}[t]
  \centering
  \caption{(a) Illustration of a video-based sentiment analysis example. A sample is considered homogeneous when its within-sample dispersion $\tilde{\sigma}$ is below $\sigma_0$. (b) A convex attention yields an aggregated embedding inside the convex hull of the sample's embeddings. (C) Our ECA extrapolates beyond the hull to amplify within-sample contrasts.}
  \vspace{-5pt}
  \includegraphics[width=\linewidth]{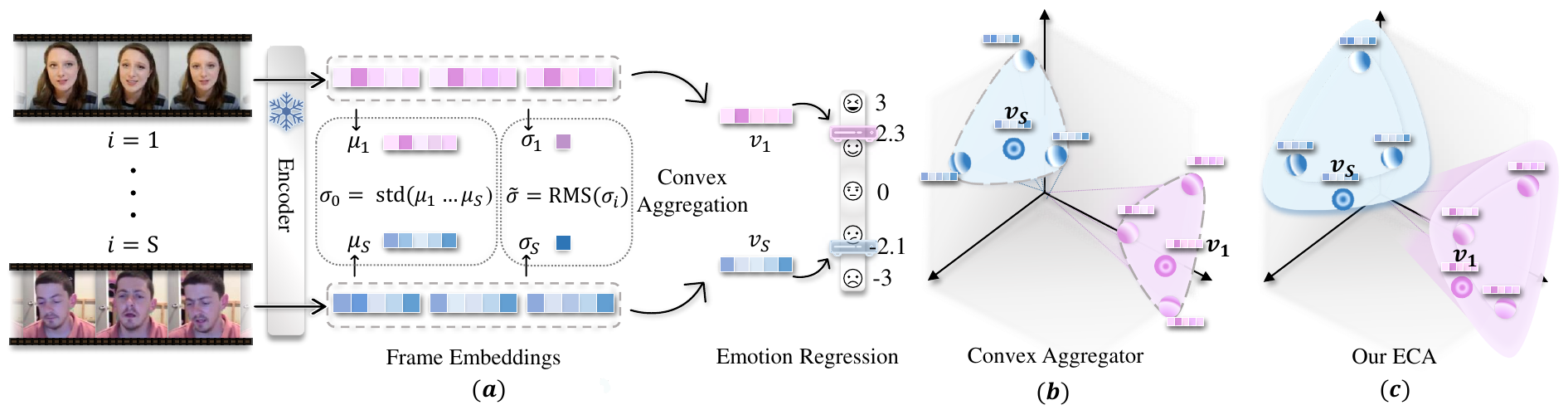}
  \label{fig:mosi}
  \vspace{-30pt}
\end{figure}

However, such joint training strategy applied to attention-based regression model frequently exhibits a puzzling \emph{PCC plateau}, even with large PCC weight $\lambda_{\text{PCC}}$. As shown in~\Cref{fig:pcc_plateau_curve}, the PCC curve's slope flattens early in training, failing to improve further even while the MSE continues to decrease. This empirical phenomenon is particularly severe with high in-sample homogeneity data. The underlying mechanisms driving this plateau remain unclear and it raises a critical question: why does the regression model fail to optimize the correlation effectively?

In this work, we provide the first theoretical investigation into this question, analyzing the limitations of standard attention from two distinct perspectives: \textbf{optimization dynamics} and \textbf{model capacity}.

\begin{wrapfigure}[14]{r}{0.48\textwidth}
  \centering
  \includegraphics[width=\linewidth]{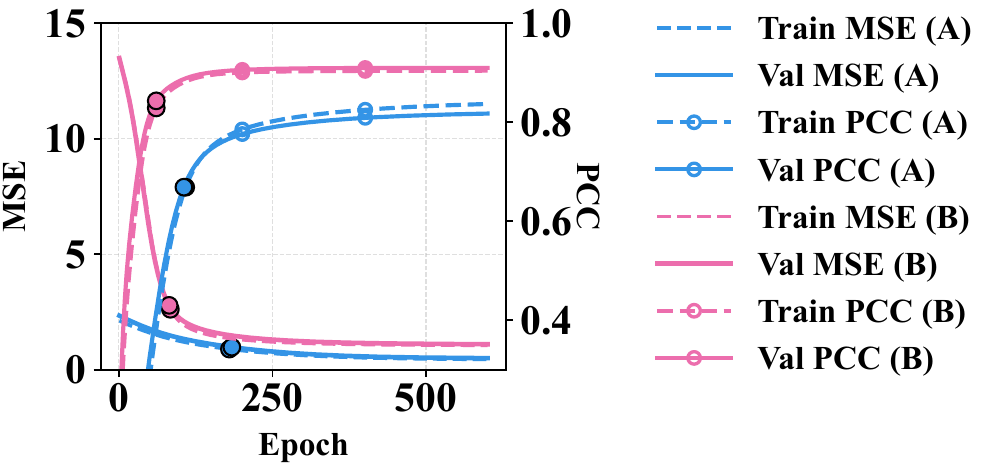}
  \caption{PCC plateau example: PCC flattens at early epoch while MSE continues to decrease. The plateau becomes more noticeable as within-sample homogeneity increases, specifically, dataset A is more homogeneous than B.}
  \label{fig:pcc_plateau_curve}
\end{wrapfigure}

\textbf{Limitation 1: Conflict in Optimization Dynamics.}
Our analysis reveals a direct relation between the decrease of MSE and the flattening of the PCC curve in the attention-based regression model. Using a decomposition of MSE (\Cref{thm:mse_decomposition}), we show the following. Although training continues to reduce MSE by matching the predictions’ mean and standard deviation, the gradient of the correlation term is paradoxically attenuated, flattening the PCC curve. This effect is amplified by \emph{in-sample homogeneity}: when embeddings within a sample are similar, the within-sample dispersion term in the PCC gradient shrinks and further suppresses the magnitude of the gradient.

\textbf{Limitation 2: Model Capacity of Convex Aggregation.}
Notice that softmax attention is a convex combination of element embeddings; we compare convex attention aggregation to mean-pooling aggregation and derive an upper bound on the achievable PCC gain (\Cref{thm:pcc-gain-maintext}) depending on the radius of convex hull formed by the in-sample embeddings. Greater in-sample homogeneity will shrink the convex hull and tightens this bound, yielding a fundamental capacity limit for PCC gain.

\textbf{Extrapolative Correlation Attention.}
Motivated by these theoretical insights, we propose Extrapolative Correlation Attention (ECA), a novel framework designed to overcome these limitations. To mitigate the optimization conflict in Limitation 1, ECA introduces a Dispersion-Normalized PCC loss to counteracts the attenuation effect, restoring the magnitude of correlation gradient. We also employ Dispersion-Aware Temperature Softmax to prevent the gradient collapse and near-uniform attention distribution under in-sample homogeneity. To address Limitation 2, ECA introduces Scaled Residual Aggregation, which allows the model to extrapolate beyond the convex hull, mitigating the PCC gain limitation induced by convex aggregation mechanism.

\textbf{Contributions.}
We theoretically investigate the PCC plateau when training attention-based regression model with a joint MSE+PCC loss. We identify two limitations in optimization dynamics and in model capacity.
\begin{itemize}
    \item {\bf Optimization Dynamics.} We identify a conflict in optimization dynamics: as MSE optimization increases $\sigma_{\hat y}$ to match $\sigma_{y}$, the magnitude of the PCC loss gradient scales with $1/\sigma_{\hat{y}}$ leading to a flattened PCC curve during training. In addition, We show that within-sample homogeneity further shrinks the PCC gradient through the Jacobian of softmax attention. Both parts contribute to the PCC plateau phenomenon. 
    \item {\bf Model Capacity.}  We prove that for any convex aggregator (\emph{e.g.}, softmax attention), the achievable PCC improvement over mean-pooling can be bounded by the within-sample data homogeneity.
    
\end{itemize}

To address these limitations, we introduce ECA, a novel attention-based regression framework with three components:
  \begin{itemize}
      \item Motivated by the optimization limitation, we re-scale the PCC objective to counteract the $1/\sigma_{\hat y}$ factor, restoring correlation gradients magnitude while minimizing MSE.
      \item Motivated by the convex aggregator limitation, we incorporate a Scaled Residual Aggregation that enables controlled extrapolation beyond the convex hull.
      \item To handle within-sample homogeneity, we propose a Dispersion-Aware Temperature Softmax to prevent attention collapse and amplify the informative contrasts within homogeneous samples.
  \end{itemize}

Across diverse regression benchmarks, ECA consistently overcomes the PCC plateau and achieves higher correlation gains while maintaining competitive MSE. Ablation studies confirm the contribution of each component.
\section{Theoretical Analysis of Correlation Learning in Attention-based Aggregation}

We theoretically analyze why the PCC plateau occurs when training attention-based regression with a joint MSE+PCC loss. First, we relate MSE and PCC through a decomposition that motivates joint training. Then we prove that softmax attention, as a convex aggregator, limits both the optimization dynamics and the model’s expressive capacity. These limits hinder correlation learning and explain the observed PCC plateau.

\subsection{Problem Setup} \label{sec:setup}

\textbf{Data Notation.}
We consider a regression dataset with a batch of $S$ samples $(x_s,y_s)$. Each sample $x_s$ is a set of $n_s$ element with embeddings $\mathbf{h}_s=\{\mathbf{h}_{si}\}_{i=1}^{n_s}$, where $\mathbf{h}_{si}\in\mathbb{R}^d$. Let $y_s,\hat y_s\in\mathbb{R}$ be the ground-truth target and model prediction for $s_{\text{th}}$ sample. The batch-level empirical means are $\mu_y,\mu_{\hat y}$ and standard deviations are $\sigma_y,\sigma_{\hat y}$. Define centered targets and predictions as $a_s:=y_s-\mu_y$ and $b_s:=\hat y_s-\mu_{\hat y}$.

\textbf{Attention-based Aggregation.}
The attention-based model processes each input sample to produce a scalar prediction. An attention scoring function $f_\text{attn}(\cdot)$ (\emph{e.g.}, KQV dot-product similarity or gating mechanism) scores each embedding in $\mathbf h_{s}$ and produces attention logits $\mathbf z_s=f_\text{attn}(\{\mathbf h_{si}\}_{i=1}^{n_s})\in\mathbb R^{n_s}$ with entries $z_{si}=[\mathbf z_s]_i \in \mathbb R$. Softmax converts these logits to positive attention weights $\bm\alpha_s=\operatorname{Softmax}(\mathbf z_s)$ on the probability simplex with entries $\alpha_{si}=[\bm\alpha_s]_i \in \mathbb R$ and $\sum_{i=1}^{n_s} \alpha_{si} = 1$. The sample-level embedding is the convex combination $\mathbf v_s=\sum_{i=1}^{n_s}\alpha_{si}\,\mathbf h_{si}\in \mathbb{R}^d$. Finally, a linear regression head with weights $\mathbf{w} \in \mathbb{R}^d$ and bias $c \in \mathbb{R}$ produces the scalar prediction: $\hat{y}_s = \mathbf{w}^\top \mathbf{v}_s + c$. Note that this formulation is backbone-agnostic: $\{\mathbf{h}_{si}\}$ represents the features at the final layer of any deep architecture (\emph{e.g.}, a multi-layer Transformer). Since these models typically derive the final prediction with a convex attention pooling (\emph{e.g.}, [CLS] token aggregation) followed by a linear projection, our analysis of the attention-based aggregation and its interaction with joint MSE and PCC optimization applies regardless of the depth or complexity of the preceding backbone. We provide a detailed discussion on the applicability to deep architectures in \Cref{app:depth_generality}.

\textbf{Learning Objective.} We measure Pearson correlation between targets and predictions over the batch by
$\rho:=\frac{\operatorname{Cov}(\mathbf{y}, \hat{\mathbf{y}})}{\sigma_{\mathbf{y}}\sigma_{\hat{\mathbf{y}}}}$.
To optimize both magnitude and correlation of the prediction, we use the popular joint loss
$\mathcal{L}_{\text{total}} :=\operatorname{MSE}(\mathbf{y},\hat{\mathbf{y}})+\lambda_{\text{PCC}}(1-\rho)$
where $\lambda_{\text{PCC}}\ge 0$ balances the two terms.

\textbf{Homogeneity Measures.} For each sample, let the in-sample mean be
$\boldsymbol{\mu}_s=\tfrac{1}{n_s}\sum_{j=1}^{n_s}\mathbf h_{sj}$.
We quantify within-sample homogeneity by (i) the in-sample dispersion
$\sigma_s=\sqrt{\tfrac{1}{n_s}\sum_{j=1}^{n_s}\lVert\mathbf h_{sj}-\boldsymbol{\mu}_s\rVert_2^2}$,
and (ii) the convex-hull radius
$R_s:=\max_i\lVert\mathbf h_{si}-\boldsymbol{\mu}_s\rVert_2$.
These quantities are related by inequality $\sigma_s\le R_s\le \sqrt{n_s}\,\sigma_s$ with proof in \Cref{app:lemma_translate}.
Cross-sample homogeneity is summarized by the root-mean-square (RMS) values
$\tilde\sigma:=\big(\tfrac{1}{S}\sum_{s=1}^S\sigma_s^2\big)^{1/2}$ and
$\tilde R:=\big(\tfrac{1}{S}\sum_{s=1}^S R_s^2\big)^{1/2}$.

\subsection{Preliminaries: The Interplay Between MSE and PCC} \label{sec:preliminaries}

\begin{proposition}[MSE Mean–std–correlation Decomposition]
\label{thm:mse_decomposition}
The MSE between $y$ and $\hat y$ can be decomposed as:
\begin{align} \label{eq:mse-decomposition-main}
    \operatorname{MSE}(\mathbf{y},\hat{\mathbf{y}})
= \underbrace{(\mu_{\hat{\mathbf{y}}}-\mu_\mathbf{y})^2}_{\text{mean matching}}
+ \underbrace{(\sigma_{\hat{\mathbf{y}}}-\sigma_\mathbf{y})^2}_{\text{std matching}}
+ \underbrace{2\,\sigma_\mathbf{y}\,\sigma_{\hat{\mathbf{y}}}\,\bigl(1-\rho\bigr)}_{\text{weighted correlation}}.
\end{align}
\end{proposition}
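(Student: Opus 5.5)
The plan is a direct bias--variance style expansion using the centered quantities $a_s=y_s-\mu_{\mathbf y}$ and $b_s=\hat y_s-\mu_{\hat{\mathbf y}}$ from the setup. I will use population (i.e.\ $1/S$) normalization throughout, so that $\sigma_{\mathbf y}^2=\frac1S\sum_s a_s^2$, $\sigma_{\hat{\mathbf y}}^2=\frac1S\sum_s b_s^2$, and $\operatorname{Cov}(\mathbf y,\hat{\mathbf y})=\frac1S\sum_s a_sb_s=\rho\,\sigma_{\mathbf y}\sigma_{\hat{\mathbf y}}$. This convention must match the one used for MSE, and it is the only real point of care. With $1/(S-1)$ in the moments, the mean term would pick up a stray factor, so I will fix $1/S$ explicitly at the start.

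\textbf{Step 1: split off the means.} Write
\[
\hat y_s-y_s=(\mu_{\hat{\mathbf y}}-\mu_{\mathbf y})+(b_s-a_s).
\]
Squaring and averaging over $s$, the cross term vanishes because $\frac1S\sum_s(b_s-a_s)=0$. This gives
\[
\operatorname{MSE}=(\mu_{\hat{\mathbf y}}-\mu_{\mathbf y})^2+\frac1S\sum_s(b_s-a_s)^2 .
\]

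\textbf{Step 2: expand the centered part.} The centered term becomes
\[
\frac1S\sum_s(b_s-a_s)^2=\sigma_{\hat{\mathbf y}}^2+\sigma_{\mathbf y}^2-2\rho\,\sigma_{\mathbf y}\sigma_{\hat{\mathbf y}} .
\]

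\textbf{Step 3: complete the square.} Rewrite
\[
\sigma_{\hat{\mathbf y}}^2+\sigma_{\mathbf y}^2=(\sigma_{\hat{\mathbf y}}-\sigma_{\mathbf y})^2+2\sigma_{\mathbf y}\sigma_{\hat{\mathbf y}} .
\]
Combining this with the $-2\rho\,\sigma_{\mathbf y}\sigma_{\hat{\mathbf y}}$ term yields the weighted correlation term $2\sigma_{\mathbf y}\sigma_{\hat{\mathbf y}}(1-\rho)$, which gives \eqref{eq:mse-decomposition-main}.

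\textbf{Degenerate case.} If $\sigma_{\hat{\mathbf y}}=0$ or $\sigma_{\mathbf y}=0$, then $\rho$ is undefined. In that case I will note that the identity still holds with the product $\sigma_{\mathbf y}\sigma_{\hat{\mathbf y}}\rho$ read as the covariance, which equals zero. There is no substantive obstacle beyond this convention check.
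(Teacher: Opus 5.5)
Your proposal is correct and follows essentially the same route as the paper: write the error as the mean offset plus the difference of centered terms, drop the cross term because $\sum_s a_s=\sum_s b_s=0$, expand with $1/S$-normalized variances and covariance, and complete the square to obtain $(\sigma_{\hat{\mathbf y}}-\sigma_{\mathbf y})^2+2\sigma_{\mathbf y}\sigma_{\hat{\mathbf y}}(1-\rho)$. Your explicit fixing of the $1/S$ convention and your handling of the degenerate case (reading $\rho\,\sigma_{\mathbf y}\sigma_{\hat{\mathbf y}}$ as the covariance, which is then zero) are small but worthwhile points that the paper leaves implicit.
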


\begin{lemma}[Scaling-invariance of PCC]
\label{lemma:pcc_affine_main}
Let $m\ge 0$ and $n\in\mathbb{R}$.
$\operatorname{PCC}(y,m\hat y+n) =  \operatorname{PCC}(y,\hat y)$.
\end{lemma}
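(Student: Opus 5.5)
The plan is to compute the covariance and standard deviation of the transformed predictions $m\hat y+n$ directly from the batch-level empirical definitions of \Cref{sec:setup}, and then substitute into the definition of $\rho$.

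The first step handles the centered predictions. Write $\hat y'_s := m\hat y_s + n$. By linearity of the empirical mean, $\mu_{\hat y'} = m\mu_{\hat y} + n$. The centered predictions are therefore $b'_s = \hat y'_s - \mu_{\hat y'} = m b_s$, and the shift $n$ disappears completely.

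The second step uses this to get the covariance and the standard deviation. For the covariance,
\begin{align*}
\operatorname{Cov}(\mathbf y,\hat{\mathbf y}') = \tfrac{1}{S}\sum_s a_s (m b_s) = m\operatorname{Cov}(\mathbf y,\hat{\mathbf y}).
\end{align*}
For the standard deviation,
\begin{align*}
\sigma_{\hat y'} = \bigl(\tfrac{1}{S}\sum_s m^2 b_s^2\bigr)^{1/2} = |m|\,\sigma_{\hat y} = m\,\sigma_{\hat y},
\end{align*}
where the last equality uses the hypothesis $m\ge 0$.

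The third step substitutes into the definition of $\rho$:
\begin{align*}
\operatorname{PCC}(y,\hat y') = \frac{m\operatorname{Cov}(\mathbf y,\hat{\mathbf y})}{\sigma_y\, m\,\sigma_{\hat y}} = \operatorname{PCC}(y,\hat y).
\end{align*}
The factor $m$ cancels.

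The only real subtlety is the degenerate case, and it needs a comment rather than a calculation. If $m=0$, or if $\sigma_{\hat y}=0$ or $\sigma_y=0$, then both sides are $0/0$. In that case the identity should be read as holding whenever the PCC is defined, i.e.\ for $m>0$ with nonconstant $\mathbf y$ and $\hat{\mathbf y}$. I would state this explicitly. I would also note that for $m<0$ the same computation gives $\operatorname{PCC}(y,m\hat y+n)=-\operatorname{PCC}(y,\hat y)$, which shows why the sign restriction is needed.
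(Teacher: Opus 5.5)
Your proof is correct and follows essentially the same route as the paper's: compute $\operatorname{Cov}(y,m\hat y+n)=m\operatorname{Cov}(y,\hat y)$ and $\sigma_{m\hat y+n}=|m|\,\sigma_{\hat y}$, then cancel the factor $m$. Your caveat about $m=0$ is well taken. The paper's appendix version of the lemma in fact assumes $m\neq 0$ and states the general identity $\operatorname{PCC}(y,m\hat y+n)=\operatorname{sign}(m)\operatorname{PCC}(y,\hat y)$, which is exactly your remark about $m<0$.
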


We defer the proof for~\Cref{thm:mse_decomposition} and~\Cref{lemma:pcc_affine_main} to~\Cref{sec:preliminaries-app}.
\begin{remark}
    \Cref{thm:mse_decomposition,lemma:pcc_affine_main} are model-independent and hold for any regressor.
\end{remark}
\begin{remark}[The PCC Plateau]
\label{remark:plateau}

By \Cref{thm:mse_decomposition}, minimizing MSE jointly targets (i) mean matching, (ii) standard deviation matching, and (iii) weighted correlation. Because MSE is sensitive to affine transformations while PCC is invariant (\Cref{lemma:pcc_affine_main}), optimization can reduce MSE by mainly adjusting mean and scale, with little improvement on correlation. This explains why MSE may keep decreasing while PCC plateaus, and motivates adding the explicit correlation term $\lambda_{\text{PCC}}(1-\rho)$ to the objective.

\end{remark}

The discussion above provides intuitions for the PCC plateau phenomenon. We further empirically validate this phenomenon on 8 UCI regression datasets using multi-layer Transformers (\Cref{app:extended_pcc_plateau}). \Cref{fig:uci_8_curves} shows a consistent pattern across all tasks: PCC curve flattens before MSE convergence. This consistent pattern confirms that ``PCC plateau" is a general correlation learning failure mode under attention-based joint optimization setting. We now provide a formal analysis by characterizing the PCC gradient in attention-based regression models.

\subsection{Optimization Dynamics: The Correlation Gradient Bottleneck} \label{sec:gradient_analysis}

We study how the gradients propagate through the attention aggregator when optimizing the joint loss $\mathcal{L}_{\text{total}} = \mathcal{L}_{\text{MSE}} + \lambda_{\text{PCC}}(1-\rho)$. We derive the gradients of both MSE and PCC with respect to the attention logits $z_{si}$ to understand the optimization dynamics.

\begin{lemma}[Softmax Aggregator Jacobian]\label{lem:softmax-agg-main}
The derivative of the aggregated embedding $\mathbf{v}_s$ with respect to a pre-softmax logit $z_{si}$ is
$\partial\mathbf{v}_s/\partial z_{si} =\alpha_{si}(\mathbf h_{si}-\mathbf v_s)$ and 
consequently, $\partial\hat y_s / \partial z_{si} = \alpha_{si}\mathbf w^\top(\mathbf h_{si}-\mathbf v_s)$.
\end{lemma}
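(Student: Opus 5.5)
The plan is a direct chain-rule computation through the softmax. First I compute the Jacobian of the softmax map itself. Writing $\alpha_{sj}=e^{z_{sj}}/\sum_{k}e^{z_{sk}}$ and differentiating the quotient gives
\begin{equation*}
\frac{\partial \alpha_{sj}}{\partial z_{si}} = \alpha_{sj}\bigl(\delta_{ij}-\alpha_{si}\bigr),
\end{equation*}
where $\delta_{ij}$ is the Kronecker delta. This is the only genuinely computational step. The one point needing care is the two cases $j=i$ and $j\neq i$: in the first case both the numerator and the normalizer depend on $z_{si}$, while in the second only the normalizer does. The single formula with $\delta_{ij}$ packages both cases.

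Next, I differentiate the aggregate $\mathbf v_s=\sum_j \alpha_{sj}\mathbf h_{sj}$. The embeddings $\mathbf h_{sj}$ are treated as fixed with respect to $z_{si}$, since the statement concerns the partial derivative in the logit only. Linearity then gives
\begin{equation*}
\frac{\partial \mathbf v_s}{\partial z_{si}}=\sum_{j}\alpha_{sj}(\delta_{ij}-\alpha_{si})\mathbf h_{sj} = \alpha_{si}\mathbf h_{si}-\alpha_{si}\sum_j\alpha_{sj}\mathbf h_{sj}=\alpha_{si}(\mathbf h_{si}-\mathbf v_s).
\end{equation*}
Here I use the definition of $\mathbf v_s$ to recognize the second sum. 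I would also note the consistency check: because $\sum_i \alpha_{si}=1$, summing over $i$ gives $\sum_i \partial\mathbf v_s/\partial z_{si}=\mathbf 0$. This reflects the shift-invariance of softmax.

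Finally, the prediction $\hat y_s=\mathbf w^\top\mathbf v_s+c$ is affine in $\mathbf v_s$, and $\mathbf w$ and $c$ do not depend on $z_{si}$. Applying the chain rule therefore gives $\partial\hat y_s/\partial z_{si}=\mathbf w^\top\,\partial\mathbf v_s/\partial z_{si}=\alpha_{si}\mathbf w^\top(\mathbf h_{si}-\mathbf v_s)$.

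I expect no real obstacle. The only subtlety is the bookkeeping of the normalizer's derivative in the softmax Jacobian, which the $\delta_{ij}$ formula handles.
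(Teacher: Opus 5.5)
Your proposal is correct and takes the same route as the paper's proof. Both use the softmax Jacobian $\alpha_{sj}(\delta_{ij}-\alpha_{si})$, apply the chain rule through $\mathbf v_s=\sum_j\alpha_{sj}\mathbf h_{sj}$ (the Kronecker delta collapses the first sum and the second sum is recognized as $\alpha_{si}\mathbf v_s$), and finish with the affine head. The only differences are cosmetic: you derive the softmax Jacobian from the quotient rule where the paper simply recalls it, and you add the shift-invariance check $\sum_i \partial\mathbf v_s/\partial z_{si}=\mathbf 0$.
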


\begin{theorem}[Gradient of PCC w.r.t. Attention Logits]\label{thm:drdz-main}
The derivative of Pearson correlation $\rho$ with respect to a pre-softmax logit $z_{si}$ is
\begin{equation}\label{eq:drdz-correct-main}
\frac{\partial \rho}{\partial z_{si}}
=
\frac{1}{S\sigma_{\hat y}}\left(\frac{a_s}{\sigma_y}-\rho\frac{b_s}{\sigma_{\hat y}}\right)
\alpha_{si}
\mathbf w^\top(\mathbf h_{si}-\mathbf v_s).
\end{equation}
\end{theorem}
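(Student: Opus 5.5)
The plan is a chain-rule computation that factors through the scalar prediction $\hat y_s$. The logit $z_{si}$ only enters $\mathbf v_s$, hence only $\hat y_s$; no other prediction $\hat y_{s'}$ with $s'\neq s$ depends on it. So
$$\frac{\partial\rho}{\partial z_{si}}=\frac{\partial\rho}{\partial \hat y_s}\,\frac{\partial \hat y_s}{\partial z_{si}}.$$
The second factor is already supplied by \Cref{lem:softmax-agg-main}: it equals $\alpha_{si}\mathbf w^\top(\mathbf h_{si}-\mathbf v_s)$. The whole task therefore reduces to showing
$$\frac{\partial\rho}{\partial \hat y_s}=\frac{1}{S\sigma_{\hat y}}\left(\frac{a_s}{\sigma_y}-\rho\frac{b_s}{\sigma_{\hat y}}\right).$$

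To compute this derivative, I will use the population (divide-by-$S$) convention consistent with \Cref{thm:mse_decomposition}. Write
$$\operatorname{Cov}=\frac1S\sum_t a_t b_t,\qquad \sigma_{\hat y}^2=\frac1S\sum_t b_t^2,$$
so that $\rho=\operatorname{Cov}/(\sigma_y\sigma_{\hat y})$, where $\sigma_y$ does not depend on $\hat y$. The key intermediate facts follow in order.

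First, $\partial b_t/\partial \hat y_s=\delta_{ts}-1/S$.

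Second, using $\sum_t a_t=0$, this gives $\partial\operatorname{Cov}/\partial\hat y_s=a_s/S$.

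Third, using $\sum_t b_t=0$, it gives $\partial\sigma_{\hat y}^2/\partial\hat y_s=2b_s/S$, hence $\partial\sigma_{\hat y}/\partial\hat y_s=b_s/(S\sigma_{\hat y})$.

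Then the quotient rule yields
$$\frac{\partial\rho}{\partial\hat y_s}=\frac{a_s}{S\sigma_y\sigma_{\hat y}}-\frac{\operatorname{Cov}}{\sigma_y\sigma_{\hat y}^2}\cdot\frac{b_s}{S\sigma_{\hat y}}.$$
Substituting $\operatorname{Cov}/(\sigma_y\sigma_{\hat y})=\rho$ and factoring out $1/(S\sigma_{\hat y})$ gives exactly the bracket in \eqref{eq:drdz-correct-main}. Multiplying by the Jacobian from \Cref{lem:softmax-agg-main} then completes the proof.

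The only delicate step is the bookkeeping of the mean-centering terms. Each $b_t$ depends on $\hat y_s$ through $\mu_{\hat y}$, and the $-1/S$ contributions must be shown to cancel; they do, via $\sum_t a_t=\sum_t b_t=0$. I will also state explicitly that $\sigma_{\hat y}>0$ is assumed so that $\rho$ is differentiable, and that $\mathbf w$ and $\mathbf h_{si}$ are held fixed, since the derivative is taken only with respect to the logit.
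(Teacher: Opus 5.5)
Your proposal is correct and follows essentially the same route as the paper: the quotient rule on $\rho=\operatorname{Cov}/(\sigma_y\sigma_{\hat y})$, cancellation of the mean-centering terms via $\sum_t a_t=\sum_t b_t=0$, obtaining $\partial\sigma_{\hat y}$ from $\partial\sigma_{\hat y}^2$ (under $\sigma_{\hat y}>0$), and then the softmax Jacobian lemma. The only difference is cosmetic: you factor through $\partial\rho/\partial\hat y_s$ first, whereas the paper carries $\partial\hat y_t/\partial z_{si}$ through each step and collapses the sums using $\partial\hat y_t/\partial z_{si}=0$ for $t\neq s$.
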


To understand the interplay during joint optimization, we also examine the gradient of the MSE loss.

\begin{lemma}[Gradient of MSE w.r.t. Attention Logits]\label{lem:dMSEdz-main}
The derivative of MSE with respect to a pre-softmax logit $z_{si}$ is
\begin{equation}\label{eq:dMSEdz-main}
\frac{\partial \mathcal{L}_{\mathrm{MSE}}}{\partial z_{si}}
=
\frac{2}{S}(\hat y_s - y_s)\,
\alpha_{si}\mathbf w^\top(\mathbf h_{si}-\mathbf v_s).
\end{equation}
\end{lemma}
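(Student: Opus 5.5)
The plan is to apply the chain rule in two stages. First differentiate the batch MSE with respect to the predictions $\hat y_t$. Then pass the result through the attention aggregator using \Cref{lem:softmax-agg-main}.

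Write $\mathcal{L}_{\mathrm{MSE}} = \frac{1}{S}\sum_{t=1}^S(\hat y_t - y_t)^2$, which is the normalization consistent with the $1/S$ appearing in \Cref{thm:drdz-main}. The first stage gives
\begin{equation*}
\frac{\partial \mathcal{L}_{\mathrm{MSE}}}{\partial \hat y_t} = \frac{2}{S}(\hat y_t - y_t).
\end{equation*}

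The key structural observation is that the logit $z_{si}$ enters only the attention weights $\bm\alpha_s$ of sample $s$. Hence $\partial \hat y_t/\partial z_{si} = 0$ for every $t \neq s$. This holds because each $\hat y_t = \mathbf w^\top \mathbf v_t + c$ depends only on its own sample's logits, and we treat $\mathbf w$, $c$ and the embeddings $\mathbf h$ as fixed when differentiating with respect to $z_{si}$. The chain-rule sum over $t$ therefore collapses to the single term $t=s$:
\begin{equation*}
\frac{\partial \mathcal{L}_{\mathrm{MSE}}}{\partial z_{si}} = \frac{2}{S}(\hat y_s - y_s)\,\frac{\partial \hat y_s}{\partial z_{si}}.
\end{equation*}

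Next, substitute $\partial \hat y_s/\partial z_{si} = \alpha_{si}\mathbf w^\top(\mathbf h_{si}-\mathbf v_s)$ from \Cref{lem:softmax-agg-main}. This yields \eqref{eq:dMSEdz-main} directly.

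Unlike the PCC gradient, the MSE gradient has no coupling through batch statistics: there is no $\mu_{\hat y}$ or $\sigma_{\hat y}$ dependence. The only subtle point, therefore, is ensuring that $\mathbf h_{si}$ is treated as independent of $z_{si}$. This is the same convention under which \Cref{lem:softmax-agg-main} is stated. I expect no real obstacle beyond confirming this normalization and convention.
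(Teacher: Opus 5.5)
Your proposal is correct and follows the paper's proof. Like the paper, you use the $1/S$-normalized MSE, compute $\partial\mathcal{L}_{\mathrm{MSE}}/\partial \hat y_k = \frac{2}{S}(\hat y_k - y_k)$, keep only the $k=s$ term of the chain-rule sum, and substitute the softmax aggregator Jacobian from \Cref{lem:softmax-agg-main}.
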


\paragraph{Gradient Decomposition and the Optimization Conflict.}
Comparing the PCC gradient (\Cref{eq:drdz-correct-main}) and the MSE gradient (\Cref{eq:dMSEdz-main}), we observe they share the same \emph{local} structure factor $L_{si} := \alpha_{si}\,\mathbf w^\top(\mathbf h_{si}-\mathbf v_s)$, which governs attention adjustment within sample $s$. The difference lies entirely in the \emph{global} scaling factors which depend on overall batch statistics:

\begin{align}
\frac{\partial \mathcal{L}_{\mathrm{MSE}}}{\partial z_{si}} &= \frac{1}{S} g_s^{\mathrm{MSE}} L_{si}, \quad \text{where } g_s^{\mathrm{MSE}} := 2(\hat y_s - y_s), \\
\frac{\partial \rho}{\partial z_{si}} &= \frac{1}{S} g_s^{\mathrm{PCC}} L_{si}, \quad \text{where } g_s^{\mathrm{PCC}} := \frac{1}{\sigma_{\hat y}}\left(\frac{a_s}{\sigma_y}-\rho\frac{b_s}{\sigma_{\hat y}}\right).
\end{align}

The relative impact of MSE versus PCC optimization is determined by the ratio of these global factors. We analyze this ratio using their Root Mean Square (RMS) values across the batch.

\begin{corollary}[PCC/MSE Gradient Ratio Decay]\label{cor:grad_ratio}
Assuming $\rho \in [0,1]$, the RMS ratio of the global scaling factors across the batch is bounded by:
\begin{equation}\label{eq:grad_ratio_bound}
r_{\mathrm{global}}
:= \frac{\operatorname{RMS}_s(g_s^{\mathrm{PCC}})}{\operatorname{RMS}_s(g_s^{\mathrm{MSE}})}
\le \frac{1}{2\sqrt{\sigma_y}}\cdot\frac{1}{\sigma_{\hat y}^{3/2}}.
\end{equation}
\end{corollary}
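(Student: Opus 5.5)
The plan is to compute both RMS values exactly in terms of batch statistics, and then use the MSE decomposition of \Cref{thm:mse_decomposition} to lower-bound the denominator by a quantity proportional to $1-\rho$. That factor will cancel against the $1-\rho^2$ appearing in the numerator.

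\emph{Numerator.} By definition,
\begin{equation*}
\operatorname{RMS}_s(g_s^{\mathrm{PCC}})^2=\frac{1}{\sigma_{\hat y}^2}\cdot\frac1S\sum_{s}\Bigl(\frac{a_s}{\sigma_y}-\rho\frac{b_s}{\sigma_{\hat y}}\Bigr)^2 .
\end{equation*}
Expanding the square, I will use the population-style batch identities $\frac1S\sum_s a_s^2=\sigma_y^2$, $\frac1S\sum_s b_s^2=\sigma_{\hat y}^2$ and $\frac1S\sum_s a_s b_s=\operatorname{Cov}(\mathbf y,\hat{\mathbf y})=\rho\,\sigma_y\sigma_{\hat y}$. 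The three terms then become $1$, $-2\rho^2$ and $\rho^2$, so
\begin{equation*}
\operatorname{RMS}_s(g_s^{\mathrm{PCC}})^2=\frac{1-\rho^2}{\sigma_{\hat y}^2}.
\end{equation*}
This step relies on the convention that $\sigma$ and $\operatorname{Cov}$ use the $1/S$ normalization, consistent with \Cref{thm:mse_decomposition}.

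\emph{Denominator.} We have $\operatorname{RMS}_s(g_s^{\mathrm{MSE}})^2=\frac4S\sum_s(\hat y_s-y_s)^2=4\,\operatorname{MSE}(\mathbf y,\hat{\mathbf y})$. In \Cref{thm:mse_decomposition} the mean-matching and std-matching terms are nonnegative, so dropping them gives
\begin{equation*}
\operatorname{MSE}\ge 2\sigma_y\sigma_{\hat y}(1-\rho),
\end{equation*}
and hence $\operatorname{RMS}_s(g_s^{\mathrm{MSE}})^2\ge 8\sigma_y\sigma_{\hat y}(1-\rho)$.

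\emph{Combining.} For $\rho<1$, dividing and cancelling $1-\rho$ gives
\begin{equation*}
r_{\mathrm{global}}^2\le\frac{(1-\rho)(1+\rho)}{8\sigma_y\sigma_{\hat y}^3(1-\rho)}=\frac{1+\rho}{8\sigma_y\sigma_{\hat y}^3}\le\frac{1}{4\sigma_y\sigma_{\hat y}^3},
\end{equation*}
where the last step uses $\rho\le 1$. Taking square roots yields the bound $\frac{1}{2\sqrt{\sigma_y}}\,\sigma_{\hat y}^{-3/2}$. Only $\rho\le1$ is actually needed; the stated assumption $\rho\in[0,1]$ covers this.

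The main obstacle is the degenerate endpoint $\rho=1$. There the numerator is $0$, so the bound holds trivially whenever $\operatorname{MSE}>0$. If in addition $\operatorname{MSE}=0$, the ratio is $0/0$ and undefined, and I would exclude this case explicitly (or adopt the convention $r_{\mathrm{global}}=0$). Beyond this, the only care needed is consistent use of the $1/S$ normalization throughout, so that the identities in the numerator computation match the decomposition in \Cref{thm:mse_decomposition}.
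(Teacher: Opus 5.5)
Your proposal is correct and follows essentially the same route as the paper: exact evaluation $\operatorname{RMS}_s(g_s^{\mathrm{PCC}})=\sqrt{1-\rho^2}/\sigma_{\hat y}$, $\operatorname{RMS}_s(g_s^{\mathrm{MSE}})=2\sqrt{\operatorname{MSE}}$, the lower bound $\operatorname{MSE}\ge 2\sigma_y\sigma_{\hat y}(1-\rho)$ from the decomposition, and cancellation of $1-\rho$ using $1+\rho\le 2$. Working with squared quantities and explicitly handling the endpoint $\rho=1$ (where the paper silently cancels $\sqrt{1-\rho}$) are minor refinements, and you are right that the hypothesis $\rho\ge 0$ is never used, since $\rho\le 1$ suffices.
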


\begin{wrapfigure}[32]{r}{0.35\textwidth}
    \begin{center}
        \includegraphics[width=\linewidth]{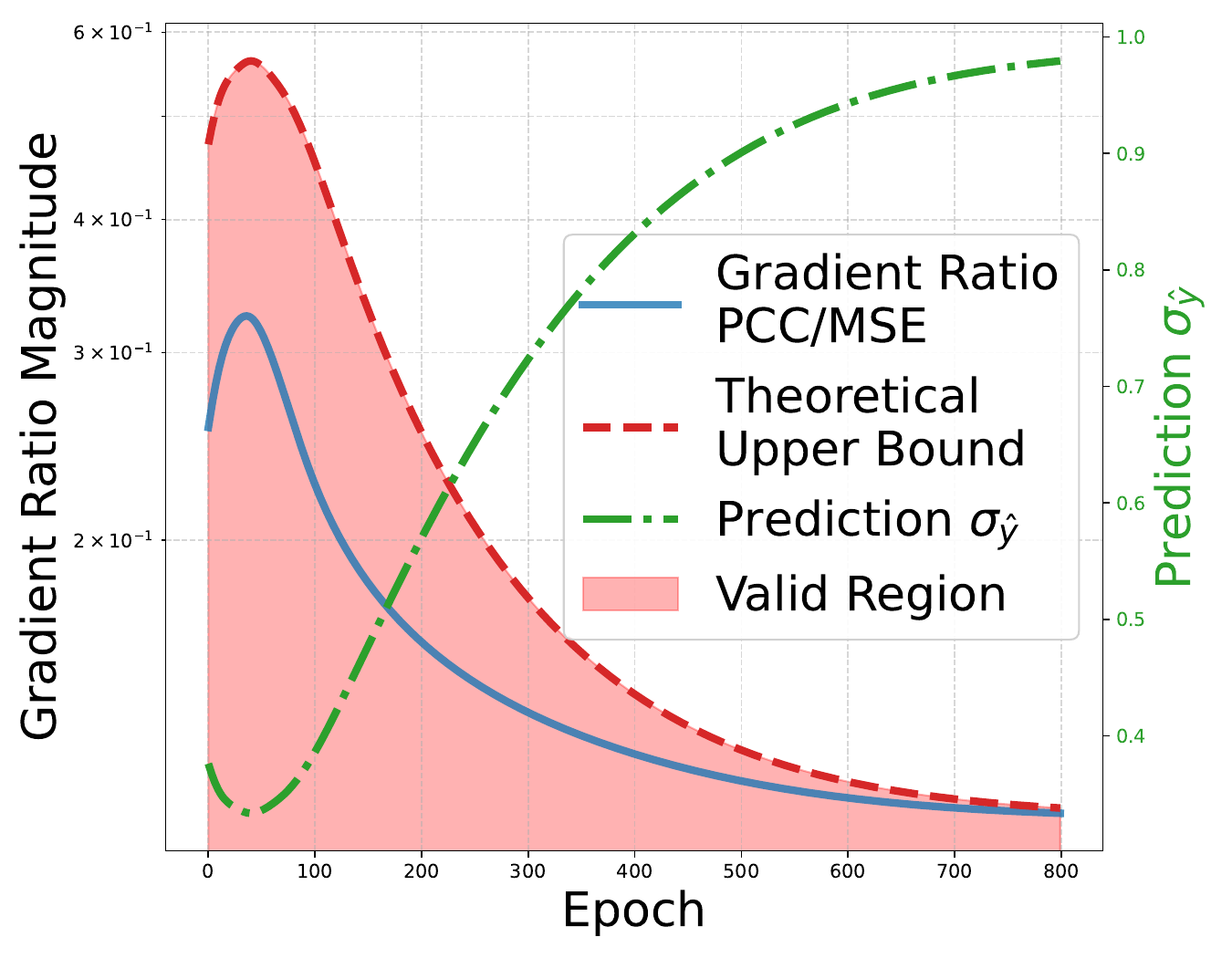}
        \caption{Validation of gradient ratio (PCC/MSE) decay. The RMS ratio of PCC vs.~MSE gradients (blue) is strictly constrained by the theoretical upper bound (red). The increase in prediction dispersion $\sigma_{\hat{y}}$ (green) during training drives the  attenuation of the PCC gradient signal relative to the MSE gradient.}
        \label{fig:gradient_ratio}
        \vspace{3pt}
        \includegraphics[width=\linewidth]{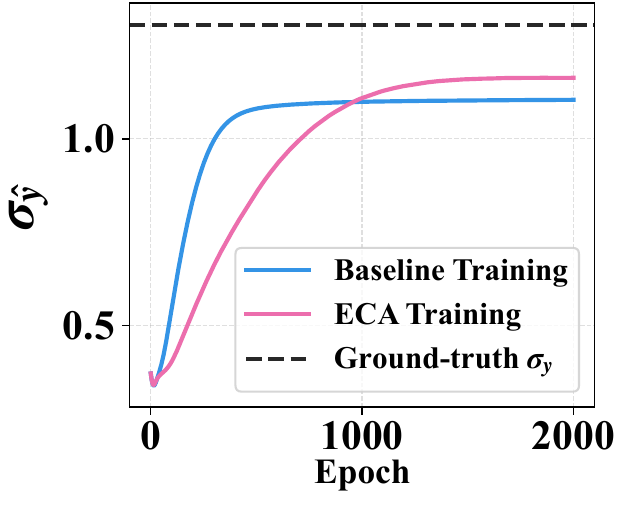}
        \caption{The standard deviation of predictions $\sigma_{\hat{y}}$ increases during training to match the standard deviation of labels $\sigma_y$ under MSE loss.}
        \label{fig:sigma_increase}
    \end{center}
\end{wrapfigure}

\Cref{cor:grad_ratio} identifies a gradient bottleneck where the PCC signal attenuates relative to MSE at a rate of $\mathcal{O}(1/\sigma_{\hat{y}}^{3/2})$. This bound is empirically validated in~\Cref{fig:gradient_ratio} on synthetic dataset , illustrating how magnitude matching dominates the optimization dynamics. We further analyze the magnitude of the PCC gradient alone, which reveals dependence on in-sample homogeneity.

\begin{corollary}[PCC Gradient Magnitude Bound]\label{cor:mag-1plusr}
The magnitude of the PCC gradient in~\Cref{thm:drdz-main} can be bounded by
\begin{equation}
    \left|\frac{\partial \rho}{\partial z_{si}}\right|
\le \underbrace{\frac{1}{\sigma_{\hat{y}}}}_{\substack{\text{prediction}\\\text{deviation}}}\!\!\!\!\!\cdot\underbrace{\frac{4\sqrt{n_s(S-1)}}{S}}_{\text{batch scale}}\cdot\!\!\!\underbrace{\|\mathbf w\|_2}_{\substack{\text{regression}\\\text{weights}}}\!\cdot \!\!\underbrace{\sigma_s}_{\substack{\text{in-sample}\\\text{dispersion}}}.
\end{equation}
\end{corollary}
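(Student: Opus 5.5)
The plan is to start from the exact expression in \Cref{thm:drdz-main} and bound its factors separately:
\begin{equation*}
\left|\frac{\partial \rho}{\partial z_{si}}\right|
=\frac{1}{S\sigma_{\hat y}}\,\underbrace{\left|\frac{a_s}{\sigma_y}-\rho\frac{b_s}{\sigma_{\hat y}}\right|}_{\text{(I) global}}\cdot\underbrace{\alpha_{si}\,\bigl|\mathbf w^\top(\mathbf h_{si}-\mathbf v_s)\bigr|}_{\text{(II) local}} .
\end{equation*}
The prefactor $1/(S\sigma_{\hat y})$ is kept as is. The target constant $4\sqrt{n_s(S-1)}$ should then come from showing that (I) is at most $2\sqrt{S-1}$ and (II) is at most $2\sqrt{n_s}\,\|\mathbf w\|_2\,\sigma_s$.

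For (I), I use the triangle inequality and $|\rho|\le 1$ to get $|a_s|/\sigma_y+|b_s|/\sigma_{\hat y}$. Each term is controlled by a Samuelson-type inequality for centered data. Since $\sum_t a_t=0$, Cauchy--Schwarz gives $a_s^2=\bigl(\sum_{t\neq s}a_t\bigr)^2\le (S-1)\sum_{t\neq s}a_t^2=(S-1)\bigl(S\sigma_y^2-a_s^2\bigr)$. This rearranges to $a_s^2\le (S-1)\sigma_y^2$, i.e.\ $|a_s|/\sigma_y\le\sqrt{S-1}$. The same argument applies to $b_s$, because it is centered by $\mu_{\hat y}$. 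Hence (I) $\le 2\sqrt{S-1}$.

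For (II), I use $\alpha_{si}\le 1$ (softmax weights lie on the simplex) and Cauchy--Schwarz to get $|\mathbf w^\top(\mathbf h_{si}-\mathbf v_s)|\le\|\mathbf w\|_2\,\|\mathbf h_{si}-\mathbf v_s\|_2$. I then insert $\boldsymbol\mu_s$: $\|\mathbf h_{si}-\mathbf v_s\|_2\le\|\mathbf h_{si}-\boldsymbol\mu_s\|_2+\|\mathbf v_s-\boldsymbol\mu_s\|_2$. The first term is at most $R_s$ by definition. Because $\mathbf v_s=\sum_j\alpha_{sj}\mathbf h_{sj}$ is a convex combination, $\|\mathbf v_s-\boldsymbol\mu_s\|_2\le\sum_j\alpha_{sj}\|\mathbf h_{sj}-\boldsymbol\mu_s\|_2\le R_s$. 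So (II) $\le 2\|\mathbf w\|_2R_s$. Finally, the already established relation $R_s\le\sqrt{n_s}\,\sigma_s$ (\Cref{app:lemma_translate}) gives (II) $\le 2\sqrt{n_s}\,\|\mathbf w\|_2\,\sigma_s$.

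Multiplying the prefactor with the bounds on (I) and (II) gives exactly $\frac{1}{\sigma_{\hat y}}\cdot\frac{4\sqrt{n_s(S-1)}}{S}\cdot\|\mathbf w\|_2\cdot\sigma_s$. The only step that is not immediate is the Samuelson-type bound $|a_s|\le\sqrt{S-1}\,\sigma_y$, which must use the population (divide-by-$S$) convention for $\sigma_y$ to recover the $\sqrt{S-1}$ factor. The convex-hull step is where the softmax/convexity structure enters.
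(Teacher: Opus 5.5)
Your proposal is correct and follows essentially the same route as the paper. Both arguments use the triangle inequality with $|\rho|\le 1$, the Samuelson-type bound $|a_s|/\sigma_y,\ |b_s|/\sigma_{\hat y}\le\sqrt{S-1}$ (the paper's Lemma~\ref{lem:max-centered}), $\alpha_{si}\le 1$, Cauchy--Schwarz, and the split $\|\mathbf h_{si}-\mathbf v_s\|_2\le\|\mathbf h_{si}-\boldsymbol\mu_s\|_2+\|\mathbf v_s-\boldsymbol\mu_s\|_2$ with each piece at most $\sqrt{n_s}\,\sigma_s$. The only difference is that you route through $R_s$ and cite $R_s\le\sqrt{n_s}\,\sigma_s$, whereas the paper proves this inline via $\max_j\|\mathbf h_{sj}-\boldsymbol\mu_s\|_2^2\le\sum_j\|\mathbf h_{sj}-\boldsymbol\mu_s\|_2^2=n_s\sigma_s^2$; state that one-line justification yourself, since the dispersion--radius lemma you cite is stated in the paper without proof.
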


Detailed derivations for~\Cref{lem:softmax-agg-main}, \Cref{thm:drdz-main} are provided in~\Cref{softmax_jacobian_app}. Proofs for \Cref{lem:dMSEdz-main}, \Cref{cor:grad_ratio}, and \Cref{cor:mag-1plusr} are provided in~\Cref{sec:gradient_analysis_proofs_app}. 

\begin{remark}[The Two Bottlenecks of Softmax Attention for Correlation]
\label{remark:bottlenecks}

The gradient analysis reveals two key bottlenecks in optimization dynamics that drive PCC plateaus:

\textbf{1. Dominance of the MSE Gradient.}
\Cref{cor:grad_ratio} reveals a critical conflict in the joint optimization: the ratio of the PCC gradient magnitude relative to the MSE gradient magnitude decays rapidly at a rate of $\mathcal{O}(1/\sigma_{\hat y}^{3/2})$. Training with the joint loss minimizes the MSE std-matching term in~\Cref{eq:mse-decomposition-main}, which drives $\sigma_{\hat y}$ toward the target standard deviation $\sigma_y$. As $\sigma_{\hat y}$ typically increases during early training (see~\Cref{fig:sigma_increase}), the relative contribution of the PCC gradient diminishes significantly. Consequently, the optimization becomes dominated by the MSE objective (magnitude matching), effectively downplaying PCC optimization (shape matching) and causing the plateau, even when the PCC loss weight $\lambda_{\text{PCC}}$ is large. This motivates optimization strategies that counteract this rapid attenuation.

\textbf{2. Dependence on Within-sample Homogeneity.}
The gradient bound in \Cref{cor:mag-1plusr} is proportional to the in-sample dispersion $\sigma_s$. When a sample’s elements are homogeneous, $\sigma_s$ is small and the PCC gradient magnitude reduces, effectively hindering improvements to PCC via attention adjustment. Furthermore, since attention scoring functions are generally continuous, homogeneous inputs lead to low-variance logits $z_{si}$. Under fixed-temperature softmax, this results in near-uniform weights $\alpha_{si}\!\approx\!1/n_s$, suppressing per-sample selectivity. These effects motivate mechanisms that adapt attention sensitivity to the in-sample dispersion.

\end{remark}

\subsection{Model Capacity: The PCC Ceiling of Convex Aggregation}
Beyond optimization dynamics, we study the limits of the aggregator’s expressivity. Softmax attention performs convex combinations, restricting aggregated embedding $\mathbf v_s$ to the convex hull of the in-sample embeddings $\{\mathbf h_{si}\}$. We study how much PCC improvement can be achieved by any convex aggregator over mean-pooling and show a capacity limit governed by in-sample homogeneity.

\paragraph{Prediction Decompositon.}
We decompose the sample embedding $\mathbf v_s$ relative to the mean-pooling embedding $\boldsymbol\mu_s$. The prediction can be decomposed as:
$
\hat y_s=\underbrace{(\mathbf w^\top\boldsymbol\mu_s+c)}_{\bar{y}_s}
+\underbrace{\mathbf w^\top(\mathbf{v}_s - \boldsymbol\mu_s)}_{\Delta\hat y_s}
$, where $\bar{y}_s$ is the prediction using mean-pooling aggregation, and $\Delta\hat y_s$ is the attention-induced perturbation. 

\begin{theorem}[PCC Gain Bound for Convex Attention]\label{thm:pcc-gain-maintext}
Let $\sigma_{0}:=\operatorname{std}_s(\bar{y}_s)$ be the standard deviation of the mean-pooling predictions; define baseline $\rho_0:=\operatorname{PCC}(\mathbf{y},\bar{\mathbf{y}})$ and $\rho$ is the PCC after applying any convex attention mechanism. Provided $\|\mathbf w\|_2>0$ and $\tilde R<\sigma_{0}/\|\mathbf w\|_2$,
\begin{align}\label{eq:gain-sharp}
|\rho-\rho_0|\ \le\ \frac{2\,\tilde R}{\sigma_{0}/\|\mathbf w\|_2-\tilde R}.
\end{align}
\end{theorem}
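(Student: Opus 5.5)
The plan is to treat $\rho$ as the Pearson correlation of the perturbed vector $\bar{\mathbf y}+\Delta\hat{\mathbf y}$ and bound how far it can move from $\rho_0=\operatorname{PCC}(\mathbf y,\bar{\mathbf y})$ in terms of the size of the perturbation $\Delta\hat{\mathbf y}$ relative to $\sigma_0$.

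\textbf{Step 1: size of the perturbation.} Since $\mathbf v_s$ is a convex combination of the $\mathbf h_{si}$, we have $\|\mathbf v_s-\boldsymbol\mu_s\|_2\le\sum_i\alpha_{si}\|\mathbf h_{si}-\boldsymbol\mu_s\|_2\le R_s$. Cauchy--Schwarz then gives $|\Delta\hat y_s|\le\|\mathbf w\|_2R_s$. Taking the RMS over the batch,
$$\operatorname{std}_s(\Delta\hat y_s)\le\operatorname{RMS}_s(\Delta\hat y_s)\le\|\mathbf w\|_2\tilde R=:\epsilon.$$

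\textbf{Step 2: reduce to centered unit vectors.} Center everything over the batch. Let $\mathbf u=\mathbf a/\|\mathbf a\|$, let $\mathbf p$ be the centered $\bar{\mathbf y}$, and let $\mathbf q$ be the centered $\Delta\hat{\mathbf y}$. With the $1/S$ normalization, $\|\mathbf p\|=\sqrt S\sigma_0$ and $\|\mathbf q\|\le\sqrt S\epsilon$. By \Cref{lemma:pcc_affine_main}, the common $\sqrt S$ factor and the centering do not affect correlations, so
$$\rho_0=\langle\mathbf u,\mathbf p/\|\mathbf p\|\rangle,\qquad \rho=\Bigl\langle\mathbf u,\frac{\mathbf p+\mathbf q}{\|\mathbf p+\mathbf q\|}\Bigr\rangle.$$
The hypothesis $\epsilon<\sigma_0$, i.e.\ $\tilde R<\sigma_0/\|\mathbf w\|_2$, ensures $\|\mathbf p+\mathbf q\|\ge\|\mathbf p\|-\|\mathbf q\|>0$, so $\rho$ is well defined.

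\textbf{Step 3: perturbation of normalization.} By Cauchy--Schwarz, $|\rho-\rho_0|\le\bigl\|\tfrac{\mathbf p+\mathbf q}{\|\mathbf p+\mathbf q\|}-\tfrac{\mathbf p}{\|\mathbf p\|}\bigr\|$. Add and subtract $\tfrac{\mathbf p}{\|\mathbf p+\mathbf q\|}$:
$$\Bigl\|\tfrac{\mathbf p+\mathbf q}{\|\mathbf p+\mathbf q\|}-\tfrac{\mathbf p}{\|\mathbf p\|}\Bigr\|\le\frac{\|\mathbf q\|}{\|\mathbf p+\mathbf q\|}+\|\mathbf p\|\,\Bigl|\frac{1}{\|\mathbf p+\mathbf q\|}-\frac{1}{\|\mathbf p\|}\Bigr|.$$
The second term equals $\bigl|\|\mathbf p\|-\|\mathbf p+\mathbf q\|\bigr|/\|\mathbf p+\mathbf q\|\le\|\mathbf q\|/\|\mathbf p+\mathbf q\|$. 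Hence
$$|\rho-\rho_0|\le\frac{2\|\mathbf q\|}{\|\mathbf p\|-\|\mathbf q\|}\le\frac{2\epsilon}{\sigma_0-\epsilon}=\frac{2\tilde R}{\sigma_0/\|\mathbf w\|_2-\tilde R}.$$
The middle inequality uses that $t\mapsto 2t/(\|\mathbf p\|-t)$ is increasing. The last equality comes from dividing numerator and denominator by $\|\mathbf w\|_2$.

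\textbf{Main obstacle.} The delicate part is the bookkeeping in Steps 1--2. The RMS radius $\tilde R$ must control the \emph{centered} perturbation norm, which holds because centering is an orthogonal projection and cannot increase the norm. The same $1/S$ versus $1/\sqrt S$ normalization must also be used in both $\sigma_0$ and $\tilde R$, so that the factors of $\sqrt S$ cancel cleanly. The convex-hull fact $\|\mathbf v_s-\boldsymbol\mu_s\|_2\le R_s$ is where convexity of the aggregator enters. It is the only model-specific ingredient, and it holds for \emph{any} convex aggregator, not just softmax attention.
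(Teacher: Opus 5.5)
Your proof is correct and follows essentially the same route as the paper: the convexity bound $|\Delta\hat y_s|\le\|\mathbf w\|_2R_s$, non-expansiveness of centering to get $\|\Delta\mathbf b\|_2\le\sqrt S\,\|\mathbf w\|_2\tilde R$, and the same two-term split, followed by monotonicity of $t\mapsto 2t/(C-t)$. Your Cauchy--Schwarz reduction to a difference of unit vectors produces exactly the terms $T_1$ and $T_2$ of the paper's correlation-perturbation lemma.
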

Detailed proof is provided in \Cref{pcc_bound_proof_app}.

\begin{remark}
\label{remark:ceiling}
\Cref{thm:pcc-gain-maintext} shows that the PCC improvement bound of any convex attention depends only on the ratio between convex hull radius $\tilde R$ (reflecting in-sample homogeneity) and the normalized standard deviation of mean-pooling baseline ($\sigma_0/\|\mathbf w\|_2$). The bound is scale-invariant and independent of the regression head magnitude since $\sigma_0/\|\mathbf w\|_2=\operatorname{std}_s(\mathbf w^\top\boldsymbol\mu_s)/\|\mathbf w\|_2=\operatorname{std}_s\!\big((\mathbf w^\top/\|\mathbf w\|_2)\boldsymbol\mu_s\big)$. When in-sample homogeneity is high ($\tilde R$ is small), no convex attention can substantially increase correlation. The limitation arises because $\mathbf v_s$ is confined to the convex hull; a small hull restricts the magnitude of adjustments to the aggregated embedding, thereby limiting the potential PCC improvement and motivating mechanisms that can extrapolate beyond it.
\end{remark}

\subsection{Summary of Theoretical Insights}
Our analysis reveals that the difficulty of optimizing PCC using softmax attention stems from two aspects: optimization dynamics and model capacity. \Cref{remark:bottlenecks} highlights the vanishing PCC gradients due to cross-sample dispersion attenuation $1/\sigma_{\hat{y}}$ and weak in-sample dispersion. \Cref{remark:ceiling} further shows that any convex aggregator is restricted to the convex hull, which limits the possible PCC gain by ratio $\tilde R/(\sigma_0/\|\mathbf w\|_2)$. Together, these effects explain the plateau and motivate a novel attention mechanism proposed in the next section that addresses the identified bottlenecks accordingly.

\newcommand{\ECA}{\text{ECA}}
\newcommand{\StopGrad}{\operatorname{StopGrad}}

\section{Extrapolative Correlation Attention (ECA)} \label{pcc_bound_proof_}
Our analysis has identified fundamental limitations in softmax attention for optimizing joint MSE+PCC objective. To address these issues, we propose Extrapolative Correlation Attention (ECA), a novel drop-in attention module for regression that enhances both optimization and expressivity. ECA incorporates three components: (i) Scaled Residual Aggregation to break the convex hull constraint; (ii) Dispersion-Aware Temperature Softmax, to avoid gradient collapse; and (iii) Dispersion-Normalized PCC Loss, which compensates the $1/\sigma_{\hat y}$ attenuation in correlation gradients.

\subsection{Breaking the Convex Hull with Scaled Residual Aggregation (SRA)}

\Cref{thm:pcc-gain-maintext} shows that any convex attention mechanism is capacity-limited in correlation improvement because the aggregated embedding $\mathbf{v}_s$ lies inside the convex hull of the in-sample embeddings $\{\mathbf{h}_{si}\}$. This PCC gain bound is especially tighter when in-sample dispersion is low. To relax this limit, we introduce Scaled Residual Aggregation (SRA): instead of a strict convex aggregation, the model extrapolates along the residual $(\mathbf{h}_{si}-\boldsymbol\mu_s)$, allowing $\mathbf{v}_s$ to move beyond the convex hull.

Given the mean embedding $\boldsymbol\mu_s=1/n_s\sum_i\mathbf{h}_{si}$, we define the residual $\Delta \mathbf{v}_s$ as the attention-weighted deviation from the mean: $\Delta \mathbf{v}_s:=\sum_i \alpha_{si}(\mathbf{h}_{si}-\boldsymbol\mu_s)$. SRA scales this residual by a learnable factor $\gamma_s\ge 1$:

\begin{equation}\label{eq:scaled_residual}
\mathbf{v}_s^{\ECA} \;=\; \boldsymbol\mu_s \;+\; \gamma_s \cdot \Delta \mathbf{v}_s \;=\; \boldsymbol\mu_s \;+\; \gamma_s \sum_i \alpha_{si}\bigl(\mathbf{h}_{si}-\boldsymbol\mu_s\bigr).
\end{equation}
We parameterize $\gamma_s$ using a small, sample-specific MLP conditioned on the mean embedding $\boldsymbol\mu_s$ and use a shifted Softplus activation to ensure $\gamma_s \ge 1$, that is $\gamma_s = 1 + \operatorname{Softplus}(\operatorname{MLP}_{\theta_\gamma}(\boldsymbol{\mu}_s))$.

The factor $\gamma_s$ allows the model to amplify weak in-sample contrasts. When $\gamma_s=1$, SRA reduces to standard convex attention aggregation. For $\gamma_s>1$, the model extrapolates beyond the convex hull, and more importantly, it breaks the convexity constraint. In standard attention, the deviation $\|\Delta \mathbf{v}_s\|$is bounded by the radius of the convex hull $R_s$. SRA expands the reachable space by increasing the effective radius and fundamentally bypasses the capacity limit derived for convex aggregators in~\Cref{thm:pcc-gain-maintext}. In practice we optionally clip $\gamma_s$ at a maximum (e.g., $\gamma_{\text{max}}=2$) or add a regularizer $\mathcal{L}_{\gamma}=\tfrac{\lambda_\gamma}{S}\sum_s(\gamma_s-1)^2$ to discourage excessive scaling.

\subsection{Dispersion-Aware Temperature Softmax (DATS)}
\label{sec:dats}

While SRA enables extrapolation beyond the convex hull, the model still needs a informative direction to extrapolate. In homogeneous samples, standard softmax produces flat attention $\alpha_{si}\!\approx\!1/n_s$ (\Cref{remark:bottlenecks}), which pulls the aggregated embedding toward $\boldsymbol\mu_s$ and makes the residual in \Cref{eq:scaled_residual} small ($\Delta\mathbf v_s\!\approx\!0$). With small residual, SRA provides little benefit. To address this, we introduce Dispersion-Aware Temperature Softmax (DATS), which adapts the attention temperature to the in-sample dispersion, sharpening attention when homogeneity is high:

We modify the softmax attention for~\Cref{eq:scaled_residual} with a sample-specific temperature $\tau_s$ reflecting within-sample dispersion:
\begin{align}
    \alpha_{si}=\operatorname{softmax}\!\left(\frac{z_{si}}{\tau_s}\right), \quad \tau_s = T_{\min}+\beta\sqrt{\frac{1}{n_s}\sum_{1\leq i\leq n_s}\|\mathbf h_{si}-\boldsymbol\mu_s\|^2}.
\end{align}

Here $T_{\min}\!>\!0$ lower-bounds the temperature for stability, and $\beta\ge 0$ is a hyperparameter that controls sensitivity. When embeddings within a sample are homogeneous, $\tau_s$ has lower value so small differences between logits $z_{si}$ become sharper after attention, yielding a meaningful deviation $\Delta\mathbf v_s$ that SRA can effectively amplify. 

\subsection{Stabilizing Optimization: Dispersion-Normalized PCC Loss (DNPL)}

\Cref{thm:drdz-main} shows that the correlation gradient is attenuated by $1/\sigma_{\hat y}$. As MSE optimization improves standard deviation matching, $\sigma_{\hat y}$ increases, which shrinks the PCC gradient and contributes to a PCC plateau. We counteract this attenuation effect with a Dispersion-Normalized PCC Loss (DNPL), which rescales the PCC term by the current prediction standard deviation while blocking its gradient:
\begin{equation}\label{eq:disp_norm_pcc}
\tilde{\mathcal L}_{\text{PCC}}
= \StopGrad(\sigma_{\hat y})\cdot(1-\rho).
\end{equation}

The $\StopGrad(\cdot)$ operation ensures we only adjust the gradient magnitude to counteract the attenuation, while leaving the learning objective’s stationary points unchanged.

\subsection{Overall Objective}

The complete ECA framework, including SRA and DATS, is fully differentiable and can be trained end-to-end. The overall learning objective combines the primary regression loss (MSE), the normalized PCC loss (DNPL), and the extrapolation regularizer: 
\begin{equation}
    \mathcal{L}_{\text{Total}} = \mathcal{L}_{\text{MSE}} + \lambda_{\text{PCC}} \cdot \tilde{\mathcal{L}}_{\text{PCC}} + \mathcal{L}_{\gamma}.
\end{equation}

\section{Related Works}

\textbf{Correlation Learning and Optimization.}
Correlation is a core metric in biological and medical areas \citep{langfelder2008wgcna, lawrence1989concordance}. In these domains, PCC is a standard criterion for evaluating regression performance \citep{kudratpatch, long2023spatially}. Because PCC is differentiable, it is often optimized directly as a loss in regression pipelines \citep{kudratpatch, avants2008symmetric}. From a multi-task perspective, many works combine PCC with MSE to balance the prediction magnitude and shape \citep{yang2023exemplar, liu2022bit, balakrishnan2019voxelmorph}. However, the interaction dynamics between MSE and PCC under joint optimization remain underexplored, motivating our analysis of gradient coupling and the observed PCC plateau.

\textbf{Softmax Attention Aggregation.}
Softmax attention is a cornerstone component in the backbones of many representative regression models \citep{zhou2021informer, gorishniy2021revisiting, kimattentive}. Despite its empirical success, recent theory has revealed expressivity limits of softmax mappings in related contexts \citep{yang2017breaking, kanai2018sigsoftmax, bhojanapalli2020low}. However, to our knowledge, no prior work analyzes the model capacity of softmax attention in terms of upper bounds on achievable PCC improvements, especially with high in-sample homogeneity data.

\section{Experiments}

We evaluate our ECA on four settings, including the challenging high in-sample homogeneity tasks. The evaluations consist of: (i) a synthetic regression dataset with controllable in-sample homogeneity; (ii) three representative tabular regression benchmarks from the UCI ML Repository \citep{asuncion2007uci}; (iii) a clinical pathology dataset for spatial transcriptomic prediction, where nearby regions exhibit \emph{high homogeneity}; and (iv) a multimodal sentiment analysis (MSA) dataset, where consecutive video frames are \emph{highly homogeneous}.

As ECA is a drop-in replacement for softmax attention, we integrate it into existing attention-based regression models for each benchmark to measure the performance improvement. More details are provided in~\Cref{app:expsetting}.

\subsection{Experimental Setup}
\label{sec:experimental_setup}
\textbf{Synthetic Dataset.} We construct a synthetic dataset to validate our theory and proposed ECA method. We synthetic \(N\) samples, each with \(K\) element embeddings in \(D\) dimensions as input samples. In each sample, one key element carries signal along a fixed unit direction \(\mathbf{w}^{\ast}\), while the remaining \(K\!-\!1\) background elements cluster around a shared sample mean. The label \(\mathbf{y}\) is the projection of the sample mean onto \(\mathbf{w}^{\ast}\) with a term proportional to the key strength and small additive noise. We control within-sample homogeneity via \(\eta\) (larger \(\eta\) means the key deviates further from the mean), yielding four homogeneity levels \(\tilde{\sigma}\in\{0.10, 0.24, 0.42, 0.73\}\) where lower \(\tilde{\sigma}\) indicates higher homogeneity. We compare regression model with one layer of ECA to one layer of standard softmax attention and report MSE and PCC.

\textbf{UCI ML Repository Datasets.} We evaluate on three representative tabular regression benchmarks: Appliance Energy Prediction (28 features, 1 target) \citep{appliances_energy_prediction_374}, Online News Popularity (58 features, 1 target) \citep{online_news_popularity_332}, and Superconductivity (81 features, 1 target) \citep{superconductivty_data_464}. We integrate ECA into the attention layer of the FT-Transformer \citep{gorishniy2021revisiting} and report mean absolute error (MAE), MSE, and PCC.

\textbf{Spatial Transcriptomic Dataset.} We test spatial transcriptomics prediction from pathology images on the 10xProteomic dataset~\citep{tenx_datasets,yang2023exemplar}, which contains $32,032$ slide-image patches paired with gene-expression measurements of breast-cancer slides. We follow the data processing and experimental settings of the EGN baseline~\citep{yang2023exemplar}, which jointly optimize MSE+PCC loss. We adopt ECA methods onto the EGN baseline and report MSE, PCC@F, PCC@S, and PCC@M as evaluation metrics. The training set exhibits high in-sample embedding homogeneity with \(\tilde{\sigma}=0.068\) versus cross-sample \(\sigma_{0}=0.164\).

\textbf{Multimodal Sentiment Analysis (MSA) Dataset.} We use MOSI~\citep{zadeh2016multimodal}, a standard MSA benchmark consists of $2,199$ monologue video clips with audio and visual inputs. As illustrated in~\Cref{fig:mosi}, consecutive frames within a clip are more similar than frames across clips.  Quantitatively, the video frames shows strong within-sample homogeneity with \(\tilde{\sigma}=0.098\) versus cross-sample \(\sigma_{0}=0.170\). We follow the commonly used MOSI processing protocol from THUIAR releases~\citep{yu2020ch,yu2021learning}. Video frame embeddings are extracted with OpenFace~\citep{amos2016openface}. Consistent with prior work, we report F1, PCC, and MAE as evaluation metrics. We include 10 representative baselines. ALMT~\citep{zhang2023learning}, the leading baseline optimizing with MSE loss, is selected to incorporate the ECA method.

\vspace{13pt}

\begin{table*}[h]
\centering
\large 
\label{tab:uci}
\caption{Results on three UCI tabular regression tasks. ``\,+ECA\," denotes adding ECA onto the FT-Transformer baseline. Rows marked ``w/o SRA/DATS/DNPL" are ablation studies that remove the corresponding ECA components. ``w/o" = ``without" and \textbf{bold} indicating best result.}
\begin{adjustbox}{max width=\textwidth}
\begin{tabular}{l|ccc|ccc|ccc}
\toprule
& \multicolumn{3}{c|}{\textbf{Appliance}} & \multicolumn{3}{c|}{\textbf{Online News}} & \multicolumn{3}{c}{\textbf{Superconductivity}} \\
\cmidrule(lr){2-4} \cmidrule(lr){5-7} \cmidrule(l){8-10}
\textbf{Method} &
\textbf{MAE} $\downarrow$ & \textbf{MSE}$\times _{10^{3}}\downarrow$ & \textbf{PCC} $\uparrow$ &
\textbf{MAE} $\downarrow$ & \textbf{MSE}$\times _{10^{0}}\downarrow$  & \textbf{PCC} $\uparrow$ &
\textbf{MAE} $\downarrow$ & \textbf{MSE}$\times _{10^{2}}\downarrow$ & \textbf{PCC} $\uparrow$ \\
\midrule
FT-Transformer                & 39.333 & 6.108 & 0.556 & 0.641 & 0.724 & 0.408 & 8.793 & 1.772 & 0.920 \\
+ ECA (full)                             & \textbf{38.665} & \textbf{5.790} & \textbf{0.598} & \textbf{0.631} & \textbf{0.712} & \textbf{0.420} & \textbf{7.976} & \textbf{1.582} & \textbf{0.930} \\
+ ECA (w/o SRA)                          & 39.208 & 5.994 & 0.575 & 0.637 & 0.725 & 0.410 & 8.377 & 1.695 & 0.920 \\
+ ECA (w/o DATS)                         & 38.906 & 6.037 & 0.561 & 0.645 & 0.740 & 0.418 & 8.630 & 1.709 & 0.927 \\
+ ECA (w/o DNPL)                         & 39.742 & 5.910 & 0.583 & 0.640 & 0.719 & 0.418 & 8.466 & 1.671 & 0.922 \\
\bottomrule
\end{tabular}
\end{adjustbox}
\end{table*}

\vspace{19pt}
\begin{figure}[h]
  \centering
  \includegraphics[width=\linewidth]{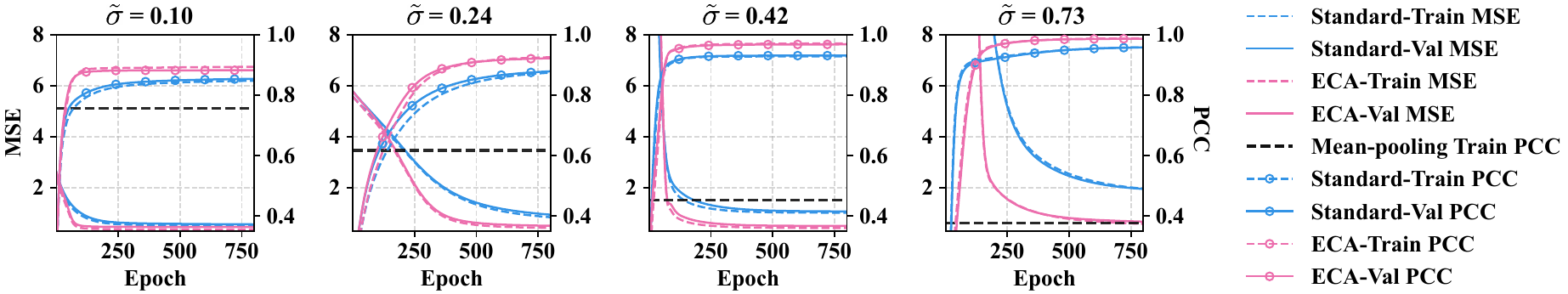}
  \caption{PCC and MSE curves on synthetic datasets with in-sample homogeneity $\tilde{\sigma}\in[0.10, 0.24, 0.42, 0.73]$.}
  \label{fig:synthetic2}
\end{figure}
\vspace{-4pt}

\subsection{Results and analysis}

\textbf{Synthetic Dataset.} \Cref{fig:synthetic2} shows case studies on the training and validation curves under different in-sample dispersion (with $\tilde{\sigma}$ = 0.10, 0.24, 0.42, and 0.73). The horizontal line indicates the PCC achieved by mean-pooling over input embeddings. This result confirms our theoretical study in three ways: 1) as the homogeneity intensifies, both the PCC of our ECA and baseline model decrease and the MSE get higher (task harder), and the achievable PCC improvement of convex (\emph{e.g.}, softmax) attention over mean-pooling decreases (\Cref{thm:pcc-gain-maintext}). 2) ECA consistently outperforms standard attention in both PCC and MSE across all four $\tilde{\sigma}$s, achieving PCC gains of $4.80\%, 5.76\%, 4.68\%, 3.05\%$ and MSE reductions of $20.3\%, 40.8\%, 54.0\%, 66.7\%$ (in order of increasing $\tilde{\sigma}$), showing its ability to explore beyond the convex hull and improving the PCC without compromising the MSE. 3) The PCC curve of ECA keeps improving and converges later and at a higher value than the standard attention baseline, indicating the effectiveness of our proposed DNPL in mitigating the PCC attenuation effect identified in \Cref{remark:bottlenecks}.

\begin{wraptable}[18]{l}{0.65\linewidth} 
\setlength\intextsep{5pt}  
\setlength\columnsep{12pt} 
\fontsize{9pt}{9.5pt}\selectfont                         
\caption{Results on MOSI. $^{\dagger}$ from THUIAR’s GitHub~\citep{yu2020ch, yu2021learning}; $^{*}$ from~\citep{hazarika2020misa}; $^{**}$ reproduced from public code with provided hyper-parameters.}
\label{tab:mosi-f1-mae-pcc}
\begin{tabular}{l|ccc}
\toprule
\textbf{Method} & F1 $\uparrow$ & MAE $\downarrow$ & PCC $\uparrow$ \\
\midrule
TFN$^{\dagger}$~\citep{zadeh2017tensor}         & 0.791 & 0.947 & 0.673 \\
LMF$^{*}$~\citep{liu2018efficient}               & 0.824 & 0.917 & 0.695 \\
EF-LSTM$^{\dagger}$~\citep{williams2018recognizing}     & 0.785 & 0.949 & 0.669 \\
LF-DNN$^{\dagger}$~\citep{williams2018dnn}      & 0.786 & 0.955 & 0.658 \\
Graph-MFN$^{\dagger}$~\citep{zadeh2018multimodal}   & 0.784 & 0.956 & 0.649 \\
MulT$^{*}$~\citep{tsai2019multimodal}              & 0.828 & 0.871 & 0.698 \\
MISA$^{\dagger}$~\citep{hazarika2020misa}        & 0.836 & 0.777 & 0.778 \\
ICCN$^{*}$~\citep{sun2020learning}              & 0.830 & 0.860 & 0.710 \\
DLF$^{**}$~\citep{wang2025dlf}              & 0.850 & 0.731 & 0.781 \\
ALMT$^{**}$~\citep{zhang2023learning}                           & \underline{0.851} & \underline{0.721} & 0.783 \\
\midrule
ALMT+$\mathcal{L}_{\mathrm{PCC}}$     & 0.834 & 0.731 & \underline{0.791} \\
ALMT+$\tilde{\mathcal{L}}_{\mathrm{PCC}}$ + ECA  & \textbf{0.859} & \textbf{0.695} & \textbf{0.806} \\
\bottomrule
\end{tabular}
\end{wraptable}

\textbf{UCI ML Repository Datasets.} Across three UCI tabular regression tasks, adapting our ECA module into the FT-Transformer yields consistent improvements in both magnitude (MSE and MAE) and shape (PCC) metrics. On Appliance, PCC increases by 0.042 and MSE decreases by $0.318\times10^3$; on Online News, PCC increases $0.012$ from $0.408$ while MSE decreases $0.012$ from $0.724$; on Superconductivity, PCC $0.920\!\to\!0.930$ and MSE $1.772\times10^{2}\!\to\!1.582\times10^{2}$. MAE also decreases across all datasets. These improvements provide strong evidence that addressing our identified PCC gradient limitation and the convex attention capacity limit substantially mitigates the PCC plateau without compromising MSE. The ablations further support the contribution of each components: removing DATS lowers Appliance PCC to $0.561$, while removing SRA reduces Online News PCC to $0.418$ and removing DNPL decreases Superconductivity PCC to $0.922$.

\textbf{Spatial Transcriptomic Dataset.} We follow the EGN~\citep{yang2023exemplar} setting and report three-fold results. \Cref{fig:3fold_expression} plots PCC and MSE curves for training and validation. Across all folds, integrating ECA consistently improves both metrics. In fold 2, the EGN baseline’s PCC flattens near epoch 4, but the MSE keeps decreasing, indicating a clear PCC plateau. In contrast, PCC of EGN+ECA continues to increase, effectively breaking the PCC plateau and improving final validation PCC by \(\sim16.51\%\). The same pattern holds in folds 0 and 1. Throughout training, EGN+ECA also achieves comparable or lower MSE than EGN alone, indicating that ECA successfully preserves magnitude information while achieving better PCC.~\Cref{tab:threefold-epoch49} summarizes the overall performance, where EGN+ECA achieves \(+14.64\%\) for PCC@F, \(+11.89\%\) for PCC@S, \(+13.81\%\) for PCC@M, and a \(9.83\%\) reduction in MSE, showing that ECA effectively and robustly alleviates the PCC plateau without compromising the MSE.

\begin{table}[t]
\centering
\caption{Three-fold regression PCC and MSE on 10xProteomic dataset.}
\fontsize{8pt}{9pt}\selectfont                    
\setlength{\tabcolsep}{3.5pt}       
\renewcommand{\arraystretch}{0.95}   
\begin{tabular*}{0.95\linewidth}{@{\extracolsep{\fill}}l|cccc}
\toprule
\textbf{Method} & PCC@F $\uparrow$ & PCC@S $\uparrow$ & PCC@M $\uparrow$ & MSE$\downarrow$ \\
\midrule
EGN        & 0.602 $\pm$ 0.160 & 0.647 $\pm$ 0.164 & 0.629 $\pm$ 0.135 & 0.056 $\pm$ 0.047 \\
EGN+ECA    & \textbf{0.690} $\pm$ 0.202 & \textbf{0.724} $\pm$ 0.191 & \textbf{0.716} $\pm$ 0.168 & \textbf{0.051} $\pm$ 0.048 \\
\bottomrule
\end{tabular*}
\label{tab:threefold-epoch49}
\end{table}
\begin{figure}[t]
  \centering
  \includegraphics[width=\linewidth]{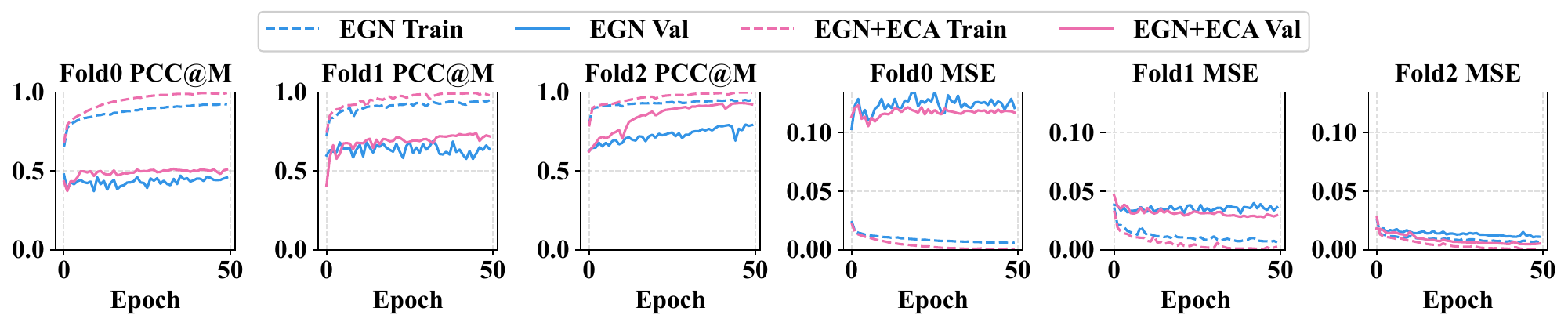}
  \caption{PCC@M and MSE curves for EGN baseline and EGN+ECA on 10xProteomic dataset under each fold. Curves for PCC@S and PCC@F please refer to~\Cref{app:st_pccs} and~\Cref{app:st_pccf}.}
  \label{fig:3fold_expression}
\end{figure}

\textbf{Multimodal Sentiment Analysis (MSA) Dataset.} Since ALMT optimizes MSE loss only, we test two settings: (i) ALMT with an additional PCC loss; and (ii) ALMT with ECA adapted into the video attention encoder and trained with the dispersion-normalized PCC loss.~\Cref{tab:mosi-f1-mae-pcc} reports results of 10 baselines. Adding a PCC term yields a small PCC gain but degrades F1 and MAE, reflecting the MSE-PCC conflict under strong in-sample homogeneity. In contrast, adding ECA improves all metrics, achieving a \(+2.3\%\) PCC increase without sacrificing F1 or MAE.

\section{Conclusion}
This work presents the first theoretical investigation into the PCC plateau phenomenon observed when training attention-based regression models with a joint MSE+PCC loss, particularly under high data homogeneity. Our analysis identified two fundamental limitations in standard softmax attention: conflict in optimization dynamics that attenuate the correlation gradient, and an achievable PCC bound imposed by convex aggregation. To address these bottlenecks, we introduced ECA, a novel plug-in framework incorporating mechanisms to stabilize optimization, adapt to homogeneity, and extrapolate beyond the convex hull. Comprehensive experiments validate our theoretical insights and demonstrate that ECA successfully breaks the PCC plateau, achieving significant correlation gains while maintaining competitive magnitude performance.

\clearpage

\section{Reproducibility Statement}

We support reproducibility as follows.

\paragraph{Theoretical Study}
The appendix has complete proofs for all theorems, lemmas, and corollaries in the paper. See \Cref{sec:preliminaries-app} for results from \Cref{sec:preliminaries}, \Cref{sec:gradient_analysis_app} for \Cref{sec:gradient_analysis}, and \Cref{pcc_bound_proof_app} for \Cref{pcc_bound_proof_}.

\paragraph{Dataset Processing}
Processing details for all datasets (synthetic and real-world) are listed in \Cref{app:expsetting}.

\paragraph{Code Reproducibility}
We include an anonymous zip file with implementations for the synthetic and spatial transcriptomics datasets along with the hyperparameters we use for review. For the spatial transcriptomic dataset, we follow the EGN baseline preprocessing protocol. Due to the limited space, we did not include the dataset. The README.md explains how to get the data and run the code. The complete code can be found in \url{https://github.com/jyan97/ECA-Extrapolative-Correlation-Attention}.

\bibliography{iclr2026_conference}
\bibliographystyle{iclr2026_conference}

\newpage
\appendix
\begin{center}
    \Large \textbf{Appendix}
\end{center}

\DeclarePairedDelimiterX{\inp}[2]{\langle}{\rangle}{#1, #2}

\allowdisplaybreaks

\section{Preliminaries} \label{sec:preliminaries-app}

\setcounter{theorem}{0}
\setcounter{proposition}{0}
\subsection{Proof for~\Cref{thm:mse_decomposition}}\label{app:proof_mse_decomposition}

\begin{proposition}[MSE Mean--std--correlation Decomposition]
\label{thm:mse_decomp_app}
Let $y, \hat y \in \mathbb{R}$ be the ground truth and predictions across $S$ samples. Let $\mu_y, \mu_{\hat y}$ be the empirical means, and $\sigma_y, \sigma_{\hat y}$ be the empirical standard deviations. The Mean Squared Error can be decomposed as:
\begin{align}
    \operatorname{MSE}(y,\hat y)
= (\mu_{\hat y}-\mu_y)^2
+ (\sigma_{\hat y}-\sigma_y)^2
+ 2\,\sigma_y\,\sigma_{\hat y}\,\bigl(1-\rho\bigr).
\end{align}
\end{proposition}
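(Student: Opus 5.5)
The plan is to expand the MSE around the empirical means using the centered quantities $a_s=y_s-\mu_y$ and $b_s=\hat y_s-\mu_{\hat y}$ from the problem setup. Throughout we use population-style ($1/S$) normalization for the variances, standard deviations and covariance, so that $\sigma_y^2=\tfrac1S\sum_s a_s^2$, $\sigma_{\hat y}^2=\tfrac1S\sum_s b_s^2$, and $\operatorname{Cov}(y,\hat y)=\tfrac1S\sum_s a_s b_s=\rho\,\sigma_y\sigma_{\hat y}$. The last identity is just the definition of $\rho$; if $\sigma_y\sigma_{\hat y}=0$, the correlation term vanishes regardless of how $\rho$ is defined, and the identity still holds.

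\textbf{Step 1 (bias--variance split).} Write $\hat y_s-y_s=(\mu_{\hat y}-\mu_y)+(b_s-a_s)$ and square. Averaging over $s$, the cross term is $2(\mu_{\hat y}-\mu_y)\cdot\tfrac1S\sum_s(b_s-a_s)$, which vanishes because centered quantities sum to zero. This gives
\begin{equation*}
\operatorname{MSE}(y,\hat y)=(\mu_{\hat y}-\mu_y)^2+\tfrac1S\textstyle\sum_s (b_s-a_s)^2 .
\end{equation*}

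\textbf{Step 2 (expand the centered term).} Expand $\tfrac1S\sum_s(b_s-a_s)^2=\sigma_{\hat y}^2+\sigma_y^2-2\rho\,\sigma_y\sigma_{\hat y}$ using the identities above. Then add and subtract $2\sigma_y\sigma_{\hat y}$:
\begin{equation*}
\sigma_{\hat y}^2+\sigma_y^2-2\sigma_y\sigma_{\hat y}+2\sigma_y\sigma_{\hat y}-2\rho\sigma_y\sigma_{\hat y}=(\sigma_{\hat y}-\sigma_y)^2+2\sigma_y\sigma_{\hat y}(1-\rho).
\end{equation*}
Combining this with Step 1 yields the claimed decomposition.

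There is no real obstacle here. The only point needing care is the normalization convention: the same $1/S$ factor must be used in the MSE, the variances and the covariance. If unbiased $1/(S-1)$ estimators were used for the standard deviations, the identity would pick up a factor $(S-1)/S$ on the last two terms. I would state the convention explicitly at the start to match the batch-level definitions in the problem setup.
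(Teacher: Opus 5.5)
Your proposal is correct and follows essentially the same route as the paper. Both proofs split the error into the mean offset plus the centered difference, drop the cross term because centered quantities sum to zero, expand using $\sigma_y^2$, $\sigma_{\hat y}^2$ and $\operatorname{Cov}(y,\hat y)=\rho\,\sigma_y\sigma_{\hat y}$, and then complete the square. Your explicit remarks on the common $1/S$ normalization and on the degenerate case $\sigma_y\sigma_{\hat y}=0$ are small clarifications that the paper leaves implicit; its proof even switches to $1/N$ notation partway through.
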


\begin{proof}
Write the error for sample $i$ as
\begin{align}
    y_i-\hat y_i
= \bigl[(y_i-\mu_y) - (\hat y_i-\mu_{\hat y})\bigr] + (\mu_y-\mu_{\hat y}).
\end{align}

Squaring and averaging over $i=1, \dots, S$ give us
\begin{align}
\operatorname{MSE}(y,\hat y)
=& \frac{1}{N}\sum_{i=1}^N (y_i-\hat y_i)^2\\
=& \frac{1}{N}\sum_{i=1}^N
\Bigl(\,(y_i-\mu_y)-(\hat y_i-\mu_{\hat y})\,\Bigr)^2
+(\mu_y-\mu_{\hat y})^2 \\
&+\; \frac{2(\mu_y-\mu_{\hat y})}{N}\sum_{i=1}^N
\Bigl((y_i-\mu_y)-(\hat y_i-\mu_{\hat y})\Bigr).
\end{align}
The last sum vanishes since $\sum_i(y_i-\mu_y)=\sum_i(\hat y_i-\mu_{\hat y})=0$. Expanding the remaining square and using the definitions of variances, $\operatorname{Cov}(y,\hat{y}) = \sigma_y\sigma_{\hat{y}}\rho$, we obtain
\begin{align}
    \operatorname{MSE}(y,\hat y)
&= (\mu_{\hat y}-\mu_y)^2 + \sigma_y^2 + \sigma_{\hat y}^2
- \frac{2}{N}\sum_{i=1}^N (y_i-\mu_y)(\hat y_i-\mu_{\hat y})\\
& = (\mu_{\hat y}-\mu_y)^2 + \sigma_y^2 + \sigma_{\hat y}^2 - 2\sigma_y\sigma_{\hat y}\rho.
\end{align}

Finally, rearranging $\sigma_y^2 + \sigma_{\hat y}^2 - 2\,\sigma_y\,\sigma_{\hat y}\,\rho = (\sigma_{\hat y}-\sigma_y)^2 + 2\,\sigma_y\,\sigma_{\hat y}\,(1-\rho)$ yields the claimed decomposition.
\end{proof}

\subsection{Proof for~\Cref{lemma:pcc_affine_main}}\label{app:proof_affine_invariant}

\setcounter{lemma}{0}

\begin{lemma}[Scaling Invariance of PCC]
\label{lemma:pcc_affine_app}
Let $m\in\mathbb{R}\setminus\{0\}$ and $n\in\mathbb{R}$. For any sample $\{(y_i,\hat y_i)\}_{i=1}^S$,
\begin{align}
    \operatorname{PCC}\bigl(y,m\hat y+n\bigr)
= \operatorname{sign}(m)\operatorname{PCC}(y,\hat y).
\end{align}

\begin{proof}
Using $\rho(u,v)=\frac{\operatorname{Cov}(u,v)}{\sigma_u\sigma_v}$ and $\sigma_{m\hat y+n}=|m|\,\sigma_{\hat y}$,
\begin{align}
\operatorname{PCC}\bigl(y,m\hat y+n\bigr)
&=\frac{\operatorname{Cov}(y,m\hat y+n)}{\sigma_y\,\sigma_{m\hat y+n}}
=\frac{m\,\operatorname{Cov}(y,\hat y)}{\sigma_y\,|m|\,\sigma_{\hat y}}\\
&=\frac{m}{|m|}\,\operatorname{PCC}(y,\hat y) = \operatorname{sign}(m)\operatorname{PCC}(y,\hat y)
\end{align}
\end{proof}

\end{lemma}

\section{Gradient Analysis of Correlation} \label{sec:gradient_analysis_app}

\subsection{Gradient of Pearson Correlation w.r.t.\ Attention Logits} \label{softmax_jacobian_app}
\begin{lemma}[Softmax Aggregator Jacobian]\label{lem:softmax-agg-app}
The derivative of the aggregated embedding $\mathbf{v}_s$ with respect to a pre-softmax logit $z_{si}$ is
$\partial\mathbf{v}_s/\partial z_{si} =\alpha_{si}(\mathbf h_{si}-\mathbf v_s)$ and consequently, $\partial\hat y_s / \partial z_{si} = \alpha_{si}\mathbf w^\top(\mathbf h_{si}-\mathbf v_s)$.
\end{lemma}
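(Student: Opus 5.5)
We compute the softmax Jacobian directly and then apply the chain rule. Write $\alpha_{sj}=e^{z_{sj}}/\sum_k e^{z_{sk}}$. Only the weights of sample $s$ depend on $z_{si}$, so everything stays within one sample.

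\emph{Step 1: softmax derivative.} Differentiating the quotient gives
\begin{equation*}
\frac{\partial \alpha_{sj}}{\partial z_{si}}=\alpha_{sj}\bigl(\delta_{ij}-\alpha_{si}\bigr),
\end{equation*}
where $\delta_{ij}$ is the Kronecker delta. The case $j=i$ yields $\alpha_{si}(1-\alpha_{si})$. The case $j\neq i$ yields $-\alpha_{sj}\alpha_{si}$, because only the normalizer depends on $z_{si}$.

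\emph{Step 2: derivative of $\mathbf v_s$.} The embeddings $\mathbf h_{sj}$ are treated as fixed with respect to $z_{si}$, since the logit is the variable of differentiation. Linearity of $\mathbf v_s=\sum_j\alpha_{sj}\mathbf h_{sj}$ then gives
\begin{equation*}
\frac{\partial\mathbf v_s}{\partial z_{si}}=\sum_j \alpha_{sj}(\delta_{ij}-\alpha_{si})\mathbf h_{sj}=\alpha_{si}\mathbf h_{si}-\alpha_{si}\sum_j\alpha_{sj}\mathbf h_{sj}=\alpha_{si}(\mathbf h_{si}-\mathbf v_s).
\end{equation*}

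\emph{Step 3: derivative of $\hat y_s$.} The prediction is $\hat y_s=\mathbf w^\top\mathbf v_s+c$, and the head parameters $\mathbf w,c$ do not depend on $z_{si}$. The chain rule therefore gives $\partial\hat y_s/\partial z_{si}=\mathbf w^\top\,\partial\mathbf v_s/\partial z_{si}=\alpha_{si}\mathbf w^\top(\mathbf h_{si}-\mathbf v_s)$.

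There is no real obstacle here. The only point needing care is the bookkeeping in Step 1, namely the diagonal versus off-diagonal terms. Collapsing $\sum_j\alpha_{sj}\mathbf h_{sj}$ back into $\mathbf v_s$ then completes the argument.
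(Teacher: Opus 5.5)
Your proposal is correct and follows the paper's proof essentially step for step. Both use the softmax Jacobian $\alpha_{sj}(\delta_{ij}-\alpha_{si})$, split the sum into $\alpha_{si}\mathbf h_{si}-\alpha_{si}\sum_j\alpha_{sj}\mathbf h_{sj}=\alpha_{si}(\mathbf h_{si}-\mathbf v_s)$, and then apply the chain rule through $\hat y_s=\mathbf w^\top\mathbf v_s+c$.
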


\begin{proof}\label{lem:softmax-agg-proof}
Within a fixed sample $s$, the aggregated embedding is $\mathbf{v}_s = \sum_{j=1}^{n_s} \alpha_{sj} \mathbf{h}_{sj}$. We first recall the derivative of the softmax function. The partial derivative of the $j$-th attention weight $\alpha_{sj}$ with respect to the $i$-th input logit $z_{si}$ is given by:
\begin{align}
    \frac{\partial \alpha_{sj}}{\partial z_{si}} = \alpha_{sj}(\delta_{ij}-\alpha_{si}),
\end{align}
where $\delta_{ij}$ is the Kronecker delta ($\delta_{ij}=1$ if $i=j$, and $0$ otherwise).

We can now compute the derivative of $\mathbf{v}_s$ with respect to $z_{si}$ using the chain rule:
\begin{align}
    \frac{\partial \mathbf v_s}{\partial z_{si}}
    &= \sum_{j=1}^{n_s} \frac{\partial \alpha_{sj}}{\partial z_{si}}\mathbf h_{sj} \\
    &= \sum_{j=1}^{n_s} \alpha_{sj}(\delta_{ij}-\alpha_{si})\mathbf h_{sj} \\
    &= \sum_{j=1}^{n_s} \alpha_{sj}\delta_{ij}\mathbf h_{sj} - \sum_{j=1}^{n_s} \alpha_{sj}\alpha_{si}\mathbf h_{sj}.
\end{align}
The first term simplifies because $\delta_{ij}$ is non-zero only when $j=i$:
\begin{align}
    \sum_{j=1}^{n_s} \alpha_{sj}\delta_{ij}\mathbf h_{sj} = \alpha_{si}\mathbf h_{si}.
\end{align}
In the second term, $\alpha_{si}$ is independent of the summation index $j$ and can be factored out:
\begin{align}
    \sum_{j=1}^{n_s} \alpha_{sj}\alpha_{si}\mathbf h_{sj} = \alpha_{si} \sum_{j=1}^{n_s} \alpha_{sj}\mathbf h_{sj} = \alpha_{si} \mathbf{v}_s.
\end{align}
Combining these results, we obtain:
\begin{align}
    \frac{\partial \mathbf v_s}{\partial z_{si}} = \alpha_{si}\mathbf h_{si} - \alpha_{si}\mathbf v_s = \alpha_{si}(\mathbf h_{si}-\mathbf v_s).
\end{align}
Consequently, since the prediction is $\hat{y}_s = \mathbf{w}^\top \mathbf{v}_s + c$, its derivative is:
\begin{equation}\label{eq:dhaty-dz}
\frac{\partial \hat y_s}{\partial z_{si}}
=\mathbf w^\top\frac{\partial \mathbf v_s}{\partial z_{si}}
=\alpha_{si}\,\mathbf w^\top(\mathbf h_{si}-\mathbf v_s).
\end{equation}
\end{proof}

\begin{theorem}[Gradient of PCC w.r.t. Attention Logits]\label{thm:drdz-main-app}
For any $s$, the derivative of Pearson correlation $\rho$ with respect to a pre-softmax logit $z_{si}$ is

\begin{equation}\label{eq:drdz-correct-app}
\frac{\partial \rho}{\partial z_{si}}
=
\frac{1}{S\sigma_{\hat y}}\left(\frac{a_s}{\sigma_y}-\rho\frac{b_s}{\sigma_{\hat y}}\right)
\alpha_{si}
\mathbf w^\top(\mathbf h_{si}-\mathbf v_s).
\end{equation}
\end{theorem}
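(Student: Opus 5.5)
The plan is a chain rule through the single prediction $\hat y_s$. A logit $z_{si}$ belonging to sample $s$ affects only $\mathbf v_s$, and hence only $\hat y_s$; all other predictions $\hat y_{s'}$ with $s'\neq s$ are independent of it. Therefore
\[
\frac{\partial\rho}{\partial z_{si}}=\frac{\partial\rho}{\partial\hat y_s}\cdot\frac{\partial\hat y_s}{\partial z_{si}},
\]
and by \Cref{lem:softmax-agg-app} the second factor is $\alpha_{si}\mathbf w^\top(\mathbf h_{si}-\mathbf v_s)$. The whole task is then to show
\[
\frac{\partial\rho}{\partial\hat y_s}=\frac{1}{S\sigma_{\hat y}}\left(\frac{a_s}{\sigma_y}-\rho\frac{b_s}{\sigma_{\hat y}}\right).
\]

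To compute $\partial\rho/\partial\hat y_s$, write $\rho=C/(\sigma_y\sigma_{\hat y})$ with $C=\frac1S\sum_t a_t b_t$ and $\sigma_{\hat y}^2=\frac1S\sum_t b_t^2$, using the population (divide-by-$S$) convention as in the proof of \Cref{thm:mse_decomp_app}. Here $\sigma_y$ is a constant.

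The derivative of the centered predictions is $\partial b_t/\partial\hat y_s=\delta_{ts}-\frac1S$. This gives:
\begin{itemize}
\item $\partial C/\partial\hat y_s=\frac1S\sum_t a_t(\delta_{ts}-\frac1S)=\frac{a_s}{S}$, since $\sum_t a_t=0$.
\item $\partial\sigma_{\hat y}^2/\partial\hat y_s=\frac{2b_s}{S}$ by the same argument with $\sum_t b_t=0$, so $\partial\sigma_{\hat y}/\partial\hat y_s=\frac{b_s}{S\sigma_{\hat y}}$.
\end{itemize}

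The quotient rule then gives
\[
\frac{\partial\rho}{\partial\hat y_s}=\frac{1}{\sigma_y}\left(\frac{a_s}{S\sigma_{\hat y}}-\frac{C\,b_s}{S\sigma_{\hat y}^3}\right).
\]
Substituting $C=\rho\,\sigma_y\sigma_{\hat y}$ into the second term yields $\frac{1}{S\sigma_{\hat y}}\bigl(\frac{a_s}{\sigma_y}-\rho\frac{b_s}{\sigma_{\hat y}}\bigr)$, which is the required factor. Multiplying by the Jacobian from \Cref{lem:softmax-agg-app} finishes the proof.

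The only delicate point is the mean-centering. The cross terms involving $\frac1S$ cancel only because centered quantities sum to zero, so this must be stated explicitly. We also assume $\sigma_{\hat y}>0$, so that $\rho$ is differentiable, and $\sigma_y>0$. If a sample-variance ($S-1$) convention were used instead, the factors of $S$ would cancel differently; we therefore fix the $1/S$ convention at the outset, consistent with \Cref{thm:mse_decomp_app}.
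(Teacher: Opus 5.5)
Your proposal is correct and follows essentially the same route as the paper: the quotient rule on $\rho=\mathrm{Cov}(y,\hat y)/(\sigma_y\sigma_{\hat y})$ with the $1/S$ convention, the $1/S$ cross terms vanishing because $\sum_t a_t=\sum_t b_t=0$, and then multiplication by the softmax Jacobian $\alpha_{si}\mathbf w^\top(\mathbf h_{si}-\mathbf v_s)$. The only difference is cosmetic: you factor the chain rule through $\partial\rho/\partial\hat y_s$ at the start, whereas the paper carries $\partial\hat y_t/\partial z_{si}$ through the sums and collapses them using $\partial\hat y_t/\partial z_{si}=0$ for $t\neq s$.
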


\begin{proof}\label{thm:drdz-proof-app}

We express the Pearson correlation $\rho$ as the ratio $\rho = N/D$, where $N$ is the covariance and $D$ is the product of standard deviations.
\begin{align}
    N &:= \operatorname{Cov}(y,\hat y)=\frac{1}{S}\sum_{t=1}^S a_t b_t, \quad
    D := \sigma_y\sigma_{\hat y}.
\end{align}
Recall that $a_t := y_t - \mu_y$ and $b_t := \hat{y}_t - \mu_{\hat{y}}$ are the centered targets and predictions, respectively, satisfying $\sum_t a_t = 0$ and $\sum_t b_t = 0$.

By the quotient rule, the derivative of $\rho$ is:
\begin{align}\label{eq:quotient_rule}
    \frac{\partial \rho}{\partial z_{si}} = \frac{1}{D}\frac{\partial N}{\partial z_{si}} - \frac{N}{D^2}\frac{\partial D}{\partial z_{si}} = \frac{1}{D}\left(\frac{\partial N}{\partial z_{si}} - \rho \frac{\partial D}{\partial z_{si}}\right).
\end{align}

We compute the derivatives of $N$ and $D$ separately. Note that the logit $z_{si}$ only directly affects the prediction of sample $s$, i.e., $\partial \hat{y}_t / \partial z_{si} = 0$ if $t \neq s$.

\paragraph{Step 1: Derivative of the Covariance ($N$).}
\begin{align}
\frac{\partial N}{\partial z_{si}} = \frac{1}{S}\sum_{t=1}^S a_t\,\frac{\partial b_t}{\partial z_{si}}.
\end{align}
We expand the derivative of the centered prediction $b_t = \hat{y}_t - \mu_{\hat{y}}$:
\begin{align}
    \frac{\partial b_t}{\partial z_{si}} = \frac{\partial \hat y_t}{\partial z_{si}} - \frac{\partial \mu_{\hat{y}}}{\partial z_{si}} = \frac{\partial \hat y_t}{\partial z_{si}} - \frac{1}{S}\sum_{u=1}^S\frac{\partial \hat y_u}{\partial z_{si}}.
\end{align}
Substituting this back into the expression for $\partial N/\partial z_{si}$:
\begin{align}
\frac{\partial N}{\partial z_{si}}
&=\frac{1}{S}\sum_t a_t\left(\frac{\partial \hat y_t}{\partial z_{si}}-\frac{1}{S}\sum_{u=1}^S\frac{\partial \hat y_u}{\partial z_{si}}\right) \\
&=\frac{1}{S}\left(\sum_t a_t\frac{\partial \hat y_t}{\partial z_{si}}\right) - \frac{1}{S^2}\left(\sum_t a_t\right)\left(\sum_{u=1}^S\frac{\partial \hat y_u}{\partial z_{si}}\right).
\end{align}
Since the targets are centered ($\sum_t a_t = 0$), the second term vanishes:
\begin{align}
\frac{\partial N}{\partial z_{si}} = \frac{1}{S}\sum_t a_t\,\frac{\partial \hat y_t}{\partial z_{si}}.
\end{align}
Since $\partial \hat{y}_t / \partial z_{si} = 0$ for $t \neq s$, the summation collapses to a single term:
\begin{align}\label{eq:dN_dz}
\frac{\partial N}{\partial z_{si}} = \frac{1}{S}a_s\,\frac{\partial \hat y_s}{\partial z_{si}}.
\end{align}

\paragraph{Step 2: Derivative of the Standard Deviation Product ($D$).}
Since $\sigma_y$ is constant with respect to $z_{si}$, we have $\partial D/\partial z_{si} = \sigma_y (\partial \sigma_{\hat y}/\partial z_{si})$. To find the derivative of $\sigma_{\hat y}$, we first differentiate the variance $\sigma_{\hat y}^2 = \frac{1}{S}\sum_t b_t^2$.
\begin{align}
\frac{\partial \sigma_{\hat y}^2}{\partial z_{si}} = \frac{\partial}{\partial z_{si}}\left(\frac{1}{S}\sum_t b_t^2\right) = \frac{1}{S}\sum_t 2b_t \frac{\partial b_t}{\partial z_{si}}.
\end{align}
Similar to Step 1, we substitute the expression for $\partial b_t/\partial z_{si}$ and use the fact that the predictions are centered ($\sum_t b_t = 0$):
\begin{align}
\frac{\partial \sigma_{\hat y}^2}{\partial z_{si}} &= \frac{2}{S}\sum_t b_t\left(\frac{\partial \hat y_t}{\partial z_{si}}-\frac{1}{S}\sum_{u=1}^S\frac{\partial \hat y_u}{\partial z_{si}}\right) \\
&= \frac{2}{S}\left(\sum_t b_t\frac{\partial \hat y_t}{\partial z_{si}}\right) - \frac{2}{S^2}\left(\sum_t b_t\right)\left(\sum_{u=1}^S\frac{\partial \hat y_u}{\partial z_{si}}\right) \\
&= \frac{2}{S} \sum_t b_t\frac{\partial \hat y_t}{\partial z_{si}}.
\end{align}
Again, since $\partial \hat{y}_t / \partial z_{si} = 0$ for $t \neq s$:
\begin{align}
\frac{\partial \sigma_{\hat y}^2}{\partial z_{si}} = \frac{2}{S}\,b_s\,\frac{\partial \hat y_s}{\partial z_{si}}.
\end{align}
We now use the chain rule: $\frac{\partial \sigma_{\hat y}^2}{\partial z_{si}} = 2\sigma_{\hat y}\frac{\partial \sigma_{\hat y}}{\partial z_{si}}$. Equating the two expressions and solving for $\frac{\partial \sigma_{\hat y}}{\partial z_{si}}$ (assuming $\sigma_{\hat y} > 0$):
\begin{align}
    2\sigma_{\hat y}\frac{\partial \sigma_{\hat y}}{\partial z_{si}} = \frac{2}{S}\,b_s\,\frac{\partial \hat y_s}{\partial z_{si}} \quad \implies \quad \frac{\partial \sigma_{\hat y}}{\partial z_{si}} = \frac{b_s}{S\sigma_{\hat y}}\frac{\partial \hat y_s}{\partial z_{si}}.
\end{align}
Therefore, the derivative of the denominator $D$ is:
\begin{align}\label{eq:dD_dz}
\frac{\partial D}{\partial z_{si}} = \sigma_y \frac{\partial \sigma_{\hat y}}{\partial z_{si}} = \frac{\sigma_y b_s}{S\sigma_{\hat y}}\frac{\partial \hat y_s}{\partial z_{si}}.
\end{align}

\paragraph{Step 3: Combining the results.}
We substitute the derivatives of $N$ (\Cref{eq:dN_dz}) and $D$ (\Cref{eq:dD_dz}) back into the quotient rule formula (\Cref{eq:quotient_rule}).
\begin{align}
\frac{\partial \rho}{\partial z_{si}}
&= \frac{1}{D}\left(\frac{\partial N}{\partial z_{si}} - \rho \frac{\partial D}{\partial z_{si}}\right) \\
&= \frac{1}{\sigma_y\sigma_{\hat y}}\left(\frac{a_s}{S}\frac{\partial \hat y_s}{\partial z_{si}} - \rho \frac{\sigma_y b_s}{S\sigma_{\hat y}}\frac{\partial \hat y_s}{\partial z_{si}}\right) \\
&= \frac{1}{S\sigma_y\sigma_{\hat y}}\left(a_s - \rho \frac{\sigma_y b_s}{\sigma_{\hat y}}\right)\frac{\partial \hat y_s}{\partial z_{si}} \\
&= \frac{1}{S\sigma_{\hat y}}\left(\frac{a_s}{\sigma_y} - \rho \frac{b_s}{\sigma_{\hat y}}\right)\frac{\partial \hat y_s}{\partial z_{si}}.
\end{align}
Finally, we substitute the expression for $\partial \hat{y}_s/\partial z_{si}$ derived in~\Cref{lem:softmax-agg-app} (\Cref{eq:dhaty-dz}):
\begin{align}
\frac{\partial \rho}{\partial z_{si}}
=
\frac{1}{S\sigma_{\hat y}}\left(\frac{a_s}{\sigma_y}-\rho\frac{b_s}{\sigma_{\hat y}}\right)
\alpha_{si}
\mathbf w^\top(\mathbf h_{si}-\mathbf v_s).
\end{align}
This concludes the proof.
\end{proof}

\section{Proofs for Optimization Dynamics Analysis}
\label{sec:gradient_analysis_proofs_app}

\subsection{Proof of Lemma~\ref{lem:dMSEdz-main} (Gradient of MSE w.r.t. Attention Logits)}
\begin{proof}
The Mean Squared Error (MSE) loss is defined as:
\begin{equation}
\mathcal{L}_{\mathrm{MSE}} = \frac{1}{S}\sum_{k=1}^{S} (y_k - \hat y_k)^2.
\end{equation}
We compute the derivative with respect to the attention logit $z_{si}$ using the chain rule:
\begin{equation}
\frac{\partial \mathcal{L}_{\mathrm{MSE}}}{\partial z_{si}} = \sum_{k=1}^{S} \frac{\partial \mathcal{L}_{\mathrm{MSE}}}{\partial \hat y_k} \frac{\partial \hat y_k}{\partial z_{si}}.
\end{equation}
The derivative of the loss w.r.t. the prediction $\hat y_k$ is:
\begin{equation}
\frac{\partial \mathcal{L}_{\mathrm{MSE}}}{\partial \hat y_k} = \frac{1}{S} \cdot 2(y_k - \hat y_k) \cdot (-1) = \frac{2}{S}(\hat y_k - y_k).
\end{equation}
The derivative of the prediction $\hat y_k$ w.r.t. the logit $z_{si}$ is non-zero only if $k=s$. From Lemma~\ref{lem:softmax-agg-main}, we have:
\begin{equation}
\frac{\partial\hat y_s}{\partial z_{si}} = \alpha_{si}\mathbf w^\top(\mathbf h_{si}-\mathbf v_s).
\end{equation}
Combining these results:
\begin{equation}
\frac{\partial \mathcal{L}_{\mathrm{MSE}}}{\partial z_{si}} = \frac{2}{S}(\hat y_s - y_s) \alpha_{si}\mathbf w^\top(\mathbf h_{si}-\mathbf v_s).
\end{equation}
\end{proof}

\subsection{Proof of Corollary~\ref{cor:grad_ratio} (PCC/MSE Gradient Ratio Decay)} \label{app:ratio_decay_proof}
\begin{proof}
We analyze the ratio of the global scaling factors $g_s^{\mathrm{MSE}}$ and $g_s^{\mathrm{PCC}}$ identified in Section~\ref{sec:gradient_analysis}:
\begin{align}
g_s^{\mathrm{MSE}} &= 2(\hat y_s - y_s), \\
g_s^{\mathrm{PCC}} &= \frac{1}{\sigma_{\hat y}}
\left(\frac{a_s}{\sigma_y}-\rho\frac{b_s}{\sigma_{\hat y}}\right).
\end{align}
We analyze their typical scale across samples using their Root Mean Square (RMS) values. We denote the empirical average over the batch as $\mathbb{E}_s[\cdot]$.

\paragraph{Step 1: RMS scale of the PCC global factor.}
We define normalized variables $A_s := a_s/\sigma_y$ and $B_s := b_s/\sigma_{\hat y}$. By construction, $A_s$ and $B_s$ have zero mean and unit variance ($\operatorname{Var}(A_s) = \operatorname{Var}(B_s) = 1$), and their covariance is the PCC ($\operatorname{Cov}(A_s,B_s) = \rho$).
The term inside the parenthesis of $g_s^{\mathrm{PCC}}$ is $A_s - \rho B_s$. Its variance is:
\begin{align}
\operatorname{Var}(A_s - \rho B_s)
&= \operatorname{Var}(A_s) + \rho^2 \operatorname{Var}(B_s) - 2\rho\,\operatorname{Cov}(A_s,B_s) \\
&= 1 + \rho^2 - 2\rho^2 = 1 - \rho^2.
\end{align}
Since the mean of $A_s - \rho B_s$ is zero, the RMS magnitude of $g_s^{\mathrm{PCC}}$ across samples is:
\begin{equation}
\operatorname{RMS}_s(g_s^{\mathrm{PCC}}) = \sqrt{\mathbb{E}_s[(g_s^{\mathrm{PCC}})^2]}
= \frac{1}{\sigma_{\hat y}}\sqrt{\operatorname{Var}(A_s - \rho B_s)}
= \frac{\sqrt{1-\rho^2}}{\sigma_{\hat y}}.
\label{eq:pcc-rms-global}
\end{equation}

\paragraph{Step 2: RMS scale of the MSE global factor.}
For the MSE global factor, we have:
\begin{align}
\operatorname{RMS}_s(g_s^{\mathrm{MSE}})
&= \sqrt{\mathbb{E}_s[[2(\hat y_s - y_s)]^2]}
= 2\sqrt{\mathbb{E}_s[(\hat y_s - y_s)^2]} \\
&= 2\sqrt{\operatorname{MSE}(\mathbf y,\hat{\mathbf y})}.
\label{eq:mse-rms-global}
\end{align}

\paragraph{Step 3: Bounding the ratio of RMS global factors.}
The ratio $r_{\mathrm{global}}$ is defined as:
\begin{equation}
r_{\mathrm{global}}
= \frac{\operatorname{RMS}_s(g_s^{\mathrm{PCC}})}{\operatorname{RMS}_s(g_s^{\mathrm{MSE}})}
= \frac{\sqrt{1-\rho^2}}{2\sigma_{\hat y}\sqrt{\operatorname{MSE}}}.
\label{eq:r-global-def}
\end{equation}
We use the MSE decomposition (Theorem~\ref{thm:mse_decomposition}):
\begin{equation}
\operatorname{MSE}(\mathbf y,\hat{\mathbf{y}})
= (\mu_{\hat{\mathbf{y}}}-\mu_{\mathbf{y}})^2
+ (\sigma_{\hat{\mathbf{y}}}-\sigma_{\mathbf{y}})^2
+ 2\,\sigma_{\mathbf{y}}\sigma_{\hat{\mathbf{y}}}(1-\rho).
\end{equation}
Since all terms are non-negative, we obtain a lower bound for MSE:
\begin{equation}
\operatorname{MSE}(\mathbf y,\hat{\mathbf y})
\ge 2\,\sigma_y\,\sigma_{\hat y}(1-\rho).
\label{eq:mse-lower-bound-app}
\end{equation}
We assume $\rho \in [0,1]$, which is typical during training. We use the inequality:
\begin{equation}
1 - \rho^2 = (1-\rho)(1+\rho) \le 2(1-\rho) \implies \sqrt{1-\rho^2} \le \sqrt{2}\sqrt{1-\rho}.
\label{eq:rho-ineq-app}
\end{equation}
Plugging the bounds from~\eqref{eq:mse-lower-bound-app} and~\eqref{eq:rho-ineq-app} into the ratio definition~\eqref{eq:r-global-def}:
\begin{align}
r_{\mathrm{global}}
&\le \frac{\sqrt{2}\sqrt{1-\rho}}{2\sigma_{\hat y}\sqrt{2\sigma_y\sigma_{\hat y}(1-\rho)}} \\
&= \frac{\sqrt{2}\sqrt{1-\rho}}{2\sigma_{\hat y}\cdot\sqrt{2}\sqrt{\sigma_y\sigma_{\hat y}}\sqrt{1-\rho}} \\
&= \frac{1}{2\sigma_{\hat y}\sqrt{\sigma_y\sigma_{\hat y}}}
= \frac{1}{2\sqrt{\sigma_y}}\cdot\frac{1}{\sigma_{\hat y}^{3/2}}.
\end{align}
This completes the proof.
\end{proof}

\subsection{Derivation of the Gradient Magnitude Bound} \label{sec:gradient_bound_app}
\begin{lemma}[Within-sample Dispersion Bound] \label{lem:dispersion_bound_app}

Recall the definitions $\boldsymbol\mu_s=\tfrac{1}{n_s}\sum_{j=1}^{n_s}\mathbf h_{sj}$ (within-sample mean) and
$\sigma_s^2=\tfrac{1}{n_s}\sum_{j=1}^{n_s}\|\mathbf h_{sj}-\boldsymbol\mu_s\|^2$ (within-sample variance).
Also recall $\mathbf v_s=\sum_{j=1}^{n_s}\alpha_{sj}\mathbf h_{sj}$ where $\alpha_{sj}\ge 0$ and $\sum_j\alpha_{sj}=1$.
Then for every $i\in\{1,\dots,n_s\}$,
\[
\|\mathbf h_{si}-\mathbf v_s\| \;\le\; 2\sqrt{n_s}\,\sigma_s.
\]
\end{lemma}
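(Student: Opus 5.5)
The plan is to insert the within-sample mean $\boldsymbol\mu_s$ as a pivot and apply the triangle inequality:
\[
\|\mathbf h_{si}-\mathbf v_s\| \le \|\mathbf h_{si}-\boldsymbol\mu_s\| + \|\boldsymbol\mu_s-\mathbf v_s\|.
\]
This splits the claim into two pieces. The first term is controlled by the convex-hull radius $R_s=\max_j\|\mathbf h_{sj}-\boldsymbol\mu_s\|$. The second term concerns a convex combination, so it should also be controlled by $R_s$.

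For the first term, I bound $R_s$ by $\sqrt{n_s}\,\sigma_s$. This is the upper half of the inequality $\sigma_s\le R_s\le\sqrt{n_s}\,\sigma_s$ stated in the problem setup, whose proof is deferred to \Cref{app:lemma_translate}. I will simply invoke it. If a self-contained argument is preferred, it is one line: for each $i$,
\[
\|\mathbf h_{si}-\boldsymbol\mu_s\|^2\le\sum_{j=1}^{n_s}\|\mathbf h_{sj}-\boldsymbol\mu_s\|^2=n_s\sigma_s^2 .
\]

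For the second term, I use $\sum_j\alpha_{sj}=1$ to write
\[
\mathbf v_s-\boldsymbol\mu_s=\sum_j\alpha_{sj}(\mathbf h_{sj}-\boldsymbol\mu_s).
\]
Since $\alpha_{sj}\ge0$, the triangle inequality (equivalently, convexity of the norm) gives
\[
\|\mathbf v_s-\boldsymbol\mu_s\|\le\sum_j\alpha_{sj}\|\mathbf h_{sj}-\boldsymbol\mu_s\|\le R_s\le\sqrt{n_s}\,\sigma_s .
\]
Adding the two bounds yields $\|\mathbf h_{si}-\mathbf v_s\|\le 2R_s\le 2\sqrt{n_s}\,\sigma_s$, which is the claim.

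There is no real obstacle here. The only step needing care is the reduction of the max-deviation $R_s$ to the RMS dispersion $\sigma_s$, since this is where the $\sqrt{n_s}$ factor is lost. That step rests on dropping all but one nonnegative summand, as shown above.

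The resulting bound is not tight. Any two points of the hull are within distance $2R_s$ of each other, so this argument cannot do better than $2R_s$, but the statement only asks for the stated constant.
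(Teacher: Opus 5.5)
Your proof is correct and matches the paper's argument step for step. Like the paper, you pivot through $\boldsymbol\mu_s$ with the triangle inequality, bound $\|\mathbf h_{si}-\boldsymbol\mu_s\|$ by $\sqrt{n_s}\,\sigma_s$ by dropping nonnegative summands, and bound $\|\mathbf v_s-\boldsymbol\mu_s\|$ through convexity of the norm and the maximum deviation. Keep the self-contained one-line bound rather than only citing the Dispersion--Radius lemma, because the paper states that lemma without giving a proof.
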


\begin{proof}
Fix sample $s$ and index $i$. We use the triangle inequality by inserting the within-sample mean $\boldsymbol\mu_s$:
\begin{align}\label{eq:triangle_ineq_dispersion}
\|\mathbf h_{si}-\mathbf v_s\|
= \|\mathbf h_{si}-\boldsymbol\mu_s + \boldsymbol\mu_s-\mathbf v_s\|
\;\le\; \|\mathbf h_{si}-\boldsymbol\mu_s\| + \|\mathbf v_s-\boldsymbol\mu_s\|.
\end{align}

We bound each term on the right-hand side separately.

\paragraph{Term 1: $\|\mathbf h_{si}-\boldsymbol\mu_s\|$.}
We first bound the deviation by the maximum deviation within the sample:
\begin{align}
\|\mathbf h_{si}-\boldsymbol\mu_s\| \;\le\; \max_j \|\mathbf h_{sj}-\boldsymbol\mu_s\|.
\end{align}
The maximum of non-negative numbers is bounded by the square root of the sum of their squares (i.e., $x_k^2 \le \sum_j x_j^2$ implies $x_k \le \sqrt{\sum_j x_j^2}$):
\begin{align}
\max_j \|\mathbf h_{sj}-\boldsymbol\mu_s\|
\;\le\; \sqrt{\sum_{j=1}^{n_s}\|\mathbf h_{sj}-\boldsymbol\mu_s\|^2}.
\end{align}
By the definition of $\sigma_s^2$, the sum of squares is $n_s\sigma_s^2$. Thus,
\begin{align}\label{eq:bound_term1}
\|\mathbf h_{si}-\boldsymbol\mu_s\| \;\le\; \sqrt{n_s\sigma_s^2} = \sqrt{n_s}\,\sigma_s.
\end{align}

\paragraph{Term 2: $\|\mathbf v_s-\boldsymbol\mu_s\|$.}
We express the deviation of the aggregated embedding $\mathbf{v}_s$ from the mean $\boldsymbol\mu_s$. Since $\sum_j \alpha_{sj}=1$, we have $\boldsymbol\mu_s = \sum_j \alpha_{sj}\boldsymbol\mu_s$.
\begin{align}
\mathbf v_s-\boldsymbol\mu_s = \sum_{j=1}^{n_s}\alpha_{sj}\mathbf h_{sj} - \sum_{j=1}^{n_s}\alpha_{sj}\boldsymbol\mu_s = \sum_{j=1}^{n_s}\alpha_{sj}(\mathbf h_{sj}-\boldsymbol\mu_s).
\end{align}
Using the convexity of the norm (Jensen's inequality):
\begin{align}
\|\mathbf v_s-\boldsymbol\mu_s\|
= \Big\|\sum_{j=1}^{n_s}\alpha_{sj}(\mathbf h_{sj}-\boldsymbol\mu_s)\Big\|
\;\le\; \sum_{j=1}^{n_s}\alpha_{sj}\,\|\mathbf h_{sj}-\boldsymbol\mu_s\|.
\end{align}
This weighted average is bounded by the maximum element:
\begin{align}
\sum_{j=1}^{n_s}\alpha_{sj}\,\|\mathbf h_{sj}-\boldsymbol\mu_s\|
\;\le\; \max_j \|\mathbf h_{sj}-\boldsymbol\mu_s\|.
\end{align}
As established for Term 1 (\Cref{eq:bound_term1}), the maximum deviation is bounded by $\sqrt{n_s}\,\sigma_s$. Therefore,
\begin{align}\label{eq:bound_term2}
\|\mathbf v_s-\boldsymbol\mu_s\|\;\le\;\sqrt{n_s}\,\sigma_s.
\end{align}

\paragraph{Conclusion.}
Substituting the bounds from ~\Cref{eq:bound_term1} and ~\Cref{eq:bound_term2} into ~\Cref{eq:triangle_ineq_dispersion}:
\[
\|\mathbf h_{si}-\mathbf v_s\|
\;\le\; \sqrt{n_s}\,\sigma_s + \sqrt{n_s}\,\sigma_s
\;=\; 2\sqrt{n_s}\,\sigma_s.\qedhere
\]
\end{proof}

\begin{lemma}[Magnitude Bound of a Centered, Unit-variance Vector]\label{lem:max-centered}
Let $x_1,\dots,x_S\in\mathbb R$ satisfy $\sum_{s=1}^S x_s=0$ and $\frac{1}{S}\sum_{s=1}^S x_s^2=1$ (equivalently $\sum_{s=1}^S x_s^2=S$). Then for every $j\in\{1,\dots,S\}$, we have $|x_j|\le \sqrt{S-1}$.
\end{lemma}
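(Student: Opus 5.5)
Fix an index $j$. The plan is to use the centering constraint to rewrite $x_j$ in terms of the remaining $S-1$ entries, and then bound $x_j^2$ by Cauchy--Schwarz against the fixed total sum of squares.

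First, from $\sum_s x_s=0$ we get $x_j=-\sum_{s\neq j}x_s$. Applying Cauchy--Schwarz to this sum of $S-1$ terms gives
\begin{align}
x_j^2=\Big(\sum_{s\neq j}x_s\Big)^2\le (S-1)\sum_{s\neq j}x_s^2 .
\end{align}

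Next, the normalization $\sum_s x_s^2=S$ gives $\sum_{s\neq j}x_s^2=S-x_j^2$. Substituting this in yields
\begin{align}
x_j^2\le (S-1)(S-x_j^2),
\end{align}
which rearranges to $S\,x_j^2\le S(S-1)$. Hence $x_j^2\le S-1$, and therefore $|x_j|\le\sqrt{S-1}$.

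There is no real obstacle here. The only step needing care is substituting the unit-variance constraint so that $x_j^2$ appears on both sides, and keeping the factor $S-1$ rather than $S$. The resulting bound is attained at $x=(\sqrt{S-1},-1/\sqrt{S-1},\dots,-1/\sqrt{S-1})$, since this vector is centered and satisfies $(S-1)+(S-1)\cdot\frac{1}{S-1}=S$. This confirms the constant $\sqrt{S-1}$ is sharp. It is also consistent with the factor $\sqrt{S-1}$ that appears later in Corollary~\ref{cor:mag-1plusr}, where the lemma is applied to $A_s=a_s/\sigma_y$ and $B_s=b_s/\sigma_{\hat y}$.
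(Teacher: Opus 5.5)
Your proposal is correct and follows essentially the same route as the paper's proof: you write $x_j=-\sum_{s\neq j}x_s$, apply Cauchy--Schwarz to get $x_j^2\le(S-1)(S-x_j^2)$, and rearrange to $x_j^2\le S-1$. Your extremal example showing the constant $\sqrt{S-1}$ is sharp (for $S\ge 2$) is a correct addition that the paper does not include.
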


\begin{proof}
Fix an index $j$. Since the vector is centered ($\sum_{s=1}^S x_s=0$), we can express $x_j$ in terms of the other elements: $x_j=-\sum_{s\neq j} x_s$.
We analyze the squared magnitude $x_j^2$ using the Cauchy--Schwarz inequality and view the summation as a dot product between a vector of ones $\mathbf{1} \in \mathbb{R}^{S-1}$ and the vector $(x_s)_{s\neq j} \in \mathbb{R}^{S-1}$.
\begin{align}
x_j^2 = \Bigl(\sum_{s\neq j} 1\cdot x_s\Bigr)^2 \le \Bigl(\sum_{s\neq j} 1^2\Bigr) \Bigl(\sum_{s\neq j} x_s^2\Bigr) 
= (S-1)\sum_{s\neq j} x_s^2.
\end{align}
We use the unit-variance condition, $\sum_{s=1}^S x_s^2 = S$. Therefore, $\sum_{s\neq j} x_s^2 = S - x_j^2$. Substituting this into the inequality:
\begin{align}
x_j^2 &\le (S-1)\bigl(S - x_j^2\bigr) = S(S-1) - (S-1)x_j^2.
\end{align}
Rearranging the terms to isolate $x_j^2$:
\begin{align}
x_j^2 + (S-1)x_j^2 &\le S(S-1) \\
S\,x_j^2 &\le S(S-1).
\end{align}
Dividing by $S$ (which is positive) gives $x_j^2 \le S-1$. Taking the square root yields the desired bound:
\[
|x_j|\le \sqrt{S-1}.\qedhere
\]
\end{proof}

\begin{corollary}[Gradient Magnitude Bound]\label{cor:mag-1plusr-app}
The magnitude of the PCC gradient in~\Cref{thm:drdz-main} can be bounded by
\begin{align}
\left|\frac{\partial \rho}{\partial z_{si}}\right|
\le \frac{1}{\sigma_{\hat{y}}}\frac{4\sqrt{n_s(S-1)}}{S}\|\mathbf w\|\sigma_s.
\end{align}
\end{corollary}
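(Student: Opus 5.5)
Starting from the exact gradient expression in \Cref{thm:drdz-main}, we bound each multiplicative factor separately and take absolute values:
\begin{align*}
\left|\frac{\partial \rho}{\partial z_{si}}\right|
=\frac{1}{S\sigma_{\hat y}}\,\bigl|A_s-\rho B_s\bigr|\,\alpha_{si}\,\bigl|\mathbf w^\top(\mathbf h_{si}-\mathbf v_s)\bigr|,
\end{align*}
where $A_s:=a_s/\sigma_y$ and $B_s:=b_s/\sigma_{\hat y}$ are the standardized target and prediction, as in the proof of \Cref{cor:grad_ratio}. The prefactor $1/(S\sigma_{\hat y})$ is kept as is; it supplies the $1/\sigma_{\hat y}$ and the $1/S$ in the claimed bound.

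\textbf{The global factor.} Both $(A_s)_s$ and $(B_s)_s$ are centered with unit empirical variance, so \Cref{lem:max-centered} gives $|A_s|\le\sqrt{S-1}$ and $|B_s|\le\sqrt{S-1}$. Since $|\rho|\le 1$, the triangle inequality yields $|A_s-\rho B_s|\le |A_s|+|\rho||B_s|\le 2\sqrt{S-1}$. This step holds for any sign of $\rho$.

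\textbf{The local factor.} First, $0\le\alpha_{si}\le 1$. By Cauchy--Schwarz, $|\mathbf w^\top(\mathbf h_{si}-\mathbf v_s)|\le\|\mathbf w\|_2\,\|\mathbf h_{si}-\mathbf v_s\|_2$. \Cref{lem:dispersion_bound_app} then gives $\|\mathbf h_{si}-\mathbf v_s\|\le 2\sqrt{n_s}\,\sigma_s$.

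Multiplying the three pieces gives
\[
\frac{1}{S\sigma_{\hat y}}\cdot 2\sqrt{S-1}\cdot 1\cdot \|\mathbf w\|_2\cdot 2\sqrt{n_s}\,\sigma_s=\frac{1}{\sigma_{\hat y}}\,\frac{4\sqrt{n_s(S-1)}}{S}\,\|\mathbf w\|_2\,\sigma_s,
\]
which is the claimed bound.

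The only point needing care is the global factor. The tempting route of bounding it by its RMS value $\sqrt{1-\rho^2}$ fails, because we need a pointwise bound for each $s$, not an average. That is why we invoke \Cref{lem:max-centered} on the standardized vectors, which is where the factor $\sqrt{S-1}$ comes from. We also assume $\sigma_{\hat y}>0$ and $\sigma_y>0$ so that $\rho$ and $A_s,B_s$ are well defined. Everything else is routine.
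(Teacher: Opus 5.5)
Your proposal is correct and follows the paper's proof essentially step for step: the same triangle-inequality and Cauchy--Schwarz split of the gradient from \Cref{thm:drdz-main}, the bounds $0\le\alpha_{si}\le 1$ and $|\rho|\le 1$, the $\sqrt{S-1}$ bound on the standardized scores from \Cref{lem:max-centered}, and the $2\sqrt{n_s}\,\sigma_s$ bound from \Cref{lem:dispersion_bound_app}. You also state explicitly that $\sigma_y,\sigma_{\hat y}>0$ is needed, which the paper leaves implicit; this is a harmless addition.
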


\begin{proof}
We start from the expression for the PCC gradient derived in~\Cref{thm:drdz-main-app}:
\begin{equation}
\frac{\partial \rho}{\partial z_{si}}
=
\frac{1}{S\sigma_{\hat y}}\left(\frac{a_s}{\sigma_y}-\rho\frac{b_s}{\sigma_{\hat y}}\right)
\alpha_{si}
\mathbf w^\top(\mathbf h_{si}-\mathbf v_s).
\end{equation}
We analyze the magnitude of this expression by applying the triangle inequality and the Cauchy-Schwarz inequality ($|\mathbf{w}^\top \mathbf{x}| \le \|\mathbf{w}\|\|\mathbf{x}\|$):
\begin{align}
\left|\frac{\partial \rho}{\partial z_{si}}\right|
&\le \frac{|\alpha_{si}|}{S\sigma_{\hat y}}
\left|\frac{a_s}{\sigma_y}-\rho\frac{b_s}{\sigma_{\hat y}}\right|
\left|\mathbf w^\top(\mathbf h_{si}-\mathbf v_s)\right| \\
&\le \frac{|\alpha_{si}|}{S\sigma_{\hat y}}
\left(\left|\frac{a_s}{\sigma_y}\right|+|\rho|\,\left|\frac{b_s}{\sigma_{\hat y}}\right|\right)
\|\mathbf w\|\,\|\mathbf h_{si}-\mathbf v_s\|.
\end{align}

We now bound the individual components.
\begin{enumerate}
    \item \textbf{Attention weight:} Since $\alpha_s$ is a probability from Softmax, $0 \le \alpha_{si} \le 1$
    \item \textbf{Correlation coefficient:} By definition, $-1 \le \rho \le 1$, so $|\rho| \le 1$
    \item \textbf{Standardized scores:} The terms $\frac{a_s}{\sigma_y}$ and $\frac{b_s}{\sigma_{\hat{y}}}$ are the standardized scores (z-scores) of the target and prediction for sample $s$. They form centered, unit-variance vectors across the $S$ samples. By applying~\Cref{lem:max-centered}, we have:
    \[
    \left|\frac{a_s}{\sigma_y}\right| \leq \sqrt{S-1} \quad \text{and} \quad \left|\frac{b_s}{\sigma_{\hat{y}}}\right| \leq\sqrt{S-1}.
    \]
    \item \textbf{Within-sample dispersion:} The term $\|\mathbf h_{si}-\mathbf v_s\|$ represents the deviation of the embedding $\mathbf{h}_{si}$ from the aggregated embedding $\mathbf{v}_s$. By applying~\Cref{lem:dispersion_bound_app}, we have:
    \[
    \|\mathbf h_{si}-\mathbf v_s\| \le 2\sqrt{n_s}\,\sigma_s.
    \]
\end{enumerate}

Substituting these bounds into the inequality:
\begin{align}
\left|\frac{\partial \rho}{\partial z_{si}}\right|
&\le \frac{1}{S\sigma_{\hat y}}
\left(\sqrt{S-1} + 1\sqrt{S-1}\right)
\|\mathbf w\|\,\left(2\sqrt{n_s}\sigma_s\right) \\
&= \frac{1}{S\sigma_{\hat y}} \left(2\sqrt{S-1}\right) \|\mathbf w\| \left(2\sqrt{n_s}\sigma_s\right) \\
&= \frac{4\sqrt{n_s(S-1)}}{S\sigma_{\hat{y}}} \|\mathbf w\|\sigma_s.
\end{align}

Rearranging the terms to highlight the key factors identified in the main text:
\begin{equation}
    \left|\frac{\partial \rho}{\partial z_{si}}\right|
\le \underbrace{\frac{1}{\sigma_{\hat{y}}}}_{\substack{\text{prediction}\\\text{deviation}}}\!\!\!\!\!\cdot\underbrace{\frac{4\sqrt{n_s(S-1)}}{S}}_{\text{batch scale}}\cdot\!\!\!\underbrace{\|\mathbf w\|}_{\substack{\text{regression}\\\text{weights}}}\!\cdot \!\!\underbrace{\sigma_s}_{\substack{\text{in-sample}\\\text{dispersion}}}.
\end{equation}
This concludes the proof.
\end{proof}

\section{Achievable PCC Bound of Convex Attention Models}
\label{pcc_bound_proof_app}

This section provides the detailed analysis and proof of an intrinsic upper bound on the PCC gain that any convex attention mechanism can achieve compared to a simple mean-pooling baseline. This analysis formalizes the capacity limitation imposed by the convex hull constraint.

\paragraph{Setup and Decomposition.}
We consider a dataset of $S$ samples. For sample $s$, we have element embeddings $\mathbf{h}_s =\{\mathbf{h}_{si}\}_{i=1}^{n_s}$ where $\mathbf{h}_{si} \in \mathbb{R}^d$. Let $\boldsymbol\mu_s := \tfrac{1}{n_s}\sum_{i=1}^{n_s}\mathbf h_{si}$ be the mean-pooling embedding.
A convex attention mechanism computes weights $\{\alpha_{si}\}_{i=1}^{n_s}$ such that $\alpha_{si}\ge 0$ and $\sum_i \alpha_{si}=1$. The aggregated embedding is $\mathbf{v}_s = \sum_i \alpha_{si} \mathbf{h}_{si}$.
The prediction is given by a linear regression head $(\mathbf{w}, b)$: $\hat{y}_s = \mathbf{w}^\top\mathbf{v}_s + c$.

We decompose the prediction relative to the mean-pooling baseline:
\begin{equation}\label{eq:prediction_decom_app}
\hat y_s=\underbrace{(\mathbf w^\top\boldsymbol\mu_s+c)}_{\bar{y}_s}
+\underbrace{\mathbf w^\top(\mathbf{v}_s - \boldsymbol\mu_s)}_{\Delta\hat y_s}.
\end{equation}
Here, $\bar{y}_s$ is the baseline prediction, and $\Delta\hat y_s$ is the attention-induced perturbation.

\paragraph{Quantifying Dispersion and Variation.}
We introduce measures for within-sample dispersion (related to in-sample homogeneity) and across-sample variation. Throughout this section, $\|\cdot\|_2$ denotes the $\ell_2$ norm. We assume the empirical definition for standard deviation (normalized by $1/S$).

\begin{definition}[Intrinsic Dispersion and Baseline Variation]\label{def:dispersion_measures}
For each sample $s$, define the maximum within-sample deviation (the radius of the convex hull centered at the mean):
\begin{equation}
    R_s:=\max_{1\le i\le n_s}\big\|\,\mathbf h_{si}-\boldsymbol\mu_s\,\big\|_2.
\end{equation}

Define the intrinsic within-sample dispersion $\tilde{R}$ as the root mean square (RMS) of these radii across the dataset:
\begin{equation}
    \tilde R:=\sqrt{\frac{1}{S}\sum_{s=1}^S R_s^2}.
\end{equation}

Let $\sigma_{0}:=\operatorname{std}_s\left(\bar{y}_s\right)$ denote the standard deviation of the baseline predictions across samples.
\end{definition}

\begin{remark}
$\tilde{R}$ measures the intrinsic homogeneity of the embeddings within samples, independent of the regression head $\mathbf{w}$. $\sigma_0$ captures the variation of the mean embeddings projected onto the regression space. By the scaling property of the standard deviation, $\sigma_{0}=\|\mathbf w\|_2\,\operatorname{std}_s\!\left(\hat{\mathbf w}^\top\boldsymbol\mu_s\right)$, where $\hat{\mathbf w}:=\mathbf w/\|\mathbf w\|_2$.
\end{remark}

\paragraph{Centered Notation.}
We define the vectors of predictions across the $S$ samples: $\hat{\mathbf{y}}, \bar{\mathbf{y}}, \Delta\hat{\mathbf{y}} \in \mathbb{R}^S$. From~\Cref{eq:prediction_decom_app}, we have $\hat{\mathbf{y}} = \bar{\mathbf{y}} + \Delta\hat{\mathbf{y}}$.

We denote the centered versions of these vectors (by subtracting their respective means $\mu_{\hat{y}}, \mu_{\bar{y}}, \mu_{\Delta\hat y}$) as $\mathbf{b}, \bar{\mathbf{b}}, \Delta\mathbf{b}$. By linearity, the decomposition holds for centered vectors: $\mathbf{b} = \bar{\mathbf{b}} + \Delta \mathbf{b}$.
We denote the centered ground-truth targets as $\mathbf{a}$, where $a_s = y_s - \mu_y$.

The Pearson correlation coefficient (PCC) is the cosine similarity between centered vectors. Let $\rho = \operatorname{PCC}(\mathbf{a}, \mathbf{b})$ and $\rho_0 = \operatorname{PCC}(\mathbf{a}, \bar{\mathbf{b}})$.

\paragraph{Bounding the Attention Perturbation.}
We first establish bounds on the magnitude of the perturbation $\Delta\hat{\mathbf{y}}$ and its centered counterpart $\Delta\mathbf{b}$.

\begin{lemma}[Bound on Prediction Perturbation]\label{lem:per-spot}
For any convex attention weights $\{\alpha_{si}\}$, the perturbation for sample $s$ is bounded by:
\[
|\Delta\hat y_s| \le \|\mathbf w\|_2\,R_s.
\]
Consequently, the L2 norm of the centered perturbation vector is bounded by:
\[
\|\Delta\mathbf b\|_2 \;\le\; \sqrt{S}\,\|\mathbf w\|_2\,\tilde R.
\]
\end{lemma}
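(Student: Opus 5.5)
The plan is to first prove the per-sample bound and then pass to the centered vector. For the per-sample bound, the argument from Term 2 in the proof of \Cref{lem:dispersion_bound_app} carries over directly. Since $\sum_i \alpha_{si}=1$, we can write $\mathbf v_s-\boldsymbol\mu_s=\sum_i \alpha_{si}(\mathbf h_{si}-\boldsymbol\mu_s)$. The triangle inequality (convexity of the norm) together with $\alpha_{si}\ge 0$ then gives $\|\mathbf v_s-\boldsymbol\mu_s\|_2\le \sum_i\alpha_{si}\|\mathbf h_{si}-\boldsymbol\mu_s\|_2\le \max_i\|\mathbf h_{si}-\boldsymbol\mu_s\|_2=R_s$. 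Applying Cauchy--Schwarz to $\Delta\hat y_s=\mathbf w^\top(\mathbf v_s-\boldsymbol\mu_s)$ yields $|\Delta\hat y_s|\le\|\mathbf w\|_2R_s$.

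For the vector bound, we first sum the squared per-sample bounds: $\|\Delta\hat{\mathbf y}\|_2^2=\sum_s(\Delta\hat y_s)^2\le\|\mathbf w\|_2^2\sum_sR_s^2=S\|\mathbf w\|_2^2\tilde R^2$, using the definition of $\tilde R$ in \Cref{def:dispersion_measures}. Hence $\|\Delta\hat{\mathbf y}\|_2\le\sqrt S\,\|\mathbf w\|_2\tilde R$.

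The one remaining step is to pass from $\Delta\hat{\mathbf y}$ to its centered version $\Delta\mathbf b=\Delta\hat{\mathbf y}-\mu_{\Delta\hat y}\mathbf 1$. Centering is the orthogonal projection onto the complement of $\mathbf 1$, so it cannot increase the norm. Concretely, $\|\Delta\hat{\mathbf y}\|_2^2=\|\Delta\mathbf b\|_2^2+S\mu_{\Delta\hat y}^2$, because the cross term vanishes: $\sum_s\Delta b_s=0$. This gives $\|\Delta\mathbf b\|_2\le\|\Delta\hat{\mathbf y}\|_2\le\sqrt S\,\|\mathbf w\|_2\tilde R$.

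None of these steps is a real obstacle. The only point that needs care is this final centering step, namely making sure we bound the centered vector rather than the raw one. The Pythagorean identity above settles it cleanly.
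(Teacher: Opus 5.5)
Your proposal is correct and follows the paper's proof essentially step for step. You rewrite $\Delta\hat y_s=\mathbf w^\top\sum_i\alpha_{si}(\mathbf h_{si}-\boldsymbol\mu_s)$, bound it by $\|\mathbf w\|_2R_s$ via Cauchy--Schwarz and convexity of the norm, sum the squares to get $\sqrt S\,\|\mathbf w\|_2\tilde R$, and finish with non-expansiveness of centering; your identity $\|\Delta\hat{\mathbf y}\|_2^2=\|\Delta\mathbf b\|_2^2+S\mu_{\Delta\hat y}^2$ simply spells out the projection argument the paper cites.
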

\begin{proof}
We analyze the perturbation term. Since $\sum_i \alpha_{si}=1$, we can write $\boldsymbol\mu_s = \sum_i \alpha_{si}\boldsymbol\mu_s$.
\[
\Delta\hat y_s = \mathbf w^\top(\mathbf{v}_s - \boldsymbol\mu_s) = \mathbf w^\top\left(\sum_{i=1}^{n_s}\alpha_{si}\mathbf h_{si} - \sum_{i=1}^{n_s}\alpha_{si}\boldsymbol\mu_s\right) = \mathbf w^\top\sum_{i=1}^{n_s}\alpha_{si}(\mathbf h_{si}-\boldsymbol\mu_s).
\]
Taking the absolute value:
\begin{align*}
|\Delta\hat y_s|
&\le \|\mathbf w\|_2\,\Big\|\sum_i \alpha_{si}(\mathbf h_{si}-\boldsymbol\mu_s)\Big\| \quad \text{(Cauchy–Schwarz)} \\
&\le \|\mathbf w\|_2\sum_i \alpha_{si}\|\mathbf h_{si}-\boldsymbol\mu_s\| \qquad \; \text{(Convexity of norm / Triangle inequality)}\\
&\le \|\mathbf w\|_2\sum_i \alpha_{si} R_s \qquad\qquad\qquad \text{(Definition of } R_s) \\
&= \|\mathbf w\|_2\,R_s.
\end{align*}
To bound the L2 norm of the uncentered perturbation vector $\Delta\hat{\mathbf y}$, we sum the squared bounds across samples:
\[
\|\Delta\hat{\mathbf y}\|_2^2 = \sum_{s=1}^S |\Delta\hat y_s|^2 \le \sum_{s=1}^S (\|\mathbf w\|_2 R_s)^2 = \|\mathbf w\|_2^2 \sum_{s=1}^S R_s^2.
\]
Using the definition of the RMS dispersion $\tilde{R}^2 = \frac{1}{S}\sum_s R_s^2$, we get:
\[
\|\Delta\hat{\mathbf y}\|_2^2 \le S\,\|\mathbf w\|_2^2\,\tilde R^2 \implies \|\Delta\hat{\mathbf y}\|_2 \le \sqrt S\,\|\mathbf w\|_2\,\tilde R.
\]
Finally, $\Delta\mathbf b$ is the centered vector of $\Delta\hat{\mathbf y}$. Centering a vector (projecting onto the subspace orthogonal to the constant vector~\citep{arefidamghani2022circumcentered,wang2010unsupervised}) is a non-expansive operation on $\ell_2$ space, hence $\|\Delta\mathbf b\|_2\le \|\Delta\hat{\mathbf y}\|_2 \le \sqrt{S}\,\|\mathbf w\|_2\,\tilde R$.
\end{proof}

\paragraph{General Correlation Perturbation Lemma.}
We utilize a general result bounding the change in the cosine similarity when one vector is perturbed.

\begin{lemma}[Correlation perturbation]\label{lem:corr-perturb}
For any $\mathbf a,\mathbf b,\boldsymbol\delta\in\mathbb R^S$ with $\|\boldsymbol\delta\|_2<\|\mathbf b\|_2$,
\[
\Bigg|\frac{\mathbf a\cdot(\mathbf b+\boldsymbol\delta)}{\|\mathbf a\|_2\,\|\mathbf b+\boldsymbol\delta\|_2}
-\frac{\mathbf a\cdot\mathbf b}{\|\mathbf a\|_2\,\|\mathbf b\|_2}\Bigg|
\;\le\;
\frac{2\,\|\boldsymbol\delta\|_2}{\|\mathbf b\|_2-\|\boldsymbol\delta\|_2}.
\]
\begin{proof}

Let $\hat{\mathbf a}=\mathbf a/\|\mathbf a\|_2$. Using triangle inequality, we have:
\begin{align} \left|\frac{\hat{\mathbf a}\cdot(\mathbf b+\boldsymbol\delta)}{|\mathbf b+\boldsymbol\delta|_2} - \frac{\hat{\mathbf a}\cdot\mathbf b}{|\mathbf b|_2}\right| &= \left|\frac{\hat{\mathbf a}\cdot\mathbf b}{|\mathbf b+\boldsymbol\delta|_2} - \frac{\hat{\mathbf a}\cdot\mathbf b}{|\mathbf b|_2} + \frac{\hat{\mathbf a}\cdot\boldsymbol\delta}{|\mathbf b+\boldsymbol\delta|_2}\right| \\
& \leq \underbrace{\left|(\hat{\mathbf a}\cdot\mathbf b)\left(\frac{1}{|\mathbf b+\boldsymbol\delta|_2} - \frac{1}{|\mathbf b|_2}\right)\right|}_{T_1} + \underbrace{\left|\frac{\hat{\mathbf a}\cdot\boldsymbol\delta}{|\mathbf b+\boldsymbol\delta|_2}\right|}_{T_2}.
\end{align}

We analyze each term. For \(T_2\), by Cauchy–Schwarz ($|\hat{\mathbf a}\cdot\boldsymbol\delta|\le \|\boldsymbol\delta\|_2$) and the reverse triangle inequality ($\|\mathbf b+\boldsymbol\delta\|_2\ge \|\mathbf b\|_2-\|\boldsymbol\delta\|_2$), we have
$T_2 \le \frac{\|\boldsymbol\delta\|_2}{\|\mathbf b\|_2-\|\boldsymbol\delta\|_2}$.

For \(T_1\), we rewrite the expression:
$T_1 = |\hat{\mathbf a}\cdot \mathbf b|\, \left|\frac{\|\mathbf b\|_2-\|\mathbf b+\boldsymbol\delta\|_2}{\|\mathbf b\|_2\,\|\mathbf b+\boldsymbol\delta\|_2}\right|$.

Using Cauchy–Schwarz ($|\hat{\mathbf a}\cdot \mathbf b|\le \|\mathbf b\|_2$), and the reverse triangle inequality ($\bigl|\|\mathbf b\|_2-\|\mathbf b+\boldsymbol\delta\|_2\bigr|\le \|\boldsymbol\delta\|_2$) again, we get:
$T_1 \le \|\mathbf b\|_2\,\frac{\|\boldsymbol\delta\|_2}{\|\mathbf b\|_2(\|\mathbf b\|_2-\|\boldsymbol\delta\|_2)} = \frac{\|\boldsymbol\delta\|_2}{\|\mathbf b\|_2-\|\boldsymbol\delta\|_2}$.
Summing $T_1$ and $T_2$ completes the proof.

\end{proof}
\end{lemma}

\subsection{Main Result: Achievable PCC Gain}

We now combine these lemmas to prove the main theorem regarding the capacity ceiling of convex attention.

\begin{theorem}[Achievable PCC Gain Bound for Convex Attention]\label{thm:pcc-gain-app}
Let $\rho_0$ be the PCC achieved by the mean-pooling baseline $\bar{\mathbf y}$, and $\rho$ the PCC achieved by any convex attention mechanism $\hat{\mathbf y}$.
Assume $\|\mathbf w\|_2>0$. If the intrinsic dispersion is small relative to the baseline variation such that $\tilde R < \sigma_{0}/\|\mathbf w\|_2$, then:
\[
|\rho-\rho_0|
\;\le\;
\frac{2\,\|\mathbf w\|_2\,\tilde R}{\sigma_{0}-\|\mathbf w\|_2\,\tilde R}
\;=\;
\frac{2\,\tilde R}{\sigma_{0}/\|\mathbf w\|_2-\tilde R}.
\]
\end{theorem}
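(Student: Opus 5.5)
The plan is to reduce the statement directly to the two lemmas just established: Lemma~\ref{lem:per-spot}, which bounds the attention perturbation, and Lemma~\ref{lem:corr-perturb}, which controls the correlation change. PCC is the cosine similarity between centered vectors. So I will write $\rho=\operatorname{PCC}(\mathbf a,\mathbf b)$ with $\mathbf b=\bar{\mathbf b}+\Delta\mathbf b$, and $\rho_0=\operatorname{PCC}(\mathbf a,\bar{\mathbf b})$. Then $|\rho-\rho_0|$ is exactly the left-hand side of Lemma~\ref{lem:corr-perturb}, applied with base vector $\bar{\mathbf b}$ and perturbation $\boldsymbol\delta=\Delta\mathbf b$. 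That lemma gives
\[
|\rho-\rho_0|\le \frac{2\|\Delta\mathbf b\|_2}{\|\bar{\mathbf b}\|_2-\|\Delta\mathbf b\|_2},
\]
provided $\|\Delta\mathbf b\|_2<\|\bar{\mathbf b}\|_2$.

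Next I express both norms in terms of the quantities in the statement.
\begin{itemize}
\item Under the $1/S$ normalization of the standard deviation, $\|\bar{\mathbf b}\|_2=\sqrt{S}\,\sigma_0$, since $\bar{\mathbf b}$ is the centered vector of mean-pooling predictions.
\item Lemma~\ref{lem:per-spot} gives $\|\Delta\mathbf b\|_2\le\sqrt S\,\|\mathbf w\|_2\tilde R$. That lemma uses only convexity of the weights, which is why the result holds for any convex aggregator.
\end{itemize}
The hypothesis $\tilde R<\sigma_0/\|\mathbf w\|_2$ then gives $\|\Delta\mathbf b\|_2\le\sqrt S\|\mathbf w\|_2\tilde R<\sqrt S\sigma_0=\|\bar{\mathbf b}\|_2$. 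This verifies the applicability condition of Lemma~\ref{lem:corr-perturb}, and it also guarantees $\sigma_0>0$.

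To finish, I will use that $t\mapsto 2t/(\|\bar{\mathbf b}\|_2-t)$ is increasing on $[0,\|\bar{\mathbf b}\|_2)$. This lets me substitute the upper bound on $\|\Delta\mathbf b\|_2$, giving
\[
|\rho-\rho_0|\le\frac{2\sqrt S\|\mathbf w\|_2\tilde R}{\sqrt S\sigma_0-\sqrt S\|\mathbf w\|_2\tilde R}.
\]
Cancelling $\sqrt S$ and dividing numerator and denominator by $\|\mathbf w\|_2>0$ yields both stated forms.

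The main care point is the monotonicity and applicability step: the hypothesis must be shown to imply the strict inequality needed by Lemma~\ref{lem:corr-perturb} before the upper bound is plugged in. I also need to keep the degenerate cases straight. We need $\|\mathbf a\|_2>0$ for PCC to be defined. We need $\mathbf b\neq 0$, which follows from the reverse triangle inequality $\|\mathbf b\|_2\ge\|\bar{\mathbf b}\|_2-\|\Delta\mathbf b\|_2>0$. Beyond this, the proof is an assembly of the prior lemmas.
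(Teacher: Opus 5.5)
Your proposal is correct and follows the paper's proof essentially step for step. Like the paper, you apply Lemma~\ref{lem:corr-perturb} to $\mathbf a$, $\bar{\mathbf b}$ and $\Delta\mathbf b$, use $\|\bar{\mathbf b}\|_2=\sqrt S\,\sigma_0$ together with Lemma~\ref{lem:per-spot}, check the strict inequality from the hypothesis, substitute via monotonicity of $t\mapsto 2t/(C-t)$, and cancel $\sqrt S$ and $\|\mathbf w\|_2$; your remarks on the degenerate cases ($\|\mathbf a\|_2>0$, $\mathbf b\neq 0$, $\sigma_0>0$) just make explicit what the paper leaves implicit.
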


\begin{proof}
We apply~\Cref{lem:corr-perturb} using the centered vectors: $\mathbf m=\mathbf{a}$ (centered targets), $\mathbf n=\bar{\mathbf{b}}$ (centered baseline predictions), and $\boldsymbol\delta=\Delta\mathbf b$ (centered attention perturbation).

First, we relate the L2 norms of the centered vectors to the defined dispersion measures. By the definition of the empirical standard deviation $\sigma_0$, we have:
\[
\|\bar{\mathbf{b}}\|_2 = \sqrt{\sum_s (\bar{y}_s - \mu_{\bar{y}})^2} = \sqrt{S}\,\sigma_0.
\]
By~\Cref{lem:per-spot}, the perturbation is bounded by:
\[
\|\Delta\mathbf b\|_2 \le \sqrt{S}\,\|\mathbf w\|_2\,\tilde R.
\]

Next, we verify the condition required for~\Cref{lem:corr-perturb}, $\|\boldsymbol\delta\|_2 < \|\mathbf n\|_2$.
The assumption $\tilde R < \sigma_0/\|\mathbf w\|_2$ implies $\|\mathbf w\|_2\,\tilde R < \sigma_0$. Multiplying by $\sqrt{S}$, we get:
\[
\sqrt{S}\,\|\mathbf w\|_2\,\tilde R < \sqrt{S}\,\sigma_0.
\]
Therefore, $\|\Delta\mathbf b\|_2 < \|\bar{\mathbf{b}}\|_2$, satisfying the prerequisite.

Now, applying~\Cref{lem:corr-perturb}:
\[
|\rho-\rho_0|
\;\le\;
\frac{2\,\|\Delta\mathbf b\|_2}{\|\bar{\mathbf b}\|_2-\|\Delta\mathbf b\|_2}.
\]
Since the function $f(x) = 2x/(C-x)$ is monotonically increasing for $x < C$ (where $C=\|\bar{\mathbf{b}}\|_2$), we substitute the upper bound for $\|\Delta\mathbf b\|_2$:
\begin{align*}
|\rho-\rho_0|
&\;\le\;
\frac{2\,(\sqrt S\,\|\mathbf w\|_2\,\tilde R)}{\sqrt S\,\sigma_{0}-(\sqrt S\,\|\mathbf w\|_2\,\tilde R)} \\
&\;=\;
\frac{2\,\|\mathbf w\|_2\,\tilde R}{\sigma_{0}-\|\mathbf w\|_2\,\tilde R}.
\end{align*}
Dividing the numerator and denominator by $\|\mathbf w\|_2$ (which is positive by assumption) yields the scale-invariant form:
\[
|\rho-\rho_0| \;\le\; \frac{2\,\tilde R}{\sigma_{0}/\|\mathbf w\|_2-\tilde R}.
\]
\end{proof}

\subsection{Connection between In-sample Dispersion and Radius}

\begin{lemma}[Dispersion--Radius Connection] \label{app:lemma_translate}
For each sample $s$, $R_s/\sqrt{n_s}\le \sigma_s\le R_s$. Consequently, we denote $\tilde\sigma := \Bigl(\frac{1}{S}\sum_{s=1}^S \sigma_s^2\Bigr)^{1/2}$ with $n_{\max}=\max_s n_s$, we have
$\tilde R/\sqrt{n_{\max}}\le \tilde\sigma\le \tilde R$.
\end{lemma}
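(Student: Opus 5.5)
The plan is to prove the per-sample inequality $R_s/\sqrt{n_s}\le\sigma_s\le R_s$ first, and then pass to the RMS quantities by averaging squares.

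\textbf{Per-sample bounds.} Fix $s$ and write $d_j:=\|\mathbf h_{sj}-\boldsymbol\mu_s\|_2\ge 0$, so that $\sigma_s^2=\tfrac{1}{n_s}\sum_j d_j^2$ and $R_s=\max_j d_j$.

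For the upper bound, each $d_j^2\le R_s^2$. Averaging gives $\sigma_s^2\le R_s^2$, hence $\sigma_s\le R_s$.

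For the lower bound, I use the fact already exploited in Term~1 of \Cref{lem:dispersion_bound_app}: the maximum of nonnegative numbers is at most the root of their sum of squares. Thus $R_s^2=\max_j d_j^2\le\sum_j d_j^2=n_s\sigma_s^2$, which gives $R_s/\sqrt{n_s}\le\sigma_s$.

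\textbf{Aggregate bounds.} Square the per-sample inequalities to get $R_s^2/n_s\le\sigma_s^2\le R_s^2$.
\begin{itemize}
\item For the upper bound, averaging $\sigma_s^2\le R_s^2$ over $s$ gives $\tilde\sigma^2\le\tilde R^2$.
\item For the lower bound, since $n_s\le n_{\max}$, we have $R_s^2/n_{\max}\le R_s^2/n_s\le\sigma_s^2$. Averaging gives $\tilde R^2/n_{\max}\le\tilde\sigma^2$.
\end{itemize}
Taking square roots, which is monotone on nonnegative reals, yields $\tilde R/\sqrt{n_{\max}}\le\tilde\sigma\le\tilde R$.

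There is no real obstacle. The only point that needs care is the lower bound when the $n_s$ differ across samples. Replacing $n_s$ by the uniform constant $n_{\max}$ has to happen at the squared level, before averaging, so that the factor can be pulled out of the mean. This is also why the main text's statement $R_s\le\sqrt{n_s}\sigma_s$ becomes $\tilde R\le\sqrt{n_{\max}}\,\tilde\sigma$ at the aggregate level.
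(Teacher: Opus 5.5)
Your proof is correct and complete. The paper states this lemma without giving a proof, but the only nontrivial step it needs, $\max_j d_j^2\le\sum_j d_j^2=n_s\sigma_s^2$, is exactly the Term~1 step in the proof of \Cref{lem:dispersion_bound_app}, which is what you use for $R_s\le\sqrt{n_s}\,\sigma_s$. The other direction, $\sigma_s\le R_s$, follows from mean $\le$ max, and replacing $n_s$ by $n_{\max}$ at the squared level before averaging correctly gives $\tilde R/\sqrt{n_{\max}}\le\tilde\sigma\le\tilde R$.
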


\section{Generality of Theoretical Analysis Regarding Architecture Depth}
\label{app:depth_generality}
A potential concern regarding our theoretical analysis is whether the findings derived for attention aggregation apply to deep, multi-layer architectures. In this section, we clarify that our analysis focuses on the aggregation mechanism at the readout stage, which dictates the final prediction behavior regardless of the depth of the preceding backbone.

\textbf{Backbone-Agnostic Formulation.} Our theoretical model assumes input embeddings $\mathbf{h}_s=\{\mathbf{h}_{si}\}_{i=1}^{n_s}$. In the context of deep learning, these are not raw inputs but rather the latent representations produced by a backbone function $f_\theta(\cdot)$ (comprising multiple Transformer blocks, FFNs, residual connections, and LayerNorms). The final prediction is modeled as $\hat y_s = \mathbf{w}^\top \mathbf{v}_s + c$, where $\mathbf{v}_s$ is a convex combination of these final-layer representations. This formulation exactly matches the standard architectural paradigm used in modern Transformers:
\begin{itemize}
\item \textbf{[CLS] Token Aggregation:} In models like BERT or ViT, the prediction is often derived from a specific [CLS] token. However, in a standard Transformer layer, the output embedding of the [CLS] token is computed via the attention mechanism, which is a convex combination (softmax) of the element embeddings from the previous layer. Thus, the readout remains a convex aggregation of the backbone's features, subject to the analysis we present.
\item \textbf{Global Average Pooling:} Many regression heads utilize global average pooling over token embeddings, which is a special case of convex attention where uniform weights are applied ($\alpha_{si} = 1/n_s$).
\end{itemize}

\textbf{The Readout Bottleneck.} Our analysis (\Cref{thm:pcc-gain-maintext}) establishes that the achievable PCC gain is bounded by the convex hull of the input embeddings. Even with a highly expressive deep backbone $f_\theta$, if the final aggregation step is convex, the prediction $\hat{y}_s$ is geometrically constrained to the interior of the simplex formed by the final-layer features $\{\mathbf{h}_{si}\}$. Crucially, deep backbones do not inherently solve the \emph{homogeneity} issue. In fact, models pre-trained with contrastive objectives often produce highly homogeneous embeddings for in-sample elements (semantically similar tokens or patches), which shrinks the convex hull radius $R_s$ and exacerbates the capacity limitation.

\textbf{Empirical Validation on Deep Architectures.} Our experimental evaluation in \Cref{tab:uci} and \Cref{tab:mosi-f1-mae-pcc} utilizes deep, multi-layer architectures, not shallow regressors.
\begin{itemize}
\item \textbf{FT-Transformer:} A deep tabular Transformer with multiple attention layers.\item \textbf{ALMT:} A complex multimodal Transformer for sentiment analysis.\item \textbf{EGN:} A multi-layer vision Transformer model for spatial transcriptomics.
\end{itemize}
In all these deep settings, we observed the PCC plateau phenomenon. The consistent improvement provided by ECA confirms that the convex readout mechanism acts as a bottleneck even in deep models, and that relaxing this constraint via extrapolative mechanisms is necessary for optimal correlation learning.

    \section{Extended Empirical Evidence of the PCC Plateau}
\label{app:extended_pcc_plateau}

To rigorously address the universality of the PCC plateau phenomenon and demonstrate that it is not an artifact of specific datasets or architectures, we conducted extensive additional experiments on 8 diverse regression benchmarks from the UCI Machine Learning Repository~\citep{asuncion2007uci}. We utilized a multi-layer FT-Transformer \citep{gorishniy2021revisiting} as the backbone to represent modern, high-capacity attention-based regression models.

\Cref{fig:uci_8_curves} illustrates the training (left) and validation (right) trajectories for both MSE and PCC. The curves marked with circles represent PCC metric, while the curves without circle represent MSE. 

\begin{center}
\captionof{figure}{Training and validation curves for multi-layer FT-Transformer on 8 UCI regression datasets. \textbf{Left:} Training set metrics. \textbf{Right:} Validation set metrics. The curves with \textbf{circles} indicate PCC (maximizing shape matching), while curves \textbf{without circles} indicate MSE (minimizing magnitude error). In all instances, PCC hits a ceiling (plateau) significantly earlier than MSE, which continues to descend, highlighting the optimization gap.}
\label{fig:uci_8_curves}

\begin{subfigure}{0.9\textwidth}
    \centering
    \includegraphics[width=\linewidth]{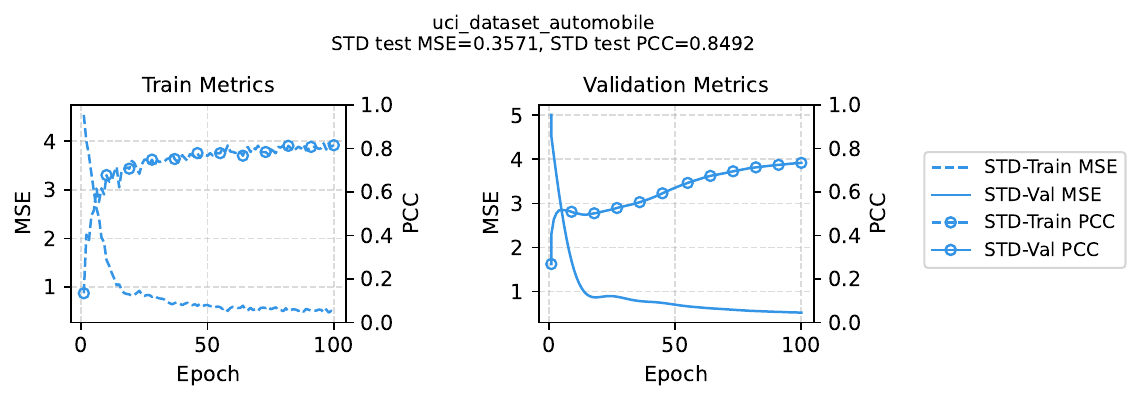}
    \caption{Automobile Dataset}
    \label{fig:uci-8:a}
\end{subfigure}

\vspace{1em}

\begin{subfigure}{0.9\textwidth}
    \centering
    \includegraphics[width=\linewidth]{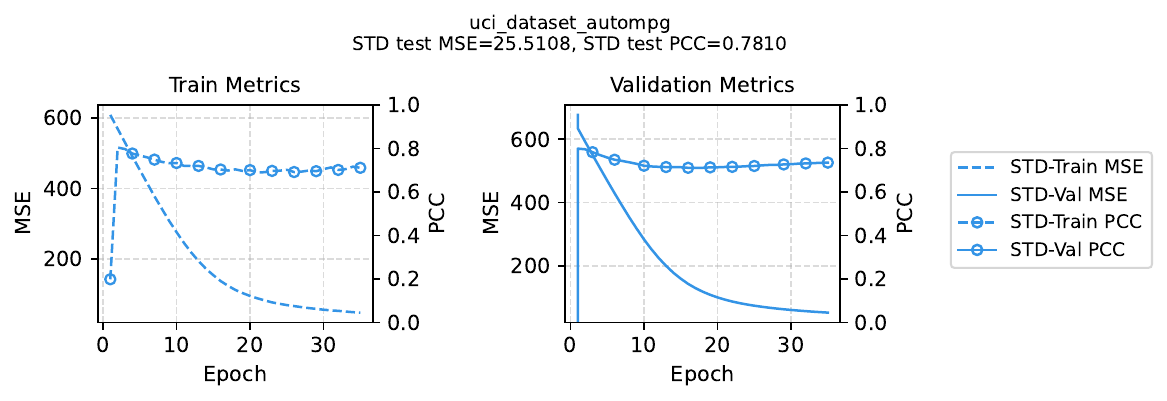}
    \caption{Auto MPG Dataset}
    \label{fig:uci-8:b}
\end{subfigure}

\vspace{1em}

\begin{subfigure}{0.9\textwidth}
    \centering
    \includegraphics[width=\linewidth]{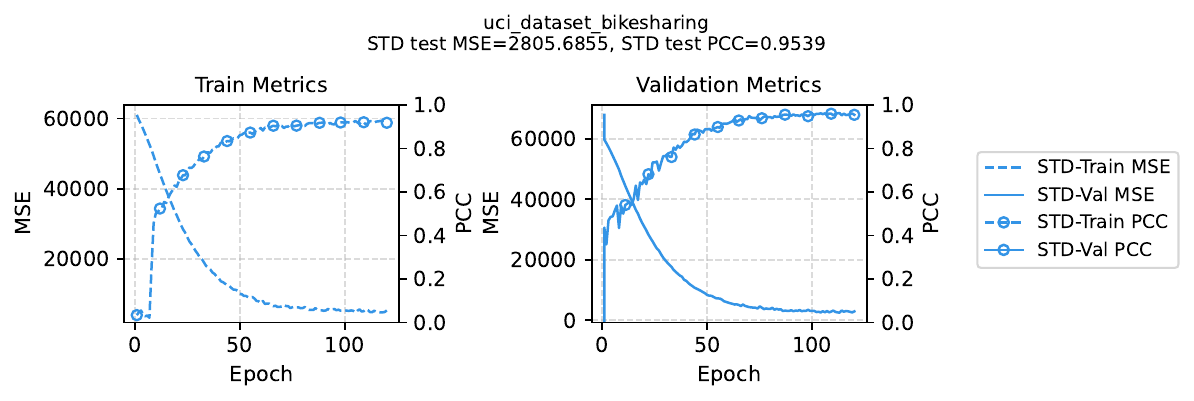}
    \caption{Bike Sharing Dataset}
    \label{fig:uci-8:c}
\end{subfigure}

\vspace{1em}
    \label{fig:uci-8:c}

\begin{subfigure}{0.9\textwidth}
    \centering
    \includegraphics[width=\linewidth]{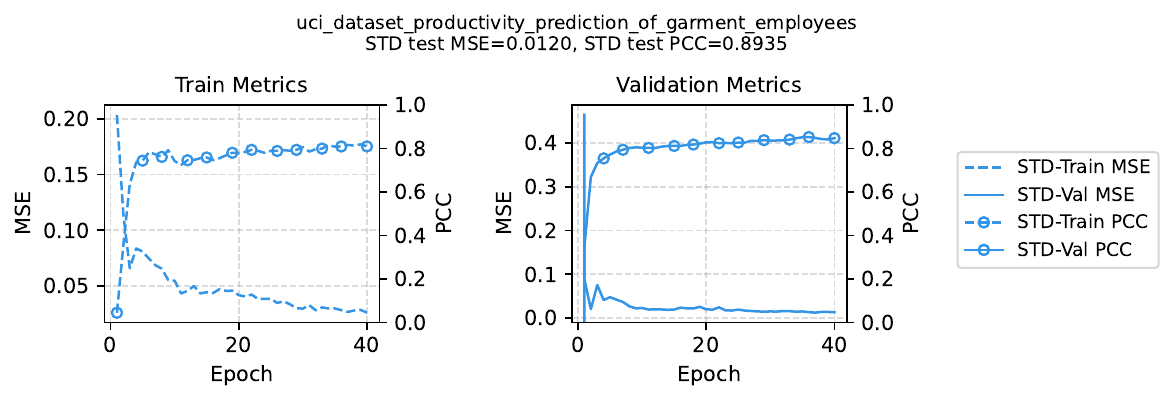}
    \caption{Garment Dataset}
    \label{fig:uci-8:d}
\end{subfigure}

\vspace{1em}

\begin{subfigure}{0.9\textwidth}
    \centering
    \includegraphics[width=\linewidth]{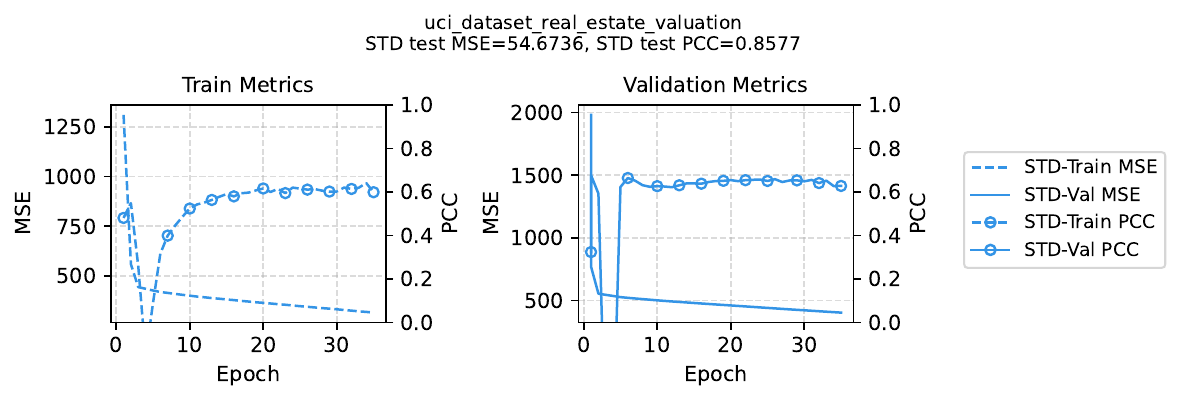}
    \caption{Real Estate Dataset}
    \label{fig:uci-8:e}
\end{subfigure}

\vspace{1em}

\begin{subfigure}{0.9\textwidth}
    \centering
    \includegraphics[width=\linewidth]{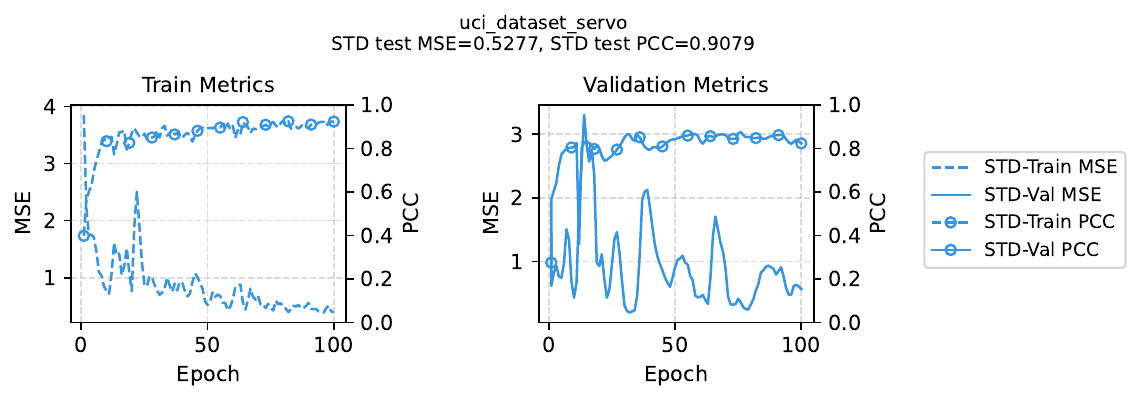}
    \caption{Servo Dataset}
    \label{fig:uci-8:f}
\end{subfigure}

\vspace{1em}

\begin{subfigure}{0.9\textwidth}
    \centering
    \includegraphics[width=\linewidth]{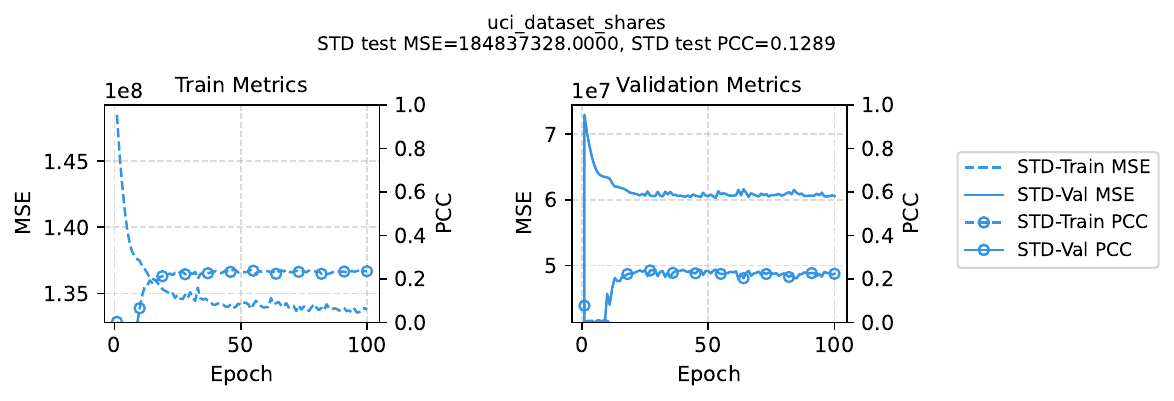}
    \caption{Shares Dataset}
    \label{fig:uci-8:g}
\end{subfigure}

\vspace{1em}

\begin{subfigure}{0.9\textwidth}
    \centering
    \includegraphics[width=\linewidth]{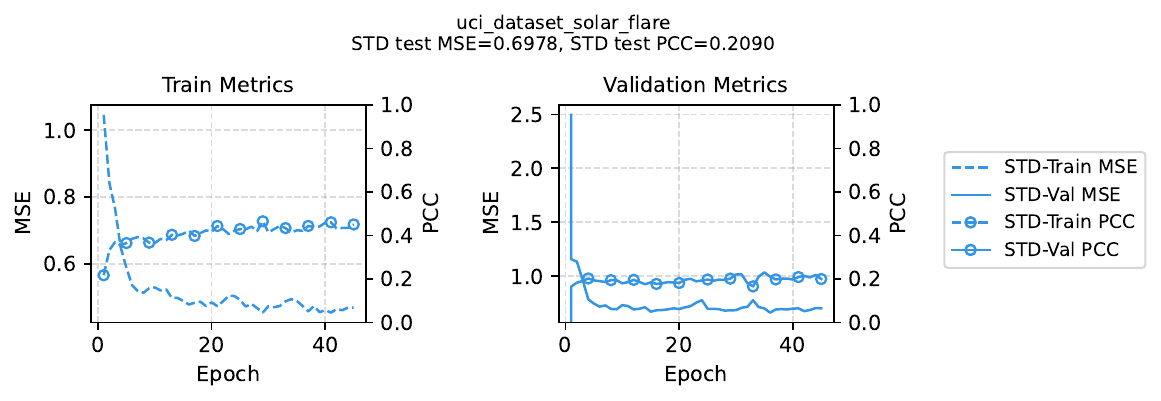}
    \caption{Solar Flare Dataset}
    \label{fig:uci-8:h}
\end{subfigure}
\end{center}

\textbf{Observations and Analysis:}
\begin{itemize}
    \item \textbf{Decoupling of MSE and PCC:} Across all 8 datasets, we observe a distinct decoupling of the two metrics. In the early stages of training, both MSE and PCC improve. However, a distinct inflection point: the ``PCC Plateau" consistently appears where the correlation gain flattens.
    \item \textbf{Persistent MSE Reduction:} Crucially, after the PCC plateaus, the MSE continues to decrease significantly. This confirms the optimization conflict identified in our theoretical analysis (\Cref{remark:bottlenecks}): the gradient dynamics of MSE minimization (specifically standard deviation matching) actively suppress the correlation gradient magnitude. The model continues to learn magnitude information (lowering MSE) but loses the ability to optimize the shape/ordering of predictions (PCC).
    \item \textbf{Universality:} Despite varying noise levels and feature dimensions across these diverse real-world datasets, the optimization pattern in correlation learning remains identical. This provides robust empirical evidence that the PCC plateau is a fundamental failure mode in joint MSE-PCC training with attention-based models, which calls for a specific architectural improvement as we proposed in~\Cref{pcc_bound_proof_}.
\end{itemize}

\section{Detailed Experimental Settings}
\label{app:expsetting}

\subsection{Synthetic Experiment Details} \label{app:synthetic-dataset}
\subsubsection{Detailed Data Generation Process}

We provide a detailed description of the synthetic dataset generation process (DGP). The DGP is parameterized by the embedding dimension $D$, the number of elements per sample $K$, the signal contrast $\eta$, the noise level $\nu$, the ground truth extrapolation factor $\gamma^* \ge 1$, the cross-sample variation scale $\sigma_B$, and a noise floor $\sigma_{\text{floor}}$.

\paragraph{1. Defining Ground Truth Directions.}
We first establish the direction of the signal and the direction of the noise.
\begin{enumerate}
    \item Sample the ground truth regression vector (signal direction) $\mathbf{w}^* \sim \mathcal{N}(0, \mathbf{I}_D)$. Normalize $\|\mathbf{w}^*\|_2 = 1$.
    \item Sample a noise direction vector $\mathbf{w}^{\perp}$. We ensure orthogonality ($(\mathbf{w}^*)^\top \mathbf{w}^{\perp} = 0$) using the Gram-Schmidt process and normalize $\|\mathbf{w}^{\perp}\|_2 = 1$.
\end{enumerate}

\paragraph{2. Generating Sample Centers.}
For each sample $s \in \{1, \dots, S\}$, we sample the sample center (mean embedding) from an isotropic Gaussian distribution, controlling the cross-sample variation:
\begin{equation}
\boldsymbol\mu_s \sim \mathcal{N}(0, \sigma_{B}^2 \mathbf{I}_D).
\end{equation}

\paragraph{3. Generating Elements.}
We generate $K$ elements for each sample $s$, distinguishing between $K-1$ background samples and one key sample.

\textbf{Background Elements} ($i=1, \dots, K-1$): These elements are generated by adding small isotropic noise (noise floor) to the sample center:
\begin{equation}
\mathbf{h}_{si} = \boldsymbol\mu_s + \boldsymbol\epsilon_{si}, \quad \boldsymbol\epsilon_{si} \sim \mathcal{N}(0, \sigma_{\text{floor}}^2 \mathbf{I}_D).
\end{equation}

\textbf{Key Element} ($i=K$): The key element is generated by explicitly injecting signal contrast along $\mathbf{w}^*$ and uninformative noise along $\mathbf{w}^{\perp}$:
\begin{equation}
\mathbf{h}_{sK} = \boldsymbol\mu_s + \underbrace{\eta \cdot \mathbf{w}^*}_{\text{Signal Contrast}} + \underbrace{\nu \cdot \mathbf{w}^{\perp}}_{\text{Uninformative Noise}} + \boldsymbol\epsilon_{sK}.
\end{equation}

\paragraph{4. Defining the Target Variable.}
The target variable is defined based on an optimal representation $\mathbf{v}_s^*$ that extrapolates the \emph{signal component} by the factor $\gamma^*$:
\begin{equation}
\mathbf{v}_s^* = \boldsymbol\mu_s + \gamma^* \cdot (\eta \cdot \mathbf{w}^*).
\end{equation}
The target label $y_s$ is generated by applying the ground truth regression vector $\mathbf{w}^*$ to this optimal representation, with added label noise $\epsilon'_s \sim \mathcal{N}(0, \sigma_{\text{label}}^2)$:
\begin{equation}
y_s = (\mathbf{w}^*)^\top \mathbf{v}_s^* + \epsilon'_s.
\end{equation}

\subsubsection{Experimental Setup and Hyperparameters}
\label{app:synth_setup}

\paragraph{Model Architecture.}
We use a simple gating attention architecture. The element embeddings $\mathbf{h}_{si}$ are generated directly by the DGP (no pre-trained encoder). The attention logits are computed via a linear layer parameterized by $\mathbf{w}_{\text{attn}} \in \mathbb{R}^{D \times 1}$: $z_{si} = \mathbf{h}_{si}^\top \mathbf{w}_{\text{attn}}$. The aggregated sample-level representation $\mathbf{v}_s$ is computed using $\mathbf{v}_s=\sum_{i=1}^{n_s}\operatorname{Softmax}(\{z_{si}\}_{i=1}^{n_s})\mathbf{h}_{si}$. The final prediction is computed via a regression head $\mathbf{W}_{\text{reg}} \in \mathbb{R}^{D \times 1}$ and bias $b$: $\hat{y}_s = \mathbf{v}_s^\top \mathbf{w} + b$.

\paragraph{Optimization.}
We train the models using the Adam optimizer~\citep{kingma2014adam} with standard parameters ($\beta_1=0.9, \beta_2=0.999, \epsilon=10^{-8}$). The objective function is the joint loss $\mathcal{L}_{\text{total}} = \mathcal{L}_{\text{MSE}} + \lambda_{\text{PCC}} \mathcal{L}_{\text{PCC}}$ (or $\tilde{\mathcal{L}}_{\text{PCC}}$ in~\Cref{eq:disp_norm_pcc} for DNPL).

\paragraph{Hyperparameters.}
The default hyperparameters used in the synthetic experiments are summarized in~\Cref{tab:synth_hyperparams}.

\begin{table}[h]
\centering
\caption{Hyperparameters for Synthetic Experiments}
\label{tab:synth_hyperparams}
\begin{tabular}{@{}ll@{}}
\toprule
\textbf{Parameter} & \textbf{Value} \\ \midrule
\textit{DGP Settings} & \\
$D$ (Embedding dimension) & 16 \\
$K$ (Elements per sample) & 10 \\
$N_{\text{train}}$ / $N_{\text{val}}$ & 2000 / 300 \\
$\sigma_B$ & 1.0 \\
$\sigma_{\text{floor}}$ / $\sigma_{\text{label}}$  & 0.01 / 0.01 \\
\midrule
\textit{Training Settings} & \\
Optimizer & Adam \\
Learning Rate & 0.001-0.01 \\
Epochs & 1000 \\
$\lambda_{\text{PCC}}$ (PCC loss weight) & 0.3-0.8 \\
\midrule
\textit{ECA Settings} & \\
DATS $T_{\min}$ / $\beta$ & 0-0.8 / 0.5-3\\
SRA $\lambda_\gamma$ / $\gamma_{\max}$ & 0.001 / 2 \\
\bottomrule
\end{tabular}
\end{table}
\subsection{UCI ML Repository Datasets}
For each dataset, We report PCC, MSE and MAE between predictions \(\hat{y}_{i}\) and ground-truth targets \(y_{i}\) over all samples in test set is
\[
\mathrm{PCC}\left(\hat{y},y\right)=\frac{\operatorname{cov}\!\left(\hat{y},y\right)}{\sigma(\hat{y})\,\sigma(y)}.
\]

Mean squared error (MSE) and mean absolute error (MAE) are computed sample-wise and averaged over all gene–window pairs:
\[
\mathrm{MSE}\left(\hat{y},y\right)=\frac{1}{N}\sum_{i=1}^{N}(\hat{y}_i-y_i)^2,\qquad
\mathrm{MAE}\left(\hat{y},y\right)=\frac{1}{N}\sum_{i=1}^{N}\lvert \hat{y}_i-y_i\rvert .
\]

\paragraph{1. Appliance Dataset} The Appliances Energy Prediction dataset from the UCI Machine Learning Repository contains experimental data collected over 4.5 months in a low-energy building. It includes 19,735 instances with 28 real-valued features, including indoor temperature, humidity, and weather conditions. These attributes are recorded at 10-minute intervals. The goal is to predict appliance energy usage as an integer value in watt-hours (Wh).

We replace the attention layer in the FT-Transformer~\citep{gorishniy2021revisiting} with our ECA module and keep the remaining components unchanged. We use the following hyperparameters during training: “batch\_size=128”, “dropout\_rate=0.1”, ``embedding\_dim=256" and “num\_heads=8” for both the FT-Transformer baseline and the FT-Transformer+ECA model. We use MSE and MAE to evaluate magnitude matching and PCC to evaluate correlation matching.

\paragraph{2. Online News Dataset} The Online News Popularity dataset contains 39,797 news articles from Mashable, each described by 58 numeric features (with the URL and timedelta features excluded). Its target is the article’s number of social-media shares (popularity) and we applied the log transformation on the target to reduce the large range of data.

We replace the attention layer in the FT-Transformer~\citep{gorishniy2021revisiting} with our ECA module and keep the remaining components unchanged. We use the following hyperparameters during training: “batch\_size=128”, “dropout\_rate=0.1”, ``embedding\_dim=256" and “num\_heads=8” for both the FT-Transformer baseline and the FT-Transformer+ECA model. We use MSE and MAE to evaluate magnitude matching and PCC to evaluate correlation matching.

\paragraph{3. Superconductivity Dataset} 
The UCI Superconductivity dataset contains 21,263 superconductors with 81 real-valued features, where the target is the critical temperature.

We replace the attention layer in the FT-Transformer~\citep{gorishniy2021revisiting} with our ECA module and keep the remaining components unchanged. We use the following hyperparameters during training: “batch\_size=128”, “dropout\_rate=0.1”, ``embedding\_dim=256" and “num\_heads=8” for both the FT-Transformer baseline and the FT-Transformer+ECA model. We use MSE and MAE to evaluate magnitude matching and PCC to evaluate correlation matching.

\subsection{Spatial Transcriptomic Dataset} 

\paragraph{Data Processing.}
We follow the data processing as the EGN baseline~\citep{yang2023exemplar}. The 10xProteomic dataset contains \(\sim\)32{,}032 image–patch/expression pairs curated from 5 whole-slide images~\citep{tenx_datasets, yang2023exemplar}. Prediction targets are restricted to the 250 genes with the largest mean expression across the dataset (computed over all patches). For each target gene, we apply a log transform to expression values and then perform per-gene min–max normalization to map values into \([0,1]\). We use the dataset-provided image patches without modifying tiling or slide-level grouping.

\paragraph{Evaluation Metric.} We report PCC aggregated across genes and sample-wise error metrics. For each gene \(g\), PCC between predictions \(\hat{y}_{g}\) and ground truths \(y_{g}\) over all evaluation windows is
\[
\mathrm{PCC}(g)=\frac{\operatorname{cov}\!\left(\hat{y}_{g},y_{g}\right)}{\sigma(\hat{y}_{g})\,\sigma(y_{g})}.
\]
We summarize \(\{\mathrm{PCC}(g)\}\) across all target genes using:
\(\mathrm{PCC@F}\) (25th percentile), \(\mathrm{PCC@S}\) (median), and \(\mathrm{PCC@M}\) (mean). Higher is better.

Mean squared error (MSE) and mean absolute error (MAE) are computed sample-wise and averaged over all gene–window pairs:
\[
\mathrm{MSE}=\frac{1}{N}\sum_{i=1}^{N}(\hat{y}_i-y_i)^2,\qquad
\mathrm{MAE}=\frac{1}{N}\sum_{i=1}^{N}\lvert \hat{y}_i-y_i\rvert .
\]

\paragraph{Implementation.}
The EGN and ECA improved version are implemented in PyTorch. The visual backbone is a ViT with patch size 32, embedding dimension 1024, MLP dimension 4096, 16 attention heads, and depth 8. We interleave the proposed Exemplar Block (EB) every two ViT blocks. Each EB uses 16 heads with head dimension 64, retrieving the \(k{=}9\) nearest exemplars. Results are reported on a three-fold cross-validation with the same experimental setting in EGN. We train for 50 epochs with a batch size of 32. Experiments run on 2\(\times\) A6000 GPUs. We apply the same preprocessing and gene-selection protocol throughout and keep dataset-provided tiles and slide/patient groupings intact for fair comparison. 

\paragraph{Extra Results.}
~\Cref{app:st_pccs} and ~\Cref{app:st_pccf} provide extra visualization results on PCC@S and PCC@F metrics for each fold. 

\begin{figure}[!t]
  \centering
  \includegraphics[width=\linewidth]{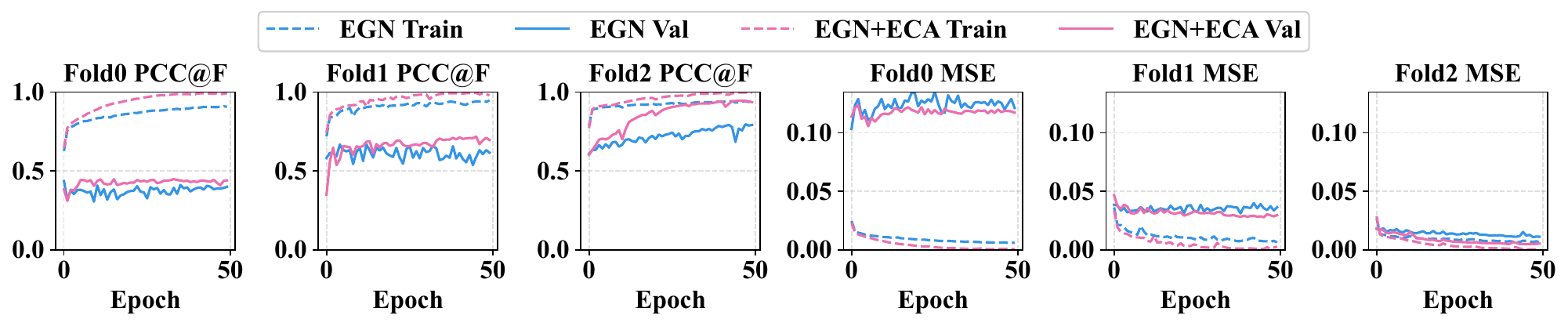}
  \caption{Per-fold PCC@F comparison of EGN and EGN+ECA on 10xProteomic dataset.}
  \label{app:st_pccs}
\end{figure}
\begin{figure}[!t]
  \centering
  \includegraphics[width=\linewidth]{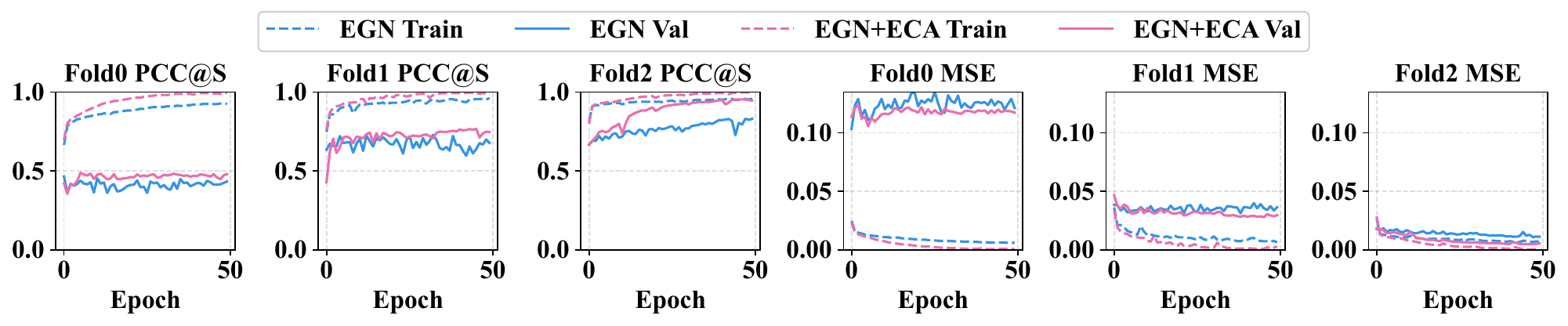}
  \caption{Per-fold PCC@S comparison of EGN and EGN+ECA on 10xProteomic dataset.}
  \label{app:st_pccf}
\end{figure}

\subsection{Multimodal Sentiment Analysis (MSA) Dataset}
\paragraph{Data Processing}
We conduct multimodal sentiment analysis (MSA) experiments on the CMU-MOSI dataset~\citep{zadeh2016multimodal}, a standard benchmark comprising \(\,2{,}199\,\) opinion-centric video segments. Each segment pairs a short monologue clip with synchronized \emph{language}, \emph{acoustic}, and \emph{visual} modalities and is annotated with a continuous sentiment intensity score on a seven-point scale from \(-3\) (strongly negative) to \(+3\) (strongly positive). Following the widely adopted split, the data are partitioned into \(\,1{,}284\,/\,229\,/\,686\,\) segments for train/validation/test. In line with community practice and THUIAR’s public releases~\citep{yu2020ch,yu2021learning}, we use pre-extracted, unaligned features where acoustic streams are sampled at \(12.5\) Hz and visual streams at \(15\) Hz; text features are aligned to the same timeline.

\paragraph{Task formulation and metrics.}
We evaluate models in the regression setting (predicting the continuous sentiment score) while also reporting the standard classification metric used in prior MSA work. Concretely, we report:
(i) MAE: mean absolute error between predicted and ground-truth sentiment;
(ii) PCC: Pearson’s correlation coefficient between predictions and ground truth, measuring rank/linear consistency; and
(iii) F1: the binary F1 score computed under the standard MOSI protocol from prior work and THUIAR’s toolkit~\citep{yu2020ch,yu2021learning}.
This suite jointly captures absolute deviation (MAE), correlation structure (PCC), and discrete decision quality (F1) and is consistent with the prevailing MOSI evaluation practice.

\paragraph{Baselines.}
To situate our method among representative approaches, we compare against widely cited MSA baselines that span tensor fusion, low-rank factorization, recurrent and deep feed-forward fusion, graph-based memory models, and cross-modal transformers. Below we list each baseline with its canonical citation (the superscripts indicate the source of numbers when we report them in tables: \(^{\dagger}\) from THUIAR’s GitHub/toolkit, \(^{*}\) from the original paper, and \(^{**}\) reproduced by us with released code/hyperparameters):
\begin{itemize}
  \item \textbf{TFN}\(^{\dagger}\)~\citep{zadeh2017tensor}:
  Tensor Fusion Network introducing explicit multimodal tensor interactions.
  \item \textbf{LMF}\(^{*}\)~\citep{liu2018efficient}:
  Low-Rank Multimodal Fusion reducing TFN’s cubic complexity via factorization.
  \item \textbf{EF-LSTM}\(^{\dagger}\)~\citep{williams2018recognizing}:
  Early-fusion recurrent model that concatenates modalities prior to sequence modeling.
  \item \textbf{LF-DNN}\(^{\dagger}\)~\citep{williams2018dnn}:
  Late-fusion deep network combining modality-specific predictors at the decision level.
  \item \textbf{Graph-MFN}\(^{\dagger}\)~\citep{zadeh2018multimodal}:
  Graph-structured Memory Fusion Network for cross-view temporal reasoning.
  \item \textbf{MulT}\(^{*}\)~\citep{tsai2019multimodal}:
  Cross-modal Transformer that performs directional attention across modalities.
  \item \textbf{MISA}\(^{\dagger}\)~\citep{hazarika2020misa}:
  Modality-Invariant/Spe\-cific rep\-resent\-ations with contrastive objectives to disentangle factors.
  \item \textbf{ICCN}\(^{*}\)~\citep{sun2020learning}:
  Cross-modal correlation networks leveraging canonical correlation constraints.
  \item \textbf{DLF}\(^{**}\)~\citep{wang2025dlf}:
  A recent strong multimodal baseline emphasizing deep latent fusion.
  \item \textbf{ALMT}\(^{**}\)~\citep{zhang2023learning}:
  Attention-based latent model trained with MSE; we use ALMT as the host architecture for our ECA augmentation.
\end{itemize}

\paragraph{Implementation.}
Unless otherwise noted, we follow the standard MOSI preprocessing and the train/validation/test partition described above. For baselines, we adhere to the configuration choices recommended by the original authors or by THUIAR’s toolkit~\citep{yu2020ch,yu2021learning}. Our primary report includes \(\text{F1}\), \(\text{PCC}\), and \(\text{MAE}\) on the held-out test set, with validation used solely for early stopping and hyperparameter selection. The model training is with 2 A6000 GPUs. 

We adopt ALMT as the base architecture and keep the optimizer and data pipeline fixed across all runs to ensure a clean ablation. We train for 100 epochs with a batch size of 64, using AdamW with lr=1e-4 and batching and loading use num\_workers=8. Other hyperparameters are shown in~\Cref{tab:mosi-config}.

ALMT, in its original form, is trained with MSE loss only. To probe correlation-aware training and our correlation-aware aggregation, we evaluate two controlled settings (all other hyperparameters are held constant):

\begin{enumerate}
  \item \textbf{ALMT + PCC loss (no architectural change).}
  We augment the training objective with a differentiable PCC term, yielding a weighted sum
  \[
    \mathcal{L} \;=\; \lambda_{\text{MSE}}\cdot \mathrm{MSE} \;+\; \lambda_{\text{PCC}}\cdot \mathcal{L}_{\text{PCC}},
  \]
  where \(\lambda_{\text{MSE}}{=}1.0\) and \(\lambda_{\text{PCC}}{=}1.0\).
  This setting isolates the effect of explicitly encouraging high Pearson correlation during training while leaving the model architecture unchanged.

  \item \textbf{ALMT + PCC loss + ECA (ours).}
  In addition to the above loss, we adopt our proposed ECA method in the vision stream by modifying the ViT-style token aggregation step.

\end{enumerate}

\begin{table}[t]
\centering
\footnotesize
\setlength{\tabcolsep}{6pt}
\caption{Configuration for MOSI experiments (ALMT backbone).}
\begin{tabular}{l l}
\hline
\textbf{Param} & \textbf{Value} \\
\hline
Optimizer / weight decay / lr & AdamW / $1\times10^{-4}$ / $1\times10^{-4}$ \\
Batch size / epochs & 64 / 100 \\
Text / audio / vision dims & 768 / 5 / 20 \\
Projection dim (all modalities) & 128 \\
Token count / token dim & 8 / 128 \\
Seq. lengths (l/a/v) & 50 / 375 / 500 \\
Projection transformer (depth/heads/MLP) & 1 / 8 / 128 \\
Text encoder (heads/MLP) & 8 / 128 \\
Hyper-layer (depth/heads/head-dim/dropout) & 3 / 8 / 16 / 0 \\
Loss weights (MSE / PCC) & 1.0 / 1.0 \\
ECA module & $\checkmark$ in Setting 2 \\
\hline
\end{tabular}
\label{tab:mosi-config}
\end{table}

\paragraph{Extra Results Analysis.} By comparing Setting~1 to the MSE-only ALMT, we measure the gain from correlation-aware training alone. Comparing Setting~2 to Setting~1 isolates the benefit of ECA’s vision-token aggregation. As summarized in Table~\ref{tab:mosi-f1-mae-pcc}, adding PCC improves correlation-oriented metrics without harming classification quality, and coupling PCC with our ECA aggregation \emph{further} improves all three metrics (F1, PCC, MAE). These results indicate that ECA enhances the multimodal pipeline beyond loss-level changes, demonstrating tangible benefits for \emph{multimodality} on MOSI.

\section{Batch Size Effect on PCC Plateau}
\label{app:batch_size_ablation}

A potential concern regarding the joint optimization of MSE and PCC is whether the batch size $S$ influences the optimization dynamics, specifically the onset of the PCC plateau. Theoretically, as discussed in \Cref{sec:gradient_analysis} and the~\Cref{eq:grad_ratio_bound}, both the MSE and PCC gradient magnitudes scale proportionally with $1/S$. Consequently, the ratio between the two gradients that determines the effective optimization direction is invariant to the batch size. This suggests that the conflict causing the plateau is fundamental to the gradient properties (specifically the $1/\sigma_{\hat{y}}$ attenuation) and the homogeneous PCC bound, rather than batch statistics.

To empirically validate this independence, we conducted an ablation study using the synthetic dataset described in \Cref{sec:experimental_setup}, training the standard attention regression model with varying batch sizes $S \in \{32, 64, 128\}$.

\begin{center}
\label{fig:batch_size_ablation}
\captionof{figure}{MSE and PCC curve under various batch size settings.}
\begin{subfigure}{0.9\textwidth}
    \centering
    \includegraphics[width=\linewidth]{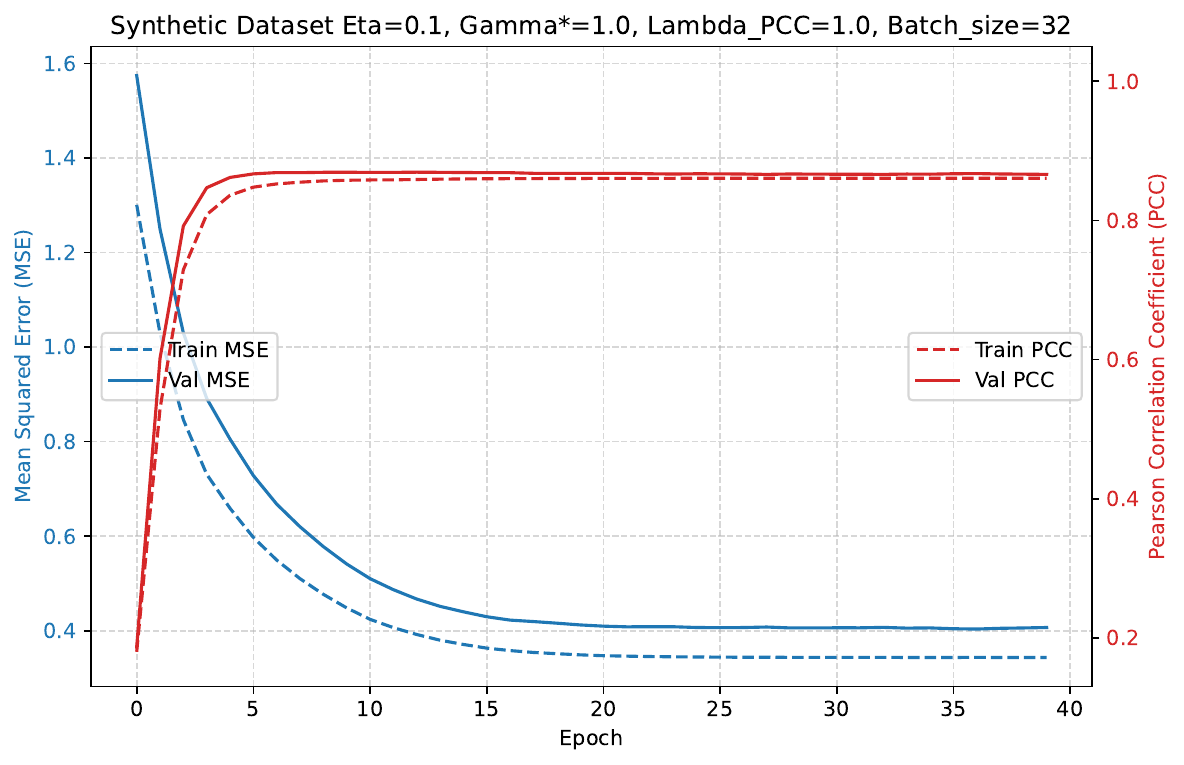}
    \label{fig:uci-8:a}
\end{subfigure}

\begin{subfigure}{0.9\textwidth}
    \centering
    \includegraphics[width=\linewidth]{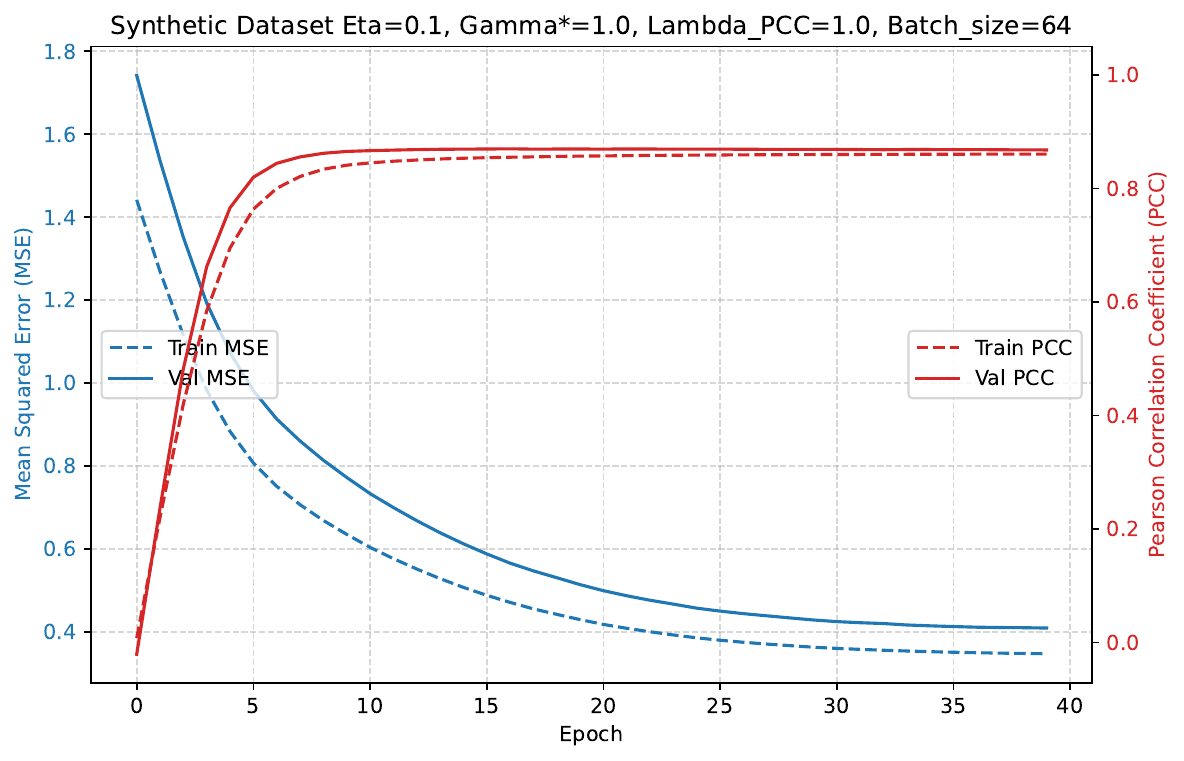}
    \label{fig:uci-8:b}
\end{subfigure}
\begin{subfigure}{0.9\textwidth}
    \centering
    \includegraphics[width=\linewidth]{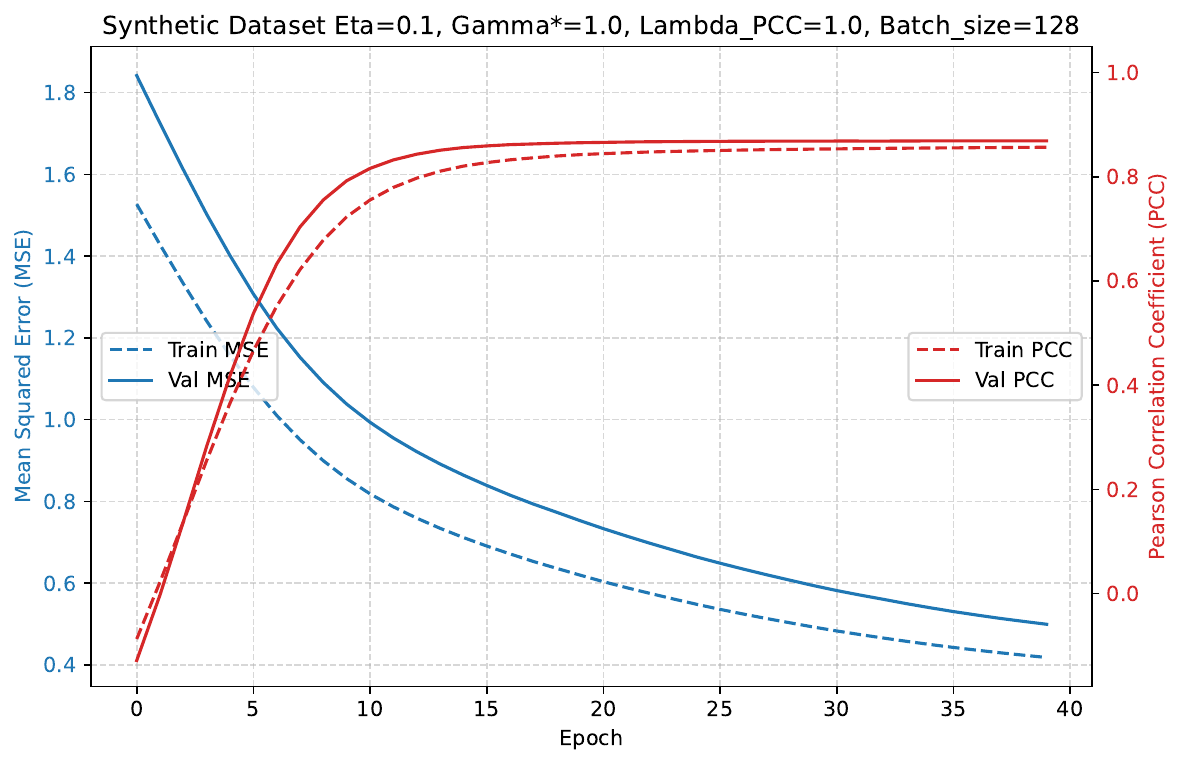}
    \label{fig:uci-8:c}
\end{subfigure}
\end{center}

\Cref{fig:batch_size_ablation} presents the learning curves for these settings. The results align with our theoretical analysis:
\begin{itemize}
    \item \textbf{Consistent Plateau Behavior:} Across all batch sizes, we observe the characteristic decoupling of the metrics. The PCC curves (red) flatten and plateau significantly earlier than the MSE curves (blue), which continue to decrease steadily throughout training.
    \item \textbf{Generalization Gap:} Both training (dashed) and validation (solid) curves exhibit similar trends, indicating that this phenomenon is an optimization issue rather than a generalization issue.
\end{itemize}

These results confirm that increasing or decreasing the batch size does not resolve the PCC plateau, further validating the necessity of the architectural interventions proposed in our ECA to explicitly counteract gradient attenuation.

\section{Disclosure of LLM Usage}
A large language model (LLM) was used only for language refinement (grammar, phrasing, and clarity) in the main paper and the appendix. The LLM did not contribute to the research ideas, mathematical derivations, theorems, proofs, or experimental design. All technical content was independently produced and reviewed by the authors.

\end{document}